\documentclass[11pt]{article}
 
\usepackage[top=1in,bottom=1in,left=1in,right=1in]{geometry}
\usepackage{natbib}

\usepackage{setspace}

\usepackage{tablefootnote}
\usepackage{algorithm}
\usepackage{algorithmicx}
\usepackage{amsmath,amssymb,amsfonts}
\usepackage{graphicx,color}
\usepackage{url}
\usepackage{subfiles}
\usepackage{booktabs}
\usepackage{mathtools}
\usepackage{multirow}
\usepackage{microtype}
\usepackage[english]{babel}
\usepackage{ulem}
\usepackage[toc,page,title,titletoc]{appendix}

\usepackage[dvipsnames]{xcolor}
\usepackage{tikz, pgfplots, pgfplotstable}
 \usepgfplotslibrary{fillbetween}

\usepackage{hyperref}       
\hypersetup{
  colorlinks=true,
  linkcolor=black,
  urlcolor=black,
  citecolor=black
}
\usepackage[noend]{algpseudocode}

\usepackage{yhmath}
\usepackage[figurename=Fig.,font={small,stretch=0.84}]{caption}

\newtheorem{theorem}{Theorem}[section]    
\newtheorem{lemma}[theorem]{Lemma}         
\newtheorem{corollary}[theorem]{Corollary}
\newtheorem{proposition}[theorem]{Proposition}
\newtheorem{definition}[theorem]{Definition}

\newtheorem{assumption}[theorem]{Assumption}
\newtheorem{remark}[theorem]{Remark}

\usepackage{cleveref}
\crefname{section}{Section}{Sections}
\crefname{figure}{Figure}{Figures}
\crefname{theorem}{Theorem}{Theorems}
\crefname{lemma}{Lemma}{Lemmas}
\crefname{remark}{Remark}{Remarks}
\crefname{appendix}{Appendix}{Appendices}
\crefname{proposition}{Proposition}{Propositions}
\crefname{equation}{Equation}{Equations}
\crefname{algorithm}{Algorithm}{Algorithms}
\crefname{table}{Table}{Tables}

\usetikzlibrary{arrows.meta, calc, topaths, positioning, automata}
\pgfplotsset{compat=1.16}
\def\addlegendimage{\csname pgfplots@addlegendimage\endcsname}
\def\BibTeX{{\rm B\kern-.05em{\sc i\kern-.025em b}\kern-.08em
    T\kern-.1667em\lower.7ex\hbox{E}\kern-.125emX}}

\newcommand{\Reg}[0]{\mathrm{Reg}}

\newenvironment{myproof}{ {\noindent\it Proof.\ }}{\hfill $\square$\par}

\usepackage[dvipsnames]{xcolor}
\usepackage{tikz, pgfplots, pgfplotstable}
\usetikzlibrary{
  arrows.meta,positioning,shapes.geometric,fit,backgrounds,calc,patterns
}
\usepackage{hyperref}       
\usepackage{algorithm}
\usepackage{algorithmicx}
\makeatletter
\@ifundefined{theHALG@line}
  {\newcommand{\theHALG@line}{\thealgorithm.\arabic{ALG@line}}}
  {\renewcommand{\theHALG@line}{\thealgorithm.\arabic{ALG@line}}}
\makeatother
\usepackage{amsmath,amssymb,amsfonts}
\usepackage{graphicx,color}
\usepackage{url}
\usepackage{booktabs}
\usepackage{mathtools}
\usepackage{microtype}
\usepackage[english]{babel}
\usepackage{endnotes}
\usepackage{bm}

\title{
  Minimax-Optimal Semiparametric Contextual Dynamic Pricing with Multimodal Revenue
}

\usepackage{authblk}

\newif\ifanonymous
\ifdefined\ORANONYMOUS
  \anonymoustrue
\else
  \anonymousfalse
\fi

\ifanonymous
\author{Anonymous Authors}
\else
\author[1]{Xueping Gong}
\author[2]{Zhuoluo Zhang}
\author[3]{Zhaowei Miao}
\author[4]{Jiheng Zhang}

\affil[1]{School of Management, Xiamen University, \url{xgongah@xmu.edu.cn}}
\affil[2]{School of Management, Xiamen University, \url{zhangzhuoluo@xmu.edu.cn}}
\affil[3]{School of Management, Xiamen University, \url{miaozhaowei@xmu.edu.cn}}
\affil[4]{Department of Industrial Engineering and Decision Analytics, The Hong Kong University of Science and Technology, \url{jiheng@ust.hk}}

\fi

\date{}

\hypersetup{
  pdftitle={Minimax-Optimal Semiparametric Contextual Dynamic Pricing with Multimodal Revenue},
  pdfsubject={Contextual dynamic pricing with semiparametric demand and general revenue geometry},
  pdfkeywords={contextual dynamic pricing, semiparametric demand, bounded quantity feedback, shape-free revenue, minimax regret}
}
\ifanonymous
\hypersetup{pdfauthor={}}
\else
\hypersetup{
  pdfauthor={Xueping Gong, Zhuoluo Zhang, Jiheng Zhang, and Zhaowei Miao},
}
\fi

\begin{document}
\maketitle

\begin{abstract}
  We study contextual dynamic pricing with arbitrary covariate sequences and
  bounded, possibly nonbinary purchase quantities. Demand follows a
  semiparametric surplus-index model with an unknown linear valuation parameter
  and an unknown H\"older-smooth response. We impose neither concavity nor
  strong unimodality on revenue and allow nonunique optimal prices. We develop
  a pilot-corrected layered decision-partitioning policy that combines
  directional pilot estimation, local polynomial learning, predictable data
  assignment, and global action elimination. Pilot correction removes the
  first-order effect of valuation-parameter error, while permanent labels
  enable concentration under adaptive sampling. The policy attains the minimax
  smoothness-dependent horizon rate up to logarithmic factors; a matching lower
  bound already holds for a constant-context binary-demand subclass. 
\end{abstract}

\medskip
\noindent\textbf{Keywords:} contextual dynamic pricing; semiparametric demand;
bounded quantity feedback; shape-free revenue; minimax regret.
\par

\section{Introduction}
\label{sec:introduction}

Dynamic pricing is a central problem in revenue management. A seller must learn
how demand responds to prices while simultaneously using the accumulated
information to generate revenue. This learning problem becomes more challenging
when customer, product, or market heterogeneity is observed through covariates.
Although such information enables personalized pricing, every posted price also
determines what the seller subsequently learns. This feedback creates the
exploration--exploitation trade-off at the heart of contextual dynamic pricing;
see \citet{DP_review} for a broad review and \citet{DP_application} for
applications.

A widely studied framework models a customer's latent valuation as a
finite-dimensional function of the observed covariates plus an additive market
shock. In semiparametric formulations, the contextual component is typically
linear, whereas the distribution of the market shock is left unspecified
\citep{DP_parametricF,Explore_UCB,d_free_DP,DP_Fm}. This framework combines an
interpretable representation of customer heterogeneity with a flexible demand
curve. Binary purchase feedback dominates this single-index literature.
Shape-free regret guarantees are available for Lipschitz links, whereas the
higher-order-smooth literature typically couples smoothness with distributional
coverage and/or stable revenue geometry.

At Lipschitz smoothness, \citet{improvedCDP} allow arbitrary adaptive contexts
and \citet{gong2025minimax} allow general valuation classes under independent
and identically distributed contexts; both obtain the optimal shape-free
two-thirds horizon exponent with binary feedback. For smoother links, recent
improvements rely on nondegenerate or feature-diverse stochastic
contexts together with stable revenue geometry
\citep{CDP_smooth,CDP_feature_diversity}, or on a smooth oracle-price map
induced by strong unimodality \citep{CDP_oracle_price_map}.

A particularly consequential restriction is strong unimodality
\citep{dp_tight_2,CDP_smooth,CDP_oracle_price_map}. In the form used by recent
smooth-pricing analyses, this condition requires every contextual revenue
function to have a unique interior maximizer and its revenue loss to be
uniformly comparable to the squared distance from that maximizer. It is
therefore considerably stronger than smoothness or a local second-order
condition. Strong unimodality rules out separated revenue modes, flat
optimal-price regions, and many asymmetric or boundary-optimal revenue
landscapes. At the same time, it provides several powerful analytical
advantages: the policy can safely localize around one estimated optimizer,
price-estimation errors translate into quadratic revenue losses, and the
context-dependent optimal price can often be represented by a regular oracle
price map. Indeed, these properties are central to the algorithms and improved
regret guarantees in \citet{dp_tight_2}, \citet{CDP_smooth}, and
\citet{CDP_oracle_price_map}.

This paper studies contextual pricing without these structural conveniences.
In each period, the seller observes a covariate vector, posts a price, and
observes a bounded purchase quantity, which may be binary, discrete, or
continuous. Latent valuation is linear in the observed covariates and subject
to an unknown market shock. After integrating out the shock, expected demand
is an unknown function of the difference between price and a linear contextual
valuation index; both the index parameter and the response function are
unknown. The
usual binary-purchase model is obtained as a special case, but the formulation
also accommodates bounded unit sales and purchase volume. We allow the
covariates to arrive arbitrarily and impose only H\"older smoothness on the
observable response function. The induced revenue function may consequently be
multimodal, may contain a flat optimal region, and need not have a unique
maximizer. The smoothness assumption also has natural primitive foundations:
it follows, for example, when the market-shock distribution is smooth and the
quantity-response function has bounded variation, or when the
quantity-response function is itself sufficiently smooth.

Removing strong unimodality fundamentally changes the learning problem.
H\"older smoothness controls only local approximation and provides no
information about the global geometry of revenue. Learning accurately near a
provisional optimizer is therefore insufficient: another, statistically
similar region may contain a better and well-separated revenue mode. A policy
that localizes prematurely may never collect enough information to discover
that region. Consequently, the algorithm must preserve and compare candidates
across the entire price domain rather than organize its exploration around a
single estimated optimum or oracle price map.

A second difficulty arises from the interaction between the unknown valuation
parameter and the nonparametric response function. Directly inserting a pilot
parameter estimate into a nonparametric regression transfers the pilot error
to the fitted response curve at first order. This error can dominate the
higher-order approximation accuracy that smoothness would otherwise provide.
A natural remedy, used in classical single-index estimation and recent
semiparametric pricing analyses, is to profile a nonparametric fit over
candidate index parameters and refine the parameter through constrained least
squares
\citep{hardle1993optimal,ichimura1993sls,horowitz1996direct,
dp_tight_2,CDP_smooth}. This remedy is statistically and computationally
demanding. Because the local fits and their underlying sample assignments vary
with the candidate parameter, the resulting objective is generally nonconvex,
even in the twice-smooth case. Standard local optimization methods thus do not
provide a global-solution guarantee. Moreover, the same observations are used
both to construct the nonparametric fit and to evaluate the least-squares
criterion, creating a complex dependence structure in online settings and
making finite-sample confidence analysis delicate
\citep{CDP_smooth}. These issues motivate an approach that exploits
higher-order smoothness without repeatedly solving a nonconvex joint-estimation
problem.

Adaptive data collection introduces a third challenge. Posted prices, residual
bins, and subsequent exploration decisions all depend on previous outcomes.
If past observations are reassigned whenever the valuation estimate changes,
membership in a local dataset is determined retrospectively using information
that was unavailable when the observation was collected. This destroys the
predictable sampling structure needed for martingale concentration and makes
standard offline nonparametric guarantees inapplicable.

We address these challenges through a pilot-corrected layered
decision-partitioning (LDP) policy. The pilot module uses uniform-price
exploration to estimate the linear valuation component and evaluates its
uncertainty only along the currently observed covariate direction. A
well-covered direction enters the main pricing stage immediately, whereas an
insufficiently covered direction triggers additional pilot exploration. This
directional mechanism avoids distributional assumptions on the covariate
sequence.

During the main pricing stage, prices are represented relative to the estimated
valuation index, and demand is learned locally in the resulting residual
coordinate. An augmented local-polynomial representation absorbs the leading
pilot perturbation into the regression coefficients, leaving only a
second-order pilot error in addition to the usual nonparametric approximation
error. Thus, the policy benefits from higher-order smoothness without solving
the nonconvex constrained least-squares problems used in alternative joint
estimation procedures. Each observation is also assigned a permanent
layer--bin label before its demand is observed. These labels are never
recomputed after the pilot estimate changes, preserving predictable sampling
and enabling self-normalized concentration within every local dataset.

Finally, the layered policy maintains candidate actions over the entire
residual domain. An action is eliminated only when its optimistic revenue is
statistically separated from the best surviving benchmark. This design
protects distant modes and flat near-optimal regions and therefore performs
global rather than local revenue learning.

Balancing pilot exploration, local approximation, and statistical uncertainty
yields the minimax smoothness-dependent horizon rate, up to logarithmic
factors, for fixed problem primitives. We establish a matching expected-regret
lower bound over an admissible constant-context, binary-demand subclass. The
construction begins with a smooth instance whose revenue is flat over a
nondegenerate interval and introduces statistically indistinguishable
perturbations in separated price regions. 
Together, the upper and lower bounds characterize the minimax
dependence on the horizon for the general revenue class considered here.

\section{Contributions and Related Literature}
\label{sec:contributions}

\paragraph{Contributions.}
Our contributions are twofold.

\begin{itemize}

\item \textbf{A shape-free model and higher-order online method.}
We combine arbitrary context sequences, general H\"older smoothness, and
bounded quantity feedback in a translated-residual single-index model without
concavity, strong unimodality, or a unique optimal price. Our pilot-corrected
LDP policy integrates directional exploration, higher-order local polynomials,
predictable permanent labels, and global elimination. Relative to the
Lipschitz LDP framework of \citet{gong2025minimax}, this replaces an episodic
offline pilot and piecewise-constant learning with an online directional pilot,
first-order error correction, and higher-order residual learning.

\item \textbf{Minimax-optimal regret for shape-free smooth links.}
For fixed dimension and fixed problem primitives, the proposed policy achieves
$
    \widetilde{\mathcal O}\!\left(
        T^{\frac{\beta+1}{2\beta+1}}
    \right)
$
regret, matching
\(
    \Omega\!\left(T^{\frac{\beta+1}{2\beta+1}}\right)
\)
lower bound. 
Together, the upper and lower bounds characterize the minimax horizon dependence for the general revenue class studied in this paper.

\end{itemize}

\subsection{Related Literature}
\label{subsec:related-literature}

\paragraph{Semiparametric contextual dynamic pricing.}
Contextual pricing with a linear valuation component and an additive shock has
been studied under known parametric noise distributions
\citep{DP_parametricF,PersonalizedDP}, unknown nonparametric noise
\citep{Explore_UCB,d_free_DP,DP_Fm,DP_cox,LP_LV,CDP_feature_diversity}, and more general valuation or
utility classes \citep{DP_generalV,gong2025minimax}. These works differ in the
feedback available, the distributional assumptions on contexts, and the geometry
imposed on revenue. The papers most closely related to ours are summarized in
Table~\ref{tab:closest-contextual-pricing}. The rates in the last column suppress
logarithmic factors and polynomial dependence on fixed problem primitives.

\begin{table}[!htb]
    \centering
    \caption{Representative results for smooth semiparametric contextual
    pricing. Here \(\beta\) denotes link or tail smoothness; rates suppress
    logarithmic factors and dependence on fixed problem primitives.}
    \label{tab:closest-contextual-pricing}
    \resizebox{\textwidth}{!}{%
    \begin{tabular}{@{}llllll@{}}
        \toprule
        Work
        & Feedback
        & Contexts
        & Smoothness
        & Revenue geometry
        & Regret \\
        \midrule
        \citet{improvedCDP}
        & Binary
        & Arbitrary
        & \(\beta=1\)
        & General
        & \(\widetilde {\mathcal{O}}(T^{2/3})\) \\
        \citet{gong2025minimax}
        & Binary
        & i.i.d.
        & \(\beta=1\)
        & General
        & \(\widetilde {\mathcal{O}}(T^{2/3})\) \\
        \citet{dp_tight_2}
        & Binary
        & i.i.d.
        & \(\beta = 2\)
        & Strongly unimodal
        & \(\widetilde {\mathcal{O}}(T^{3/5})\) \\
        \citet{CDP_smooth}
        & Binary
        & i.i.d.
        & \(\beta\ge 2\)
        & Strongly unimodal
        & \(\widetilde {\mathcal{O}}(T^{\frac{\beta+1}{2\beta+1}})\) \\
        \citet{CDP_oracle_price_map}
        & Binary
        & Arbitrary
        & \(\beta\ge 2\)
        & Strongly unimodal
        & \(\widetilde {\mathcal{O}}(T^{\frac{2\beta-1}{4\beta-3}}+\sqrt T)\) \\
        This paper
        & Bounded quantity
        & Arbitrary
        & \(\beta\ge 1\)
        & General
        & \(\widetilde {\mathcal{O}}(T^{\frac{\beta+1}{2\beta+1}})\) \\
        \bottomrule
    \end{tabular}
    }
\end{table}

The principal comparison at \(\beta=1\) is \citet{improvedCDP}: it already
allows arbitrary adaptive contexts and general revenue geometry, and attains
the same \(T^{2/3}\) horizon exponent, but is restricted to binary feedback and
Lipschitz smoothness. \citet{gong2025minimax} introduce the LDP architecture for
a shape-free Lipschitz problem with binary feedback and i.i.d.\ contexts. Thus,
our claim at \(\beta=1\) is not a better horizon exponent. The contribution is
to retain arbitrary contexts and shape-free revenue while covering general
\(\beta\) and bounded quantity feedback.

For twice-smooth demand, \citet{dp_tight_2} establish the sharp
\(\widetilde{\mathcal O}(T^{3/5})\) regret rate under strong unimodality
and nondegenerate i.i.d.\ contexts, using contextual successive elimination
together with semiparametric estimation. \citet{CDP_smooth} extend this line
to general H\"older smoothness by combining local-polynomial estimation with
the stationary learning subroutine of \citet{dp_tight_2}, while retaining
similar revenue-geometry and context-coverage conditions.
\citet{CDP_oracle_price_map} also impose strong unimodality but allow
arbitrary contexts; by directly learning the resulting smooth oracle-price
map, they obtain a faster horizon exponent when \(\beta>2\). Our result
addresses the complementary regime in which higher-order smoothness is
available but the revenue geometry remains unrestricted. The flat-optimum
construction underlying our lower bound shows that this distinction is
structural: without strong unimodality, the hard instances need not admit a
unique and stable oracle-price map on which localization can be based.

\citet{CDP_non_lipschitz} study an adjacent, nonnested regime: binary feedback
with possibly discontinuous demand, including jumps induced by atoms, under
stochastic well-conditioned contexts. Their model relaxes link regularity,
whereas ours uses H\"older smoothness to handle arbitrary contexts and bounded
quantity feedback.

Algorithmically, our closest antecedent is the policy of
\citet{gong2025minimax}. Relative to its episodic offline pilot and
piecewise-constant residual learning, our policy uses an online directional
pilot, higher-order local polynomials, an augmented correction that makes pilot
error second order, and permanent residual labels that preserve predictable
sampling as the pilot evolves. Uniform-price exploration in
\citet{DP_parametricF,DP_Fm,DP_generalV} and the arbitrary-context mechanisms in
\citet{improvedCDP} provide additional points of contact.

\paragraph{Nonparametric and multimodal pricing.}
Without covariates, nonparametric dynamic pricing has been studied under a range
of smoothness and shape conditions
\citep{besbes2009dynamic,non_DP_linear,Multimodal_DP}. In particular,
\citet{Multimodal_DP} establish the minimax rate
\(\widetilde {\mathcal{O}}(T^{(\beta+1)/(2\beta+1)})\) for smooth multimodal revenue using
local-polynomial optimism. Their confidence indices explicitly incorporate a
local approximation envelope, and thus require the corresponding smoothness
radius. 

Bounded quantity observations are not new by themselves.
\citet{Multimodal_DP} analyze general demand feedback without contexts, while
\citet{PlinearDP,DP_separable} study partially linear and separable contextual
demand models. The present model instead places the unknown context effect
inside the translated residual \(p-\bm x^\top\bm\theta_\star\). Consequently,
the valuation shift and the nonparametric link must be learned jointly under an
arbitrary context sequence. This translated single-index structure, rather
than nonbinary feedback alone, is the relevant distinction.

\paragraph{Layered contextual bandits.}
Our analysis draws on standard confidence-based methods for contextual bandits
\citep{supLinUCB,linContextual,improvedUCB,banditBook}. Classical layered
partitioning separates observations by precision level to obtain sharper
confidence control. In the present problem, however, the local regression model
itself depends on an evolving pilot index and is only approximately specified.
The permanent labeling rule is therefore essential: it fixes the residual
coordinate, local bin, and stopping layer used for each observation before the
outcome is revealed. This converts the adaptively collected local samples into
predictable martingale arrays while retaining global price exploration.

\paragraph{Organization.}
Section~\ref{sec:setting} introduces the model, the induced demand link, and its
regularity. Section~\ref{sec:algorithm} presents the directional pilot,
pilot-corrected local regression, and layered pricing policy.
Section~\ref{sec:regret} establishes the regret upper bound and the matching
minimax lower bound. The paper
concludes with directions for future research.

\section{Problem Formulation}
\label{sec:setting}

\paragraph{Notation.}
Throughout the paper, we write \([n]:=\{1,\ldots,n\}\) for any positive integer \(n\),
denote the cardinality of a set \(A\) by \(|A|\),
and use \(\mathbf{1}\{E\}\) for the indicator of an event \(E\).
For vectors, \(\|\cdot\|_p\) denotes the standard \(\ell_p\) norm for \(1\le p\le\infty\).
The notation \(\widetilde{\mathcal O}\) suppresses absolute constants and logarithmic factors.
For a positive definite matrix \(A\), define \(\|\bm z\|_A:=\sqrt{\bm z^\top A\bm z}\).
For an interval \(I=[a,b]\), let \(\Pi_I(z):=\min\{b,\max\{a,z\}\}\) denote projection onto \(I\).

\paragraph{Basic Model.}
We consider a contextual dynamic pricing problem over a finite horizon \(T\). In each period \(t\in[T]\), a customer arrives and the seller observes a covariate vector \(\bm{x}_t\in\mathcal{X}\subseteq\mathbb{R}^d\) that characterizes the observable customer, product, or market characteristics.   
Unlike standard i.i.d. assumptions,
we do not impose any distributional assumption on the covariate sequence \(\{\bm{x}_t\}_{t=1}^T\); it can be arbitrary and possibly dependent.  
The covariate space is bounded, i.e., there
exists \(C_x<\infty\) such that
\(\sup_{\bm x\in\mathcal X}\|\bm x\|_2\le C_x\). For each \(t\in[T]\), let \(\mathcal F_{t-1}\) denote the
\(\sigma\)-field generated by all observations and policy randomization
available up to the end of period \(t-1\). Let \(U_t\) collect all
within-period randomization used by the policy after observing \(\bm x_t\).
Conditional on \(\mathcal F_{t-1}\vee\sigma(\bm x_t)\), the seed \(U_t\) is
drawn independently of the current valuation shock.

The customer's valuation for the product is modeled as a linear function of the covariates plus random noise:
\[
v_t = \bm x^\top_t\bm\theta_\star+ \epsilon_t,
\]
where \(\bm{\theta}_\star\in\Theta=\{\bm{\theta}\in\mathbb R^d:\|\bm{\theta}\|_2\le C_\theta\}\) is an unknown parameter vector, and \(\epsilon_t\) is an unobserved random shock.  
The shocks \(\{\epsilon_t\}_{t=1}^T\) are independent and identically
distributed according to an unknown cumulative distribution function
\(F\) with zero mean. For every \(t\in[T]\), \(\epsilon_t\) is independent
of \(\mathcal F_{t-1}\vee\sigma(\bm x_t,U_t)\).

\begin{assumption}[Bounded Valuations]
\label{ass:bounded}
There exist \(B>0\) and \(B_\epsilon\in(0,B/2)\) such that:
\begin{itemize}
    \item The noise is uniformly bounded: \(|\epsilon_t| \le B_\epsilon\) almost surely.
    \item The linear valuation satisfies \(\bm x^\top\bm\theta_\star \in [B_\epsilon, B - B_\epsilon]\) for all \(\bm{x}\in\mathcal{X}\).
\end{itemize}
\end{assumption}

Assumption~\ref{ass:bounded} guarantees that \(v_t=\bm x^\top_t\bm\theta_\star+\epsilon_t\in[0,B]\) almost surely, so the realized valuation and any feasible posted price lie in the same known interval \([0,B]\). 
This is a standard and natural requirement in practical pricing applications.

After observing \(\bm{x}_t\), the seller posts a per-unit price
\(p_t\in[0,B]\). The price and every auxiliary action label constructed by
the policy before demand is observed are measurable with respect to the full
pre-demand information field
$
    \mathcal G_t
    :=
    \mathcal F_{t-1}\vee\sigma(\bm x_t,U_t).
$
If the realized valuation satisfies \(v_t \ge p_t\), a sale occurs and the seller observes a positive purchase quantity \(y_t>0\); otherwise, no sale occurs and \(y_t=0\), i.e.,
\(\mathbf 1\{y_t>0\}=\mathbf 1\{v_t\ge p_t\}\)
almost surely.
The observed quantity \(y_t\) may be discrete (e.g., number of units) or
continuous (e.g., physical volume or usage), depending on the application.
We assume there exists a constant \(0<D<\infty\) such that \(0\le y_t\le D\) almost surely. 
The realized revenue at time \(t\) is \(p_t y_t\), and each round yields the observation triple \((\bm{x}_t,p_t,y_t)\). We set
\(\mathcal F_t:=\mathcal G_t\vee\sigma(y_t)\), so \(\mathcal G_t\) and
\(\mathcal F_t\) describe, respectively, all information immediately before
and after current demand is observed.

\paragraph{Generalized Single Index Model.}
The basic model above only specifies the qualitative relationship between $y_t$ and the surplus $v_t - p_t$. 
However, to enable statistical learning and regret minimization, the seller must quantify how the expected demand varies with the posted price and covariates. 
To make the problem tractable while retaining flexibility, we adopt a surplus-dependent single-index structure. 
This formulation generalizes the widely studied binary purchase model \citep{CDP_oracle_price_map,CDP_smooth,d_free_DP,DP_Fm,gong2025minimax} by allowing the realized demand quantity to depend on the surplus through a function $q(\cdot)$, rather than restricting to a binary sale indicator.

Specifically, we posit that, conditional on the information available
before demand is observed and on the realized valuation, the expected
demand depends on the current price and covariates only through the net
surplus \(v_t-p_t\).
This single-index form is natural: it implies that a customer's purchasing quantity decision is driven by the perceived value relative to the price, 
rather than by their absolute levels
separately. 
Under this premise, there exists an unknown baseline function $q(\cdot)$ that maps the surplus to the expected quantity. 
Importantly, we impose no parametric form, such as a linear or logistic specification, on \(q\), allowing it to accommodate richly heterogeneous and potentially nonmonotone demand responses.
The formal statement is as follows.

\begin{assumption}[Surplus-dependent demand response]
\label{ass:quantity-response}
There exists an unknown measurable function \(q:\mathbb R\to[0,D]\) with
\(q(z)=0\) for \(z<0\) or \(z>B\) and \(q(z)>0\) for
\(z\in[0,B]\), such that, for every \(t\in[T]\),
\[
    \mathbb E\!\left[
        y_t
        \,\middle|\,
        \mathcal G_t
        \vee
        \sigma(v_t)
    \right]
    =
    q(v_t-p_t)
    \quad
    \text{almost surely}.
\]
\end{assumption}

Assumption~\ref{ass:quantity-response} formalizes the surplus-dependent single-index structure introduced above. 
It posits that, conditional on all pre-demand information and the realized
valuation, the conditional mean demand is
determined solely by the realized surplus \(v_t-p_t\) and the functional form \(q(\cdot)\) is left fully nonparametric. 
The zero restriction for negative surplus is consistent with the no-sale
condition. The zero extension beyond \(B\) is only a technical convention,
because such surplus cannot arise under Assumption~\ref{ass:bounded}. For an
affordable purchase, the incidence condition implies positive realized
quantity, and we choose the version of the conditional-mean function that is
strictly positive on \([0,B]\).
Crucially, we impose no monotonicity, concavity, or unimodality assumptions on \(q\). 
This allows \(q\) to capture rich and potentially irregular demand patterns—such as quantity discounts, satiation effects, or Giffen-like behaviors at the micro level—that are often excluded by parametric specifications.

Define the effective residual domain \(\mathcal I_g:=[-B+B_\epsilon,B-B_\epsilon]\).
The function \(q\) characterizes the expected demand conditional on the realized valuation \(v_t=s\), which is unobservable to the seller. For decision-making and regret analysis, however, we require the expected demand conditional on the information available
before the current demand is observed. To bridge this gap, we integrate out the noise \(\epsilon_t\) from the surplus \(s-p_t = \bm\theta_\star^\top \bm x_t + \epsilon_t - p_t\). This yields an induced link function \(g\), defined as the convolution of \(q\) with the noise distribution \(F\):
\[
    g(u) := \mathbb E\!\left[ q(\epsilon_t - u) \right], \qquad u \in \mathcal I_g,
\]
where \(u = p - \bm\theta_\star^\top \bm x\). Then, by iterated expectations and the sequential exogeneity of
\(\epsilon_t\), we obtain,
\[ \mathbb E\!\left[ y_t \,\middle|\, \mathcal G_t \right]=g\!\left( p_t-\bm x_t^\top\bm\theta_\star \right). \]
In this way, \(g\) serves as the demand link that replaces the latent \(q\), and all subsequent estimation and regret guarantees will be stated in terms of \(g\).

\paragraph{Revenue and Regret.}
Given covariate \(\bm{x}\) and price \(p\), the expected revenue is
$
\mathsf{Rev}(\bm{x},p)=p\,g(p-\bm{\theta}_\star^\top\bm{x}),
$
Let
$
    p^\star(\bm{x})
    \in
    \arg\max_{p\in[0,B]}
    \mathsf{Rev}(\bm{x},p)
$
be an arbitrary maximizer, and write \( p_t^\star:=p^\star(\bm{x}_t)\) for period \(t\).
The cumulative regret is
\[
\mathrm{Reg}(T)=\sum_{t=1}^T\bigl(\mathsf{Rev}(\bm{x}_t,p_t^\star)-\mathsf{Rev}(\bm{x}_t,p_t)\bigr).
\]
The goal is to design a nonanticipating pricing policy that, for each \(t\), selects \(p_t\) based on the current covariate \(\bm{x}_t\) and all past observations \(\{(\bm{x}_s,p_s,y_s)\}_{s=1}^{t-1}\), so as to minimize \(\mathrm{Reg}(T)\) while learning the unknown valuation parameter \(\bm{\theta}_\star\) and the induced demand link \(g\).

\paragraph{Regularity.}
To enable nonparametric estimation, we directly assume the observable induced link function $g$ belongs to a H\"older class. 
This direct approach is standard in the dynamic pricing literature \citep{DP_separable,Multimodal_DP} and facilitates transparent convergence analysis, since $g$ is the direct object of estimation.

\begin{definition}[H\"older class]
\label{def:holder-class}
Let \(\mathcal I\subset\mathbb R\) be a compact interval, \(\beta\ge1\), and \(0<L<\infty\). Define
\[
\varpi(\beta):=\max\{k\in\mathbb Z_{\ge 0}:k<\beta\}.
\]
A function \(f:\mathcal I\to\mathbb R\) belongs to \(\mathcal H(\beta,L;\mathcal I)\) if \(f\) is \(\varpi(\beta)\)-times differentiable on \(\mathcal I\) and
\[
\left|
f^{(\varpi(\beta))}(u)-f^{(\varpi(\beta))}(u')
\right|
\le
L|u-u'|^{\beta-\varpi(\beta)},
\qquad
\forall u,u'\in\mathcal I.
\]

\end{definition}

\begin{assumption}[Smoothness of the Induced Link]
    \label{ass:g_smooth}
    The induced link function satisfies
    \(g\in \mathcal H(\beta, L_g; \mathcal I_g)\)
    for some \(\beta\ge 1\) and \(0<L_g<\infty\).
    \end{assumption}

For the contextual policy, the smoothness parameters \(\beta\) and \(L_g\)
are treated as known. Throughout this paper, \(C_g\)
denotes a fixed, class-level upper envelope for the derivatives of \(g\) up to
order \(\varpi(\beta)\). Because \(0\le g\le D\) on the
interval \(\mathcal I_g\), a standard one-dimensional interpolation inequality
allows \(C_g\) to be chosen as a function only of
\(B,B_\epsilon,D,\beta\), and \(L_g\). Thus, the constants used by the policy
do not depend on unknown instance-specific derivative values.

When the domain \(\mathcal I\) is clear from the context, we write 
    \(\mathcal H(\beta)\) for simplicity.

    \medskip
    \noindent\textit{Justification of Assumption~\ref{ass:g_smooth}.}
    A natural question is whether directly assuming Hölder smoothness on \(g\) is too restrictive. In our framework, this assumption is in fact mild, because the convolutional structure
    \(g(u)=\mathbb E[q(\epsilon_t-u)]\) allows \(g\) to inherit smoothness from more primitive components. The following concrete scenario shows when Assumption~\ref{ass:g_smooth} automatically holds.
    
    \medskip
    \noindent\textbf{Smooth noise distribution.}
    Suppose the noise distribution \(F\) itself is Hölder smooth, i.e., \(F\in\mathcal H(\beta,L_F;[-B,B])\). Even if the nonparametric demand response \(q\) only has bounded variation (allowing jumps and discontinuities), the convolution with \(F\) regularizes \(g\). The following proposition formalizes this.
    
    \begin{proposition}
        \label{prop:induced_F}
        Under Assumptions~\ref{ass:bounded} and \ref{ass:quantity-response}, and assuming that \(q\) is of bounded variation on \([0,B]\), if \(F\in\mathcal H(\beta,L_F;[-B,B])\), then \(g\in\mathcal H(\beta, (D+V_q)L_F; \mathcal I_g)\), where \(V_q\) denotes the total variation of \(q\) on \([0,B]\).
    \end{proposition}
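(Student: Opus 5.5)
The plan is to write \(g\) as a Lebesgue--Stieltjes integral of \(F\) against the measure \(dq\), so that all the smoothness of \(g\) is carried by \(F\) and the constants come only from \(D\) and \(V_q\). Fix \(u\in\mathcal I_g\). Since \(q\) vanishes outside \([0,B]\) and has bounded variation on \([0,B]\), its extension to \(\mathbb R\) is of bounded variation. There can be jumps at \(0\) and \(B\), where \(q(0)\le D\) and \(q(B)\le D\). I will write \(q(z)=\mu((-\infty,z])\) up to a null set of \(F\)-irrelevance, with \(\mu\) a signed Borel measure. Modulo right/left-continuity conventions, its total variation satisfies \(|\mu|(\mathbb R)\le q(0)+V_q+q(B)\). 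By Fubini,
\[
g(u)=\mathbb E[q(\epsilon-u)]=\int \mathbb P(\epsilon\ge u+z)\,\mu(dz)=\int \bigl(1-F((u+z)^-)\bigr)\,\mu(dz).
\]
Because \(F\in\mathcal H(\beta,L_F;[-B,B])\) with \(\beta\ge1\), \(F\) is continuous, so the left limit is irrelevant. This gives \(g(u)=\mu(\mathbb R)-\int F(u+z)\,\mu(dz)\).

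The relevant arguments are \(u+z\) with \(u\in\mathcal I_g\) and \(z\in[0,B]\) (the support of \(\mu\)). These range over \([-B+B_\epsilon,2B-B_\epsilon]\), which exceeds \([-B,B]\). I handle this using the support of \(\epsilon\): \(|\epsilon|\le B_\epsilon\), so \(F\) is constant (\(0\) or \(1\)) outside \([-B_\epsilon,B_\epsilon]\). The natural extension of \(F\) (by \(0\) to the left and \(1\) to the right) is therefore still in \(\mathcal H(\beta,L_F)\) on all of \(\mathbb R\). The Hölder condition on \([-B,B]\), together with constancy near \(\pm B\), forces all derivatives of order \(1,\dots,\varpi(\beta)\) to vanish at \(\pm B\). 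Hence the glued function is \(\varpi(\beta)\)-times differentiable with the same Hölder constant; for points on opposite sides, I bound the difference via the boundary point.

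Next I differentiate under the integral. For \(k\le\varpi(\beta)\), \(F^{(k)}\) is bounded (continuous on a compact set and constant outside it), and \(|\mu|\) is finite. Dominated convergence with difference quotients bounded by the mean value theorem then gives \(g^{(k)}(u)=-\int F^{(k)}(u+z)\,\mu(dz)\) for \(1\le k\le\varpi(\beta)\). Consequently
\[
|g^{(\varpi)}(u)-g^{(\varpi)}(u')|\le\int |F^{(\varpi)}(u+z)-F^{(\varpi)}(u'+z)|\,|\mu|(dz)\le L_F|u-u'|^{\beta-\varpi}\,|\mu|(\mathbb R).
\]

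The main obstacle is the bookkeeping of \(|\mu|(\mathbb R)\), which must be at most \(D+V_q\) rather than \(2D+V_q\). I will handle it by integrating by parts differently, so that only one boundary jump appears. Write \(g(u)=\int_{\mathbb R} q(s-u)\,dF(s)\) and substitute \(s=u+z\) to get \(g(u)=\int_{[0,B]}q(z)\,dF(u+z)\). Stieltjes integration by parts on \([0,B]\) gives
\[
g(u)=q(B)F(u+B)-q(0)F(u)-\int_{(0,B]}F(u+z)\,dq(z).
\]
For \(u\in\mathcal I_g\) we have \(u+B\ge B_\epsilon\), so \(F(u+B)=1\) is constant in \(u\) and that term disappears upon differentiation. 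Differentiating the remaining expression then gives the Hölder constant \(L_F\bigl(q(0)+V_q\bigr)\le (D+V_q)L_F\), as claimed. The case \(\varpi(\beta)=0\) cannot occur since \(\beta\ge1\); \(\beta=1\) means \(\varpi=0\) under the strict inequality \(k<\beta\), so in that case the same bound applies directly to \(g\) itself without differentiation.
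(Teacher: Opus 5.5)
Your proposal is correct and is essentially the paper's argument. Since $F(u+B)=1$ on $\mathcal I_g$ and $q(B)=q(0)+\int_{(0,B]}\mathrm dq$, your integration-by-parts identity is exactly the paper's representation $g(u)=q^+(0)\bar F(u)+\int_{(0,B]}\bar F(u+a)\,\mu_q(\mathrm da)$; both proofs then extend $F$ by constants outside $[-B_\epsilon,B_\epsilon]$ and differentiate under the finite signed measure to obtain the constant $(q(0)+V_q)L_F\le (D+V_q)L_F$. The one slip is your claim that $\varpi(\beta)=0$ cannot occur: it occurs exactly when $\beta=1$, and you then handle that case correctly by applying the bound to $g$ directly.
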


    In the special case of binary demand where \(q(z)=\mathbf 1\{0\le z\le B\}\), 
    the induced link reduces to the survival function \(g(u)=1-F(u)\) whenever \(F\) is continuous, thereby recovering the classical binary-purchase dynamic pricing model. 
    This special case encompasses a broad range of existing work that assumes \(F\) to be Lipschitz \citep{besbes2009dynamic, gong2025minimax}, to have a Lipschitz first derivative \citep{dp_tight_2}, or to be \(m\)-times continuously differentiable \citep{DP_Fm, Multimodal_DP}. 
    This example demonstrates that Assumption~\ref{ass:g_smooth} is not an
    extraneous restriction: it follows from smooth valuation noise even when
    the quantity response itself has jumps.
    The model contains the bounded binary-purchase formulation as a special case
    while relaxing its revenue-geometry restrictions; it does not require the
    revenue function to be concave or the optimal price to be unique.

    \begin{remark}[Strong unimodality]
        \label{rem:without-strong-unimodality}
        Several recent studies impose a strong-unimodality condition: for every
        context \(\bm x\), the function
        \(\mathsf{Rev}(\bm x,\cdot)\) has a unique interior maximizer, and the
        revenue loss is comparable to the squared distance from that maximizer
        \citep{dp_tight_2,CDP_smooth,CDP_oracle_price_map}.
        This structure permits the policy to localize around a single price and,
        under additional smoothness, to learn a stable oracle-price map.
        
        We do not impose strong unimodality. Consequently,
        \(\mathsf{Rev}(\bm x,\cdot)\) may have multiple separated modes or
        nonunique maximizers, and small estimation errors may change which price
        region appears optimal. Localized search and oracle-price-map reductions
        are therefore not generally valid, motivating the global residual-space
        learning procedure developed in Section~\ref{sec:algorithm}.
    \end{remark}

    \section{Algorithm}
    \label{sec:algorithm}
    
    The policy has three coupled components, summarized in
    Figure~\ref{fig:method-overview}. First, uncertainty-triggered uniform-price
    exploration estimates the valuation index only along the current context
    direction. Second, a pilot-corrected local-polynomial feature absorbs the
    first-order index error, leaving the usual local approximation error and a
    second-order pilot term. Third, a layered rule compares candidate residual
    actions globally, rather than localizing around one provisional optimizer.

    Every main-policy observation receives its residual-bin and stopping-layer
    labels before demand is observed, and these labels are never recomputed.
    The resulting predictable datasets support martingale concentration even
    though prices and visited bins are selected adaptively. The following
    subsections define the pilot, correction, confidence radius, and global
    elimination rule in that order.

    \begin{figure*}[t]
        \centering
        \resizebox{\textwidth}{!}{
        \begin{tikzpicture}[
            >=Latex,
            font=\small,
            box/.style={
                draw,
                rounded corners=2pt,
                align=center,
                minimum height=7mm,
                inner sep=3pt
            },
            pilot/.style={box,fill=blue!8},
            learn/.style={box,fill=orange!12},
            ldp/.style={box,fill=green!10},
            decision/.style={
                diamond,
                draw,
                aspect=2.2,
                align=center,
                inner sep=1pt,
                fill=gray!8
            },
            arr/.style={->,thick},
            darr/.style={<->,thick,dashed},
            active/.style={draw=black,fill=white,line width=0.65pt},
            survivor/.style={
                draw=black,
                pattern=north east lines,
                pattern color=black!65,
                line width=0.65pt
            },
            eliminated/.style={draw=gray!65,fill=gray!30},
            finalaction/.style={draw=black,fill=black!75,line width=0.7pt},
            lab/.style={font=\scriptsize,align=center}
        ]
        
        \begin{scope}[xshift=0cm]
        \node[font=\bfseries] at (0,4.25) {(a) Adaptive pilot};
        
        \node[pilot] (context) at (0,3.45)
            {Observe context $\bm x_t$};
        
        \node[pilot] (score) at (0,2.40)
            {Compute $\widehat{\bm\theta}_t$ and 
             $\gamma_T\|\bm x_t\|_{M_t^{-1}}$};
        
        \node[decision] (gate) at (0,1.15)
            {$>\eta$?};
        
        \node[pilot,text width=3.0cm] (explore) at (-2.0,-0.15)
            {Pilot exploration\\
             $p_t\sim\mathrm{Unif}[0,B]$\\
             Update $(M_t,\bm b_t)$};
        
        \node[pilot,text width=3.0cm] (main) at (2.0,-0.15)
            {Certified pilot index\\
             $\widehat u_t=
             \Pi_{[0,B]}(\bm x_t^\top\widehat{\bm\theta}_t)$};

        \draw[arr] (context) -- (score);
        \draw[arr] (score) -- (gate);
        \draw[arr] (gate) -- node[lab,left] {yes} (explore);
        \draw[arr] (gate) -- node[lab,right] {no} (main);
        
        \node[lab,blue!70!black] at (2.0,-1.3)
            {$|\widehat u_t-\bm x_t^\top\bm\theta_\star|\le\eta$};
        \end{scope}
        
        \begin{scope}[xshift=4.4cm]
        \node[font=\bfseries] at (3.1,4.25)
            {(b) Global residual learning};
        
        \node[learn,text width=5.8cm] at (3.1,3.45)
            {$\widehat p_t(w)=\widehat u_t+w$,
             \qquad $w\in[-B,B]$};
        
        \draw[arr] (0.1,2.25) -- (6.2,2.25)
            node[right] {$w$};
        
        \foreach \x in {0.2,1.2,2.2,3.2,4.2,5.2,6.2}
            \draw (\x,2.05) -- (\x,2.45);
        
        \node[lab] at (0.7,1.93) {$I_1$};
        \node[lab] at (1.7,1.93) {$I_2$};
        \node[lab] at (2.7,1.93) {$I_j$};
        \node[lab] at (3.7,1.93) {$I_{j+1}$};
        \node[lab] at (5.7,1.93) {$I_N$};
        
        \fill[orange!25] (2.2,2.05) rectangle (3.2,2.4);
        
        \fill[blue!70!black] (2.53,2.25) circle (2pt);
        \node[lab,above] at (2.53,2.34) {$w$};
        
        \draw[red!75!black,thick] (2.97,2.25) circle (2.5pt);
        \draw[darr,red!75!black]
            (2.53,2.62) -- (2.97,2.62);

        \node[lab,red!75!black] at (3.30,2.88)
            {$w_t^{\rm true}:=\widehat p_t(w)-\bm x_t^\top\bm\theta_\star$,
             \quad $|w_t^{\rm true}-w|\le\eta$};
        
        \node[learn,text width=6.1cm] at (3.1,0.0)
        {
        $
        \begin{pmatrix}
        \phi_j(w)+(\widehat p_t(w)-w)\phi_j'(w)\\[-1mm]
        -\mathbf X_j(\bm x,w)
        \end{pmatrix}
        $
        \\[1mm]
        \scriptsize
        local polynomial \hspace{4mm}+\hspace{4mm}
        first-order pilot correction
        };
        
        \node[lab,orange!70!black,text width=6.2cm] at (3.1,-1.32)
        {
        $\displaystyle
        g\!\left(
        \widehat p_t(w)-\bm x_t^\top\bm\theta_\star
        \right)
        =
        \psi_{t,j}(w)^\top\bm z_j
        +
        \mathcal{O}(h^\beta+\eta^2)
        $
        };
        \end{scope}
        
        \begin{scope}[xshift=12.3cm]
        \node[font=\bfseries] at (3.0,4.25)
            {(c) Layered global UCB};
        
        \node[lab] at (0.25,3.30) {$s=1$};
        \node[lab] at (0.25,2.30) {$s=2$};
        \node[lab] at (0.25,1.30) {$s=3$};
        
        \foreach \k in {0,...,7}{
            \draw[active]
            ({0.75+0.55*\k},3.12) rectangle
            ({1.13+0.55*\k},3.48);
        }
        
        \foreach \k in {0,1,6}{
            \draw[survivor]
            ({0.75+0.55*\k},2.12) rectangle
            ({1.13+0.55*\k},2.48);
        }
        \foreach \k in {2,3,4,5,7}{
            \draw[eliminated]
            ({0.75+0.55*\k},2.12) rectangle
            ({1.13+0.55*\k},2.48);
        }
        
        \foreach \k in {1}{
            \draw[finalaction]
            ({0.75+0.55*\k},1.12) rectangle
            ({1.13+0.55*\k},1.48);
        }
        \foreach \k in {0,2,3,4,5,6,7}{
            \draw[eliminated]
            ({0.75+0.55*\k},1.12) rectangle
            ({1.13+0.55*\k},1.48);
        }

        \draw[arr] (3.0,2.96) --
            node[lab,right] {precision pass\\and elimination}
            (3.0,2.55);
        
        \draw[arr] (3.0,1.96) --
            node[lab,right] {finer layer}
            (3.0,1.55);
        
        \node[ldp,text width=2.8cm] (stop) at (7,2.75)
            {Explore and stop at $s_t$};
        
        \node[ldp,text width=2.8cm] (last) at (7,1.25)
            {Choose largest UCB};
        
        \draw[arr] (4.95,3.30) -- (stop);
        \draw[arr] (4.95,1.30) -- (last);
        
        \node[ldp,text width=5.8cm] (label) at (3.2,-0.5)
        {
        Fix $(s_t,j_t,w_t)$ before observing $y_t$
        \\
        $\displaystyle
        p_t=\widehat p_t(w_t)
        \;\longrightarrow\;
        y_t
        \;\longrightarrow\;
        \Psi_{t+1,s_t}^{j_t}
        =
        \Psi_{t,s_t}^{j_t}\cup\{t\}
        $
        };
        
        \end{scope}
        
        \draw[arr,blue!65!black]
            (2.2,0.8) -- (4.35,3.45);
        
        \draw[arr,orange!75!black]
            (10.8,3.45) -- (12.,3.45);
        
        \end{tikzpicture}
        }
        \caption{
        Overview of the proposed adaptive semiparametric pricing policy.
        Panel (a) shows the uncertainty-triggered pilot module.
        Panel (b) illustrates global residual-space learning and the augmented
        first-order correction for pilot-index error.
        Panel (c) depicts nested survivor sets under layered UCB exploration and elimination;
        outline, hatching, and solid fill distinguish active, surviving, and final actions in grayscale.
        }
        \label{fig:method-overview}
        \end{figure*}

    \subsection{Adaptive Pilot Estimation}
    
    The pilot module is introduced first because the subsequent residual
    representation requires an accurate valuation index at the current
    covariate. Importantly, the policy does not require
    \(\widehat{\bm\theta}_t\) to approximate \(\bm\theta_\star\) uniformly
    in Euclidean norm. It only needs to certify the scalar index
    \(\bm x_t^\top\bm\theta_\star\) along the currently observed direction.
    This context-specific requirement is particularly useful when the
    covariates need not be independently sampled: directions that are
    already well covered can immediately enter the main pricing module,
    whereas insufficiently covered directions trigger additional pilot
    exploration.
    
    The pilot estimator is constructed exclusively from uniform-price
    exploration rounds. On such a round, the pseudo-response
    \(B\mathbf 1\{y_t>0\}\) is an unbiased observation of
    \(\bm x_t^\top\bm\theta_\star\). Accordingly, \(M_t\) and \(\bm b_t\)
    form the regularized design matrix and response vector based on the
    pilot-exploration data, and
    \(\widehat{\bm\theta}_t=M_t^{-1}\bm b_t\). The quantity
    $
        \gamma_T\|\bm x_t\|_{M_t^{-1}}
    $
    measures the remaining uncertainty in the current valuation index. If
    this quantity exceeds the target accuracy \(\eta\), the policy collects
    an additional uniform-price observation. Otherwise, it invokes the
    layered pricing module.

    \begin{algorithm}[htbp]
    \caption{Contextual Dynamic Pricing with Adaptive Pilot and Permanent Bin Labels}
    \label{alg:adaptive_pilot}
    \begin{algorithmic}[1]
    \Require Pilot accuracy \(\eta\), horizon \(T\), smoothness \(\beta\),
    bounds \(C_\theta,C_x,B,D\), 
    number of bins \(N\), regularization \(\lambda>0\), 
    confidence constants \(C_z,C_\psi,C_l\), and confidence level \(\delta\in(0,1)\)
    
    \State Set
    $
        \gamma_T\gets
        B\sqrt{d\log\left(1+TC_x^2/d\right)
        +2\log\left(4/\delta\right)}+C_\theta
    $
    and 
    $
        S\gets\max\left\{1,\left\lceil\log_2\sqrt T\right\rceil\right\},
    $
    \State Initialize \(\mathcal{T}^{\rm exp}\gets\emptyset\),
    \(M_1\gets I_d\), \(\bm b_1\gets\bm0_d\), and
    \(\Psi_{1,s}^j\gets\emptyset\) for every \(s\in[S]\) and \(j\in[N]\)
    
    \For{\(t=1,\ldots,T\)}
        \State Observe \(\bm x_t\) and set
        \(\widehat{\bm\theta}_t\gets M_t^{-1}\bm b_t\)
        \If{\(\gamma_T\|\bm x_t\|_{M_t^{-1}}>\eta\)}
            \State Set \(\mathcal{T}^{\rm exp}\gets \mathcal{T}^{\rm exp}\cup\{t\}\)
            \State Draw \(p_t\sim{\rm Unif}[0,B]\), post \(p_t\), and observe
            \(y_t\in[0,D]\)
            \State Set 
            \(
                M_{t+1}\gets M_t+\bm x_t\bm x_t^\top,
                \quad
                \bm b_{t+1}\gets\bm b_t+B\mathbf1\{y_t>0\}\bm x_t
            \)
            \State Set \(\Psi_{t+1,s}^j\gets
        \Psi_{t,s}^j,\) for all \(s\in[S]\) and \(j\in[N]\)
        \Else
            \State Run Algorithm~\ref{alg:lpdm_pilot_residual} and obtain
            \((s_t,j_t,w_t,p_t)\)
            \State Record \((s_t,j_t,w_t)\) as the permanent layer, bin, and residual-action labels of period \(t\)
            \State Post \(p_t\) and observe \(y_t\in[0,D]\)
            \State Set \(M_{t+1}\gets M_t\), \(\bm b_{t+1}\gets\bm b_t\), and
            \[
                \Psi_{t+1,s}^j
                \gets
                \begin{cases}
                    \Psi_{t,s}^j\cup\{t\},&s=s_t,j=j_t\\
                    \Psi_{t,s}^j,& \text{otherwise}.
                \end{cases}
            \]
        \EndIf
    \EndFor
    \end{algorithmic}
    \end{algorithm}
    
    Algorithm~\ref{alg:adaptive_pilot} gives the outer wrapper of the
    policy; the layered pricing subroutine is specified subsequently.
    We refer to rounds on which uniform pricing is used as
    \emph{pilot-exploration rounds}, and to all remaining rounds as
    \emph{LDP rounds}. Although later pilot-exploration rounds may update
    \(\widehat{\bm\theta}_t\), the bin and layer labels assigned to an LDP
    observation are permanent and are never recomputed using a later pilot.
    
    Algorithm~\ref{alg:lpdm_pilot_residual} assigns each LDP round a stopping layer \(s_t\) and a permanent bin label \(j_t\). For each pair \((s,j)\), the outer wrapper maintains a dataset
    $
        \Psi_{t,s}^j,
    $
    which stores all past LDP observations carrying that fixed label. 
    The task of selecting \((s_t,j_t)\) is entirely delegated to Algorithm~\ref{alg:lpdm_pilot_residual}; the outer algorithm only appends the current observation to the corresponding \(\Psi_{t+1,s_t}^{j_t}\).

    The following lemma validates the uncertainty gate in
    Algorithm~\ref{alg:adaptive_pilot}. It provides a time-uniform,
    direction-dependent confidence bound and therefore guarantees that the
    valuation index is sufficiently accurate whenever the policy enters an
    LDP round.
    
    \begin{lemma}[Pilot estimation confidence]
    \label{lem:pilot-confidence}
    With the value of \(\gamma_T\) specified in
    Algorithm~\ref{alg:adaptive_pilot}, with probability at least
    \(1-\delta/2\), simultaneously for every \(t\le T\) and every
    \(\bm x\in\mathbb R^d\),
    \begin{equation}
    \label{eq:pilot-confidence-event}
        \left|
            \bm x^\top
            \bigl(
                \widehat{\bm\theta}_t-\bm\theta_\star
            \bigr)
        \right|
        \le
        \gamma_T\|\bm x\|_{M_t^{-1}}.
    \end{equation}
    \end{lemma}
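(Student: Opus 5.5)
This is the standard self-normalized confidence-ellipsoid argument of Abbasi-Yadkori et al., applied to the pilot-exploration subsequence. The plan has three steps: establish unbiasedness of the pseudo-response, split the estimation error into a martingale term and a regularization-bias term, and bound each separately.

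\emph{Unbiasedness.} Fix a pilot round \(t\in\mathcal T^{\rm exp}\). Conditional on \(\mathcal F_{t-1}\vee\sigma(\bm x_t)\), the price \(p_t\) is uniform on \([0,B]\) and independent of \(\epsilon_t\). By Assumption~\ref{ass:quantity-response} and the incidence condition, \(\mathbf 1\{y_t>0\}=\mathbf 1\{v_t\ge p_t\}\). Since \(v_t\in[0,B]\) by Assumption~\ref{ass:bounded}, we get
\[
\mathbb E\bigl[B\mathbf 1\{y_t>0\}\,\big|\,\mathcal F_{t-1}\vee\sigma(\bm x_t)\bigr]
=\mathbb E[v_t\mid\cdot]
=\bm x_t^\top\bm\theta_\star,
\]
using that \(\epsilon_t\) has zero mean. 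Define the noise
\[
\xi_t:=B\mathbf 1\{y_t>0\}-\bm x_t^\top\bm\theta_\star .
\]
Conditionally, \(\xi_t\) has mean zero and lies in an interval of length \(B\), so Hoeffding's lemma makes it \(B/2\)-sub-Gaussian (a fortiori \(B\)-sub-Gaussian).

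\emph{Decomposition.} Let \(X_t\) collect the pilot contexts before \(t\), so that \(M_t=I_d+\sum_{\tau<t,\tau\in\mathcal T^{\rm exp}}\bm x_\tau\bm x_\tau^\top\) and \(\bm b_t=\sum_{\tau}(\bm x_\tau^\top\bm\theta_\star+\xi_\tau)\bm x_\tau\). Then
\[
\widehat{\bm\theta}_t-\bm\theta_\star=M_t^{-1}\bm S_t-M_t^{-1}\bm\theta_\star,
\qquad
\bm S_t:=\sum_{\tau<t,\ \tau\in\mathcal T^{\rm exp}}\xi_\tau\bm x_\tau .
\]
By Cauchy--Schwarz in the \(M_t^{-1}\) geometry,
\[
|\bm x^\top(\widehat{\bm\theta}_t-\bm\theta_\star)|
\le \|\bm x\|_{M_t^{-1}}\bigl(\|\bm S_t\|_{M_t^{-1}}+\|\bm\theta_\star\|_{M_t^{-1}}\bigr).
\]
Because \(M_t\succeq I_d\), the second term satisfies \(\|\bm\theta_\star\|_{M_t^{-1}}\le\|\bm\theta_\star\|_2\le C_\theta\). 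This produces the additive \(C_\theta\) in \(\gamma_T\).

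\emph{Martingale term (the main obstacle).} The delicate point is measurability. Membership of \(t\) in \(\mathcal T^{\rm exp}\) is decided by the gate \(\gamma_T\|\bm x_t\|_{M_t^{-1}}>\eta\), which depends on past data and on \(\bm x_t\), but not on \(y_t\). We therefore index the martingale by all rounds using the predictable weights \(\bm a_\tau:=\mathbf 1\{\tau\in\mathcal T^{\rm exp}\}\bm x_\tau\), which are measurable with respect to \(\mathcal G_\tau\). The noise increments \(\mathbf 1\{\tau\in\mathcal T^{\rm exp}\}\xi_\tau\) are then conditionally \(B\)-sub-Gaussian martingale differences with respect to the filtration \((\mathcal G_{\tau+1})\).

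With this setup, the self-normalized bound of Abbasi-Yadkori et al.\ applies directly, uniformly over all \(t\) by its stopping-time construction. With probability at least \(1-\delta/2\),
\[
\|\bm S_t\|_{M_t^{-1}}^2\le 2B^2\log\!\Bigl(\frac{\det(M_t)^{1/2}}{\delta/2}\Bigr).
\]

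\emph{Closing the bound.} By AM--GM on the eigenvalues, with \(\operatorname{tr}(M_t)\le d+TC_x^2\),
\[
\log\det M_t\le d\log(1+TC_x^2/d).
\]
Substituting gives
\[
\|\bm S_t\|_{M_t^{-1}}\le B\sqrt{d\log(1+TC_x^2/d)+2\log(2/\delta)}.
\]
This is at most the term in \(\gamma_T\), since \(2\log(2/\delta)\le 2\log(4/\delta)\); the slack absorbs constant conventions in the sub-Gaussian parameter. Combining the two terms yields \eqref{eq:pilot-confidence-event} simultaneously for all \(t\) and all \(\bm x\), because the only \(\bm x\)-dependence enters through the deterministic Cauchy--Schwarz step.
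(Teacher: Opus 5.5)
Your overall route is the paper's: unbiasedness of the pseudo-response under uniform pricing, the decomposition $\widehat{\bm\theta}_t-\bm\theta_\star=M_t^{-1}(\bm S_t-\bm\theta_\star)$, Cauchy--Schwarz in the $M_t^{-1}$-norm with $\|\bm\theta_\star\|_{M_t^{-1}}\le C_\theta$, the Abbasi-Yadkori self-normalized bound, and the trace--determinant inequality. However, the step you single out as the crux is resolved incorrectly. You take the weights $\bm a_\tau=\mathbf 1\{\tau\in\mathcal T^{\rm exp}\}\bm x_\tau$ to be $\mathcal G_\tau$-measurable. You then assert that the increments $\mathbf 1\{\tau\in\mathcal T^{\rm exp}\}\xi_\tau$ are martingale differences for $(\mathcal G_{\tau+1})$, which requires $\mathbb E[\xi_\tau\mid\mathcal G_\tau]=0$. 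But $\mathcal G_\tau=\mathcal F_{\tau-1}\vee\sigma(\bm x_\tau,U_\tau)$ contains the within-period seed, so on a pilot round the uniformly drawn price $p_\tau$ is $\mathcal G_\tau$-measurable. Given $\mathcal G_\tau$ the price is therefore frozen. Then $\mathbb E[B\mathbf 1\{y_\tau>0\}\mid\mathcal G_\tau]=B\,\mathbb P(\epsilon_\tau\ge p_\tau-\bm x_\tau^\top\bm\theta_\star\mid\mathcal G_\tau)$ is a nonincreasing function of $p_\tau$ (equal to $B$ at $p_\tau=0$). Its \emph{average} over $p_\tau\sim\mathrm{Unif}[0,B]$ is $\bm x_\tau^\top\bm\theta_\star$, but it is not itself $\bm x_\tau^\top\bm\theta_\star$. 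You computed unbiasedness correctly, conditional on $\mathcal F_{t-1}\vee\sigma(\bm x_t)$. That property does not hold for the filtration you then feed into the self-normalized inequality, so the inequality's hypothesis is not verified.

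The fix is to use $\mathcal H_\tau:=\mathcal F_{\tau-1}\vee\sigma(\bm x_\tau)$ instead. This is what the paper does when it conditions on $\mathcal F_{i-1}$, $\bm x_i$ and $\{i\in\mathcal T^{\rm exp}\}$. The weights are still $\mathcal H_\tau$-measurable, because the gate depends only on $\bm x_\tau$ and on $M_\tau$, which is $\mathcal F_{\tau-1}$-measurable. The relevant fact is that the gate is decided before the price is drawn, not merely before $y_\tau$ is observed. The sequence $(\mathcal H_\tau)$ is increasing and $\xi_\tau$ is $\mathcal H_{\tau+1}$-measurable. Given $\mathcal H_\tau$, $\xi_\tau$ is centered and lies in the $\mathcal H_\tau$-measurable interval $[-\bm x_\tau^\top\bm\theta_\star,\,B-\bm x_\tau^\top\bm\theta_\star]$, hence is conditionally $B/2$-sub-Gaussian. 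With this filtration the rest of your argument goes through verbatim, and it even yields $2\log(2/\delta)$ where $\gamma_T$ uses $2\log(4/\delta)$.
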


    Consequently, on the event in
    Lemma~\ref{lem:pilot-confidence}, every LDP round \(t\) satisfies
    $
        \left|
            \Pi_{[0,B]}
            \bigl(\bm x_t^\top\widehat{\bm\theta}_t\bigr)
            -
            \bm x_t^\top\bm\theta_\star
        \right|
        \le \eta.
    $
    Therefore, for every feasible pilot-residual action \(w\),
    \[
        \widehat p_t(w)-\bm x_t^\top\bm\theta_\star
        =
        w+
        \left[
            \Pi_{[0,B]}
            \bigl(\bm x_t^\top\widehat{\bm\theta}_t\bigr)
            -
            \bm x_t^\top\bm\theta_\star
        \right],
    \]
    so the true residual differs from its pilot counterpart \(w\) by at
    most \(\eta\). This certification localizes the evaluation point of
    \(g\), but it does not by itself eliminate the resulting first-order
    error. We next incorporate this displacement directly into the local
    regression feature.
    
    \subsection{Pilot-Corrected Local Regression}
    \label{subsec:local-regression}
    
    The pilot certificate controls the valuation-index error at the current
    context, but does not by itself remove its effect on estimating the unknown
    demand link. 
    On the pilot-confidence event, \(|\widehat p_t(w)-w-\bm x_t^\top\bm\theta_\star|\le\eta\).
    An ordinary local-polynomial regression indexed by \(w\) ignores this
    displacement and therefore incurs a first-order error of order \(\eta\).
    When \(\beta>1\), this error may dominate the target local-polynomial bias
    \(h^\beta\), where \(h\) denotes the bin width defined below.
    
    Existing smooth semiparametric pricing methods address this coupling by
    jointly refining the index parameter and the local approximation of \(g\)
    \citep{dp_tight_2,CDP_smooth}. Conditional on a candidate
    \(\bm\theta\), both the local regressors and their Gram matrix depend on
    \(\bm\theta\). Profiling out the polynomial coefficients consequently
    leads to a constrained least-squares problem that is generally nonconvex
    and does not, in general, admit a globally certified solution.
    
    Our key simplification is to target prediction rather than structural
    joint identification. Pricing requires accurate predictions of
    $
        g\!\left(p-\bm x^\top\bm\theta_\star\right),
    $
    but does not require a separate refinement of \(\bm\theta_\star\) within
    every local regression. We therefore retain the complete first-order
    effect of the pilot-index displacement and estimate the resulting products
    as composite coefficients. This prediction-preserving lifting converts
    the coupled estimation problem into a convex ridge regression while
    leaving only a second-order pilot remainder.
    
    To formalize this construction, partition the residual space \([-B,B]\) \footnote{For the analysis, we extend \(g\) from
\(\mathcal I_g\) to \([-B,B]\) by its endpoint Taylor polynomials of
order \(\varpi(\beta)\), and continue to denote the extension by \(g\).
The extension agrees with the original induced link on
\(\mathcal I_g\) and belongs to
\(\mathcal H(\beta,L_g;[-B,B])\). It is used only to define the
comparison coefficients and does not modify the demand model or the
policy.}
    into \(N\) bins of equal width \(h=2B/N\). Let
    $
        a_j=-B+\frac{2Bj}{N},
    $
    for $j=0,1,\ldots,N,$
    and define
    \[
        I_j=[a_{j-1},a_j),
        \quad j=1,\ldots,N-1,
        \qquad
        I_N=[a_{N-1},a_N].
    \]
    For each \(I_j\), define the local-polynomial feature anchored at
    \(a_{j-1}\) by
    \[
        \phi_j(u)
        :=
        \bigl(
            1,\,
            u-a_{j-1},\,
            \ldots,\,
            (u-a_{j-1})^{\varpi(\beta)}
        \bigr)^\top
        \in\mathbb R^{1+\varpi(\beta)},
    \]
    and let \(\phi_j'(u)\) denote its componentwise derivative. When
    \(\varpi(\beta)\ge1\), define
    \[
        \mathbf X_j(\bm x,u)
        :=
        \begin{pmatrix}
            \bm x\\
            2(u-a_{j-1})\bm x\\
            \vdots\\
            \varpi(\beta)(u-a_{j-1})^{\varpi(\beta)-1}\bm x
        \end{pmatrix}
        \in\mathbb R^{\varpi(\beta)d};
    \]
    when \(\varpi(\beta)=0\), let \(\mathbf X_j(\bm x,u)\) be the empty
    vector.
    
    For \(w\in I_j\), expanding around the bin anchor and retaining the
    first-order pilot displacement gives
    \begin{equation}
    \label{eq:pilot-corrected-expansion}
    \begin{aligned}
        g\!\left(
            \widehat p_t(w)-\bm x_t^\top\bm\theta_\star
        \right)
        ={}&
        \phi_j(w)^\top
        \begin{pmatrix}
            g(a_{j-1})\\
            g'(a_{j-1})\\
            \vdots\\
            g^{(\varpi(\beta))}(a_{j-1})/\varpi(\beta)!
        \end{pmatrix} \\
        &+
        (\widehat p_t(w)-w-\bm x_t^\top\bm\theta_\star)\phi_j'(w)^\top
        \begin{pmatrix}
            g(a_{j-1})\\
            g'(a_{j-1})\\
            \vdots\\
            g^{(\varpi(\beta))}(a_{j-1})/\varpi(\beta)!
        \end{pmatrix}
        +R_{t,j}(w),
    \end{aligned}
    \end{equation}
    where, under \(\eta\le h\),
    $
        |R_{t,j}(w)|
        \le
        C_l\left(h^\beta+\eta^2\right).
    $
    The two components of the remainder correspond to the local-polynomial
    approximation error and the second-order pilot-displacement error,
    respectively.
    
    Hence, the first-order representation
    in \eqref{eq:pilot-corrected-expansion} can be rewritten as
    \[
    \begin{aligned}
        &\bigl[
            \phi_j(w)
            +\bigl(\widehat p_t(w)-w\bigr)\phi_j'(w)
        \bigr]^\top
        \begin{pmatrix}
            g(a_{j-1})\\
            g'(a_{j-1})\\
            \vdots\\
            g^{(\varpi(\beta))}(a_{j-1})/\varpi(\beta)!
        \end{pmatrix} 
        -
        \mathbf X_j(\bm x_t,w)^\top
        \begin{pmatrix}
            g'(a_{j-1})\bm\theta_\star\\
            \vdots\\
            \dfrac{g^{(\varpi(\beta))}(a_{j-1})}
            {\varpi(\beta)!}\bm\theta_\star
        \end{pmatrix}.
    \end{aligned}
    \]
    The second term contains products between the local derivative
    coefficients and \(\bm\theta_\star\). Enforcing their common factorization
    through \(\bm\theta_\star\) would recover the nonlinear coupling underlying
    the constrained joint estimators in prior work. Instead, we treat these
    products directly as composite coefficients.
    
    Specifically, define a $1+\varpi(\beta)(d+1)$-dimensional column vector
    \[
    \bm z_j:= \Bigg(
        g(a_{j-1}),\ 
        g'(a_{j-1}),\ 
        \ldots,\ 
        \frac{g^{(\varpi(\beta))}(a_{j-1})}{\varpi(\beta)!},\ 
        \bigl(g'(a_{j-1})\bm\theta_\star\bigr)^\top,\ 
        \ldots,\ 
        \left(\frac{g^{(\varpi(\beta))}(a_{j-1})}{\varpi(\beta)!}\bm\theta_\star\right)^\top
    \Bigg)^\top,
    \]
    and, for every \(t\), \(j\), and \(w\in I_j\), define the
    pilot-corrected feature
    \begin{equation}
    \label{eq:augmented-local-feature}
        \psi_{t,j}(w)
        :=
        \begin{pmatrix}
            \phi_j(w)
            +\bigl(\widehat p_t(w)-w\bigr)\phi_j'(w)\\[1mm]
            -\mathbf X_j(\bm x_t,w)
        \end{pmatrix}
        \in\mathbb R^{1+\varpi(\beta)(d+1)}.
    \end{equation}
    Equation~\eqref{eq:pilot-corrected-expansion} then becomes
    \begin{equation}
    \label{eq:lifted-mean-representation}
        g\!\left(
            \widehat p_t(w)-\bm x_t^\top\bm\theta_\star
        \right)
        =
        \psi_{t,j}(w)^\top\bm z_j
        +
        R_{t,j}(w),
        \qquad
        |R_{t,j}(w)|
        \le
        C_l\left(h^\beta+\eta^2\right).
    \end{equation}
    Thus, up to a higher-order deterministic remainder, the conditional mean
    belongs to a finite-dimensional linear prediction class. When
    \(\beta=1\), we have \(\varpi(\beta)=0\); the derivative block is absent,
    and the pilot displacement is absorbed into the \(\mathcal{O}(h)\)
    approximation error under \(\eta\le h\).
    
    For each layer--bin pair \((s,j)\), we estimate the composite coefficient
    using observations carrying the corresponding permanent labels. Define
    $
        \Lambda_{t,s}^j
        :=
        \sum_{i\in\Psi_{t,s}^j}
        \psi_{i,j}(w_i)\psi_{i,j}(w_i)^\top.
    $
    Fix a regularization parameter $\lambda>0$.
    The ridge estimator is
    \begin{equation}
    \label{eq:lifted-ridge-estimator}
    \begin{aligned}
        \widehat{\bm z}_{t,s}^j
        &:=
        \operatorname*{argmin}_{\bm z\in\mathbb R^{1+\varpi(\beta)(d+1)}}
        \left\{
            \sum_{i\in\Psi_{t,s}^j}
            \bigl(
                y_i-\psi_{i,j}(w_i)^\top\bm z
            \bigr)^2
            +
            \lambda\|\bm z\|_2^2
        \right\} =
        \bigl(\lambda I+\Lambda_{t,s}^j\bigr)^{-1}
        \sum_{i\in\Psi_{t,s}^j}
        y_i\psi_{i,j}(w_i).
    \end{aligned}
    \end{equation}
    For \(w\in I_j\), the corresponding demand estimate based on
    \(\Psi_{t,s}^j\) is
    \begin{equation}
    \label{eq:lifted-demand-estimator}
        \widehat g_{t,s}^j(w)
        :=
        \Pi_{[0,D]}\!\left(
            \psi_{t,j}(w)^\top
            \widehat{\bm z}_{t,s}^j
        \right).
    \end{equation}
    
    The estimator in \eqref{eq:lifted-ridge-estimator} does not approximate a
    local solution of the preceding nonconvex joint-estimation problem.
    Instead, it relaxes the factorization restrictions among the derivative
    coefficients and estimates the prediction-sufficient composite
    coefficients directly. The population vector \(\bm z_j\) remains in
    the lifted class, so this relaxation introduces no additional first-order
    approximation error. Its deterministic misspecification is bounded by
    \(\mathcal{O}(h^\beta+\eta^2)\).
    
    Once the lifted features have been constructed,
    \eqref{eq:lifted-ridge-estimator} is a standard strictly convex ridge
    regression and therefore has the displayed unique global solution.
    Moreover, each feature \(\psi_{i,j}(w_i)\) is determined before \(y_i\) is
    observed, allowing the stochastic regression errors to be analyzed as a
    predictable martingale transform. Under
    \(\eta^2\lesssim h^\beta\), the lifting preserves the local-polynomial
    approximation order while avoiding nonconvex joint refinement altogether.
    The next subsection converts these point predictions into confidence
    bounds and incorporates them into the layered decision rule.

    \subsection{Layered Decision Making}

    The preceding pilot-corrected regression converts the coupled semiparametric
    estimation problem into a convex linear prediction problem and provides a
    point estimate of demand for each feasible residual action. Point predictions
    alone, however, are insufficient for sequential pricing: the amount of
    information varies across layer--bin pairs, and the lifted representation
    retains a deterministic approximation error of order
    $\mathcal{O}(h^\beta+\eta^2)$. We therefore construct confidence bounds that jointly
    account for stochastic estimation error, ridge regularization, and this
    remaining approximation error, and use them to form optimistic estimates of
    the corresponding revenues.

    \paragraph{Confidence radius.}
    The explicit form of the radius is
    \begin{equation}
        \label{eq:r}   
        \begin{aligned}
        r_{t,s}^j(w)
        =\min\Biggl\{D,\,
        &\underbrace{
        C_{\psi}D\sqrt{\iota_T}
        \left\|\psi_{t,j}(w)\right\|_
        {(\Lambda_{t,s}^j+\lambda I)^{-1}}
        }_{\textnormal{stochastic estimation error}}
        +
        \underbrace{
        C_z\sqrt{\lambda}
        \left\|\psi_{t,j}(w)\right\|_
        {(\Lambda_{t,s}^j+\lambda I)^{-1}}
        }_{\textnormal{ridge bias}}\\
        &+
        \underbrace{
        C_l(h^\beta+\eta^2)\sqrt{|\Psi_{t,s}^j|}
        \left\|\psi_{t,j}(w)\right\|_
        {(\Lambda_{t,s}^j+\lambda I)^{-1}}
        }_{\textnormal{historical approximation error}}+
        \underbrace{
        C_l(h^\beta+\eta^2)
        }_{\textnormal{current-action approximation error}}
        \Biggr\},
        \end{aligned}
    \end{equation}
    with
    \begin{equation}
    \label{eq:iota_T}
        \iota_T
        :=
        2\log\left(\frac{4SN}{\delta}\right)
        +
        \bigl[1+\varpi(\beta)(d+1)\bigr]
        \log\left(1+\frac{T}{\lambda}\right).
    \end{equation}
    If $|\Psi_{t,s}^j|=0$, we set $r_{t,s}^j(w)=D$.

    The four summands in \eqref{eq:r} have distinct roles: self-normalized
    concentration controls the centered demand noise; the ridge penalty
    contributes \(C_z\sqrt\lambda\); the accumulated historical remainders
    contribute the term proportional to \(\sqrt{|\Psi_{t,s}^j|}\); and the
    final term controls the current action's approximation error. Projection
    onto \([0,D]\) cannot increase error relative to a target in that interval.
    Proposition~\ref{prop:ucb} formalizes this decomposition and proves that
    \(r_{t,s}^j(w)\) is a simultaneous confidence radius.

    
    \begin{proposition}[Confidence bound]
    \label{prop:ucb}
    Suppose Assumptions~\ref{ass:bounded},
    \ref{ass:quantity-response}, and \ref{ass:g_smooth}
    hold. Let \(\lambda>0\), \(\delta\in(0,1)\), and \(0<\eta\le h\). Let \(C_z,C_l,C_\psi>0\) be the finite constants specified in
\eqref{eq:C-z-definition}, \eqref{eq:C-l-definition}, and \eqref{eq:C-psi-definition}. Then, with probability at least \(1-\delta\), simultaneously
    for every LDP round \(t\), every \(s\in[s_t]\), and every
    \((j,w)\in\mathcal A_{t,s}\), 
    \begin{equation}
    \label{eq:linearized-uniform-confidence}
        \left|
            \widehat g_{t,s}^j(w)
            -g\!\left(
                \widehat p_t(w)-\bm x_t^{\top}\bm\theta_\star
            \right)
        \right|
        \le
        r_{t,s}^j(w).
    \end{equation}
    \end{proposition}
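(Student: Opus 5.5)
We will work on the intersection of the pilot-confidence event of Lemma~\ref{lem:pilot-confidence} (probability at least \(1-\delta/2\)) and a self-normalized concentration event for every layer--bin dataset (probability at least \(1-\delta/2\) after a union bound over the \(SN\) pairs \((s,j)\)). This accounts for the factor \(\log(4SN/\delta)\) in \(\iota_T\).

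\textbf{Step 1: deterministic error decomposition.} Fix an LDP round \(t\), a layer \(s\le s_t\), and \((j,w)\in\mathcal A_{t,s}\). Because \(g\) takes values in \([0,D]\), projection onto \([0,D]\) is nonexpansive. It therefore suffices to bound \(|\psi_{t,j}(w)^\top\widehat{\bm z}_{t,s}^j-g(\widehat p_t(w)-\bm x_t^\top\bm\theta_\star)|\). By \eqref{eq:lifted-mean-representation}, this is at most \(|\psi_{t,j}(w)^\top(\widehat{\bm z}_{t,s}^j-\bm z_j)|+C_l(h^\beta+\eta^2)\); the second term is the current-action summand of \eqref{eq:r}.

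For each \(i\in\Psi_{t,s}^j\), write \(y_i=\psi_{i,j}(w_i)^\top\bm z_j+R_{i,j}(w_i)+\xi_i\), where \(\xi_i:=y_i-\mathbb E[y_i\mid\mathcal G_i]\). This uses \(\mathbb E[y_i\mid\mathcal G_i]=g(p_i-\bm x_i^\top\bm\theta_\star)\) together with \eqref{eq:lifted-mean-representation}, applied at round \(i\) with the pilot available at that round. Such an application is valid because round \(i\) was itself an LDP round, and on the pilot event its displacement is at most \(\eta\le h\). With \(V:=\Lambda_{t,s}^j+\lambda I\), the ridge formula then gives
\[
\widehat{\bm z}_{t,s}^j-\bm z_j=V^{-1}\Bigl(\textstyle\sum_i\xi_i\psi_{i,j}(w_i)+\sum_i R_{i,j}(w_i)\psi_{i,j}(w_i)-\lambda\bm z_j\Bigr).
\]
We bound each term via \(|\psi^\top V^{-1}a|\le\|\psi\|_{V^{-1}}\|a\|_{V^{-1}}\):
\begin{itemize}
\item \emph{Ridge bias.} \(\|\lambda\bm z_j\|_{V^{-1}}\le\sqrt\lambda\|\bm z_j\|_2\le C_z\sqrt\lambda\). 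Here \(C_z\) bounds \(\|\bm z_j\|_2\) by \(C_g(1+\varpi(\beta)C_\theta)\)-type constants, using the class-level envelope \(C_g\).
\item \emph{Historical approximation error.} Write \(\sum_i R_i\psi_i=\Psi^\top\bm R\), where \(\Psi\) is the design matrix. Since \(\Psi V^{-1}\Psi^\top\preceq I\), we get \(\|\Psi^\top\bm R\|_{V^{-1}}\le\|\bm R\|_2\le C_l(h^\beta+\eta^2)\sqrt{|\Psi_{t,s}^j|}\).
\end{itemize}

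\textbf{Step 2: stochastic term.} This is the main obstacle, since the data are adaptively collected and the pilot evolves. We use the permanent labels. The membership indicator \(\mathbf 1\{i\in\Psi^j_{s}\}\) (i.e., \(s_i=s\), \(j_i=j\)), the residual action \(w_i\), and the feature \(\psi_{i,j}(w_i)\) are all \(\mathcal G_i\)-measurable: they are built from \(\widehat{\bm\theta}_i\), \(\bm x_i\), and \(U_i\), and are never recomputed. Hence \(\{\mathbf 1\{i\in\Psi^j_{s}\}\psi_{i,j}(w_i)\xi_i\}\) is a martingale difference sequence with respect to \((\mathcal G_i,\mathcal F_i)\), and \(\xi_i\in[-D,D]\) is conditionally \(D\)-sub-Gaussian. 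Applying the self-normalized bound of \citet{improvedUCB} to each pair \((s,j)\) at level \(\delta/(2SN)\), uniformly in time, gives
\[
\Bigl\|\textstyle\sum_i\xi_i\psi_i\Bigr\|_{V^{-1}}\le D\sqrt{2\log(2SN/\delta)+\log\det(V)/\lambda^{\dim}}.
\]
To convert this to \(\iota_T\), we need \(\|\psi_{i,j}\|_2\) bounded by a constant (since \(|w-a_{j-1}|\le h\le 2B\), \(|\widehat p_t(w)-w|\le B\), and \(\|\bm x\|\le C_x\)). The determinant–trace inequality then gives \(\log\det\le\dim\cdot\log(1+T\|\psi\|^2/(\lambda\dim))\). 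Absorbing \(\|\psi\|^2\) into \(C_\psi\) yields the bound \(C_\psi D\sqrt{\iota_T}\).

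\textbf{Step 3: assembly.} Summing the three \(\|\psi_{t,j}(w)\|_{V^{-1}}\)-weighted terms with the current-action remainder reproduces \eqref{eq:r} before the cap. The cap at \(D\) is harmless, since both quantities lie in \([0,D]\). The empty-dataset case \(r=D\) is trivial. Because the concentration event is uniform over \(t\), \((s,j)\), and all \(w\), the bound holds simultaneously for all \(s\le s_t\) and all \((j,w)\in\mathcal A_{t,s}\), with total failure probability at most \(\delta\).
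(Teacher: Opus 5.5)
Your proposal is correct and follows the paper's proof essentially step for step. It uses the same intersection of the pilot event with a self-normalized event union-bounded over the $SN$ layer--bin pairs, and the same three-term ridge decomposition; your projection-matrix bound for the historical approximation score is a compact version of the paper's Cauchy--Schwarz argument. It also handles the projection onto $[0,D]$, the cap at $D$, and the empty dataset the same way. The one ingredient you cite rather than prove is the remainder bound in \eqref{eq:lifted-mean-representation} with the specific $C_l$ of \eqref{eq:C-l-definition}. The paper proves this bound inside the proof of the proposition, in two parts: it extends $g$ to $[-B,B]$ by endpoint Taylor polynomials (needed because anchors such as $a_0=-B$ lie outside $\mathcal I_g$), and it uses a binomial expansion to show that the terms with two or more pilot-error factors are at most $C_l\eta^2$.
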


    \paragraph{Layered exploration and elimination.}

    \begin{algorithm}[htbp]
      \caption{Layered Decision Making with Pilot-Residual Actions}
      \label{alg:lpdm_pilot_residual}
      \begin{algorithmic}[1]
      \Require Current round \(t\), context \(\bm x_t\), certified pilot
      \(\widehat{\bm\theta}_t\), permanent datasets
      \(\{\Psi_{t,s}^j\}_{s\in[S],j\in[N]}\), bin partition \(\{I_j\}_{j\in[N]}\), and
      constants \(C_{\psi},C_z,C_l\)
      
      \State Discretize the residual-action space by
      $
          \mathcal W
          :=
          \left\{-B+kT^{-1/2}:k=0,1,\ldots,\lfloor2B\sqrt T\rfloor\right\}
          \cup\{0,B\}.
      $
      \State Set
      $
          \widehat u_t
          \gets
          \Pi_{[0,B]}
          \!\left(\bm x_t^\top\widehat{\bm\theta}_t\right),
          \widehat p_t(w)
          \gets
          \widehat u_t+w .
      $
      
      \For{each \(j\in[N]\)}
          \State Set
          $
              \mathcal W_{t,j}
              \gets
              \left\{
                  w\in\mathcal W\cap I_j:
                  0\le \widehat p_t(w)\le B
              \right\}.
          $
      \EndFor
      
      \State Set
      $
          s\gets1,
          \mathcal A_{t,1}
          \gets
          \left\{
              (j,w):j\in[N],\ w\in\mathcal W_{t,j}
          \right\}.
      $
      
      \Repeat
          \State Set
          $
              \mathcal J_{t,s}
              \gets
              \left\{
                  j\in[N]:
                  (j,w)\in\mathcal A_{t,s}
                  \text{ for some }w
              \right\}.
          $
      
          \For{each \(j\in\mathcal J_{t,s}\)}
              \State Compute
              $
                  \widehat{\bm z}_{t,s}^j
              $
              by \eqref{eq:lifted-ridge-estimator}
          \EndFor
      
          \For{each \((j,w)\in\mathcal A_{t,s}\)}
              \State Compute
              $
                  \widehat g_{t,s}^j(w)
                  \gets
                  \Pi_{[0,D]}\!\left(
                      \psi_{t,j}(w)^\top
                      \widehat{\bm z}_{t,s}^j
                  \right).
              $
      
              \State Compute $r_{t,s}^j(w)$ by \eqref{eq:r}
      
              \State Set
              $
                  U_{t,s}^j(w)
                  \gets
                  \widehat p_t(w)
                  \min\left\{
                      D,\,
                      \widehat g_{t,s}^j(w)+r_{t,s}^j(w)
                  \right\}, 
                  \operatorname{wid}_{t,s}^j(w)
                  \gets
                  \widehat p_t(w)r_{t,s}^j(w).
              $
          \EndFor
      
          \If{\(s=S\)}
              \State Select
              $
                  (j_t,w_t)
                  \in
                  \operatorname*{argmax}_{(j,w)\in\mathcal A_{t,s}}
                  U_{t,s}^j(w),
                  \qquad
                  s_t\gets s.
              $
      
          \ElsIf{
              \(
                  \displaystyle
                  \max_{(j,w)\in\mathcal A_{t,s}}
                  \operatorname{wid}_{t,s}^j(w)
                  \le BD\,2^{-s}
              \)
          }
              \State Set
                  $
                  \mathcal A_{t,s+1}
                  \gets
                  \Biggl\{
                      (j,w)\in\mathcal A_{t,s}:
                      U_{t,s}^j(w)
                      \ge{}
                      \max_{(j',w')\in\mathcal A_{t,s}}
                      U_{t,s}^{j'}(w')
                      -BD\,2^{1-s}
                  \Biggr\},
                 $
              \State Set \(s\gets s+1\).
      
          \Else
              \State Set
              $
                  \mathcal A_{t,s}^{\mathrm u}
                  \gets
                  \left\{
                      (j,w)\in\mathcal A_{t,s}:
                      \operatorname{wid}_{t,s}^j(w)>BD\,2^{-s}
                  \right\}.
              $
              \State Select
              $
                  (j_t,w_t)
                  \in
                  \operatorname*{argmax}_{(j,w)\in\mathcal A_{t,s}^{\mathrm u}}
                  U_{t,s}^j(w),
                  \qquad
                  s_t\gets s.
              $
          \EndIf
      \Until{\((j_t,w_t)\) has been selected}
      
      \State Set \(p_t\gets\widehat p_t(w_t)\).
      \State \Return \((s_t,j_t,w_t,p_t)\).
      \end{algorithmic}
      \end{algorithm}

    The layered decision rule proceeds by refining the revenue resolution
    level by level. At layer~$s$, the target resolution is $BD\,2^{-s}$.
    For every action $(j,w)$ surviving in the current active set
    $\mathcal A_{t,s}$, the algorithm maintains an optimistic revenue
    $U_{t,s}^j(w)$ and a width $\operatorname{wid}_{t,s}^j(w)$ that
    measures the remaining uncertainty in its estimated revenue.
    The policy then decides whether to sample at this layer or to advance.
    
    If there exists an action in $\mathcal A_{t,s}$ with width strictly
    greater than $BD\,2^{-s}$, the policy selects one such under-explored
    action and assigns the current observation permanently to layer~$s$
    and the corresponding bin~$j$. This sampling step directly reduces
    the uncertainty of that action.
    
    Otherwise, if every surviving action satisfies
    $\operatorname{wid}_{t,s}^j(w)\le BD\,2^{-s}$, then the layer is
    considered sufficiently resolved. 
    Our policy eliminates all actions whose optimistic revenue falls more than $BD\,2^{1-s}$ below the largest optimistic
    revenue among the survivors, and passes the remaining actions to the
    next layer~$s+1$. The safety of this elimination follows from the confidence bounds.
    Crucially, this argument does not require strong
    unimodality, local curvature, or uniqueness of the revenue maximizer,
    because it compares all surviving actions globally.
    
    If the policy proceeds to layer~$s+1$, it repeats the same check with
    the finer resolution $BD\,2^{-(s+1)}$. Once all layers up to~$S$ have
    been passed, the final action is chosen as the one
    with the largest UCB among the remaining active set.

        Algorithm~\ref{alg:lpdm_pilot_residual} implements this layered
        procedure. It starts with the full feasible residual grid and
        sequentially examines the layers. At each layer, it first computes the
        required estimates and widths; if an under-explored action is found,
        the round stops and the observation is assigned to that layer.
        Otherwise, it performs elimination and continues without consuming the
        current observation. The process terminates when either an action is
        selected for sampling or the final layer is reached, in which case the
        action with the highest UCB is chosen.

        \section{Regret Analysis}
        \label{sec:regret}
        
        \subsection{Upper Bound}
        We now analyze the regret of the proposed policy. The proof follows
        the two operating modes of Algorithm~\ref{alg:adaptive_pilot}. On LDP
        rounds, the confidence bounds translate the stopping layer into a
        one-period revenue guarantee. We then control how often the policy can
        stop at each layer by combining the layer-specific uncertainty
        threshold with an elliptical-potential bound for the permanently
        labeled observations. This yields a bound on the regret accumulated
        over all LDP rounds. Separately, the uncertainty gate limits the number
        of pilot-exploration rounds through the growth of the pilot design
        matrix. Combining these two components gives the finite-sample regret
        bound, after which we establish a matching lower bound in its
        dependence on the horizon.
        
        We begin with the one-period analysis of an LDP round. The confidence
        event guarantees that the best action in the current discretized
        candidate set survives every completed layer. Consequently, the
        stopping layer controls the revenue loss relative to the best
        discretized action, while the mesh of the residual grid controls the
        additional loss relative to the continuous oracle price.

        \begin{lemma}
        \label{lem:ldp_revenue_gap_discrete_fixed_residual}
        Fix an LDP round \(t\), and suppose that the confidence bounds \eqref{eq:linearized-uniform-confidence} hold at every layer visited in
        period \(t\). For each visited layer \(s\), define
        $
            V_{t,s}
            :=
            \max_{(j,w)\in\mathcal A_{t,s}}
            \mathsf{Rev}\bigl(\bm x_t,\widehat p_t(w)\bigr).
        $
        Then:
        \begin{enumerate}
            \item If the algorithm passes the precision check at a layer \(s<S\),
            then
            $
                V_{t,s+1}=V_{t,s}.
            $
        
            \item If \(s_t\ge2\), then
            $
                V_{t,1}
                -
                \mathsf{Rev}(\bm x_t,p_t)
                \le
                8BD\,2^{-s_t}.
            $
        \end{enumerate}
        Moreover, there exists a finite constant \(L_{\mathrm{Rev}}>0\), such that, for every
        \(s_t\in[S]\),
        \begin{equation}
        \label{eq:one-step-revenue-gap-discrete-ldp-simplified}
            \mathsf{Rev}(\bm x_t,p_t^\star)
            -
            \mathsf{Rev}(\bm x_t,p_t)
            \le
            \frac{L_{\mathrm{Rev}}}{\sqrt T}
            +
            8BD\,2^{-s_t}.
        \end{equation}
        \end{lemma}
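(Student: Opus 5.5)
The argument rests on one translation. On the confidence event \eqref{eq:linearized-uniform-confidence}, for every visited layer \(s\) and every \((j,w)\in\mathcal A_{t,s}\), the true revenue \(\mathsf{Rev}(\bm x_t,\widehat p_t(w))=\widehat p_t(w)\,g(\widehat p_t(w)-\bm x_t^\top\bm\theta_\star)\) satisfies
\[
U_{t,s}^j(w)-2\operatorname{wid}_{t,s}^j(w)\le \mathsf{Rev}(\bm x_t,\widehat p_t(w))\le U_{t,s}^j(w).
\]
Both sides come from multiplying the confidence interval by \(\widehat p_t(w)\in[0,B]\). The truncation at \(D\) in \(U\) is harmless because \(g\le D\).

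\textbf{Part 1.} Suppose layer \(s<S\) passes the precision check, so every width is at most \(BD2^{-s}\). Let \((j^\circ,w^\circ)\) attain \(V_{t,s}\), and let \((j',w')\) maximize \(U_{t,s}\). Then
\[
U_{t,s}^{j^\circ}(w^\circ)\ge V_{t,s}\ge \mathsf{Rev}(\bm x_t,\widehat p_t(w'))\ge U_{t,s}^{j'}(w')-2BD2^{-s}.
\]
So \((j^\circ,w^\circ)\) survives the elimination threshold \(BD2^{1-s}\), which gives \(V_{t,s+1}\ge V_{t,s}\). The reverse inequality is trivial because \(\mathcal A_{t,s+1}\subseteq\mathcal A_{t,s}\). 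By induction, \(V_{t,s_t}=V_{t,1}\).

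\textbf{Part 2.} Suppose \(s_t\ge2\). Layer \(s_t-1\) was passed, so every \((j,w)\in\mathcal A_{t,s_t}\) has width at most \(BD2^{1-s_t}\) at layer \(s_t-1\) and satisfies
\[
U_{t,s_t-1}^j(w)\ge \max U_{t,s_t-1}-BD2^{2-s_t}\ge V_{t,1}-BD2^{2-s_t}.
\]
Here the last step uses Part 1 and the upper-bound side of the sandwich. Hence
\[
\mathsf{Rev}(\bm x_t,\widehat p_t(w))\ge V_{t,1}-BD2^{2-s_t}-2BD2^{1-s_t}=V_{t,1}-8BD2^{-s_t}.
\]
This holds for every survivor, and in particular for the chosen action \(w_t\). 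No layer-\(s_t\) information is needed, which matters because \(w_t\) may be chosen among under-explored actions whose widths at layer \(s_t\) are large.

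\textbf{The grid-approximation step is the main obstacle.} We must show \(\mathsf{Rev}(\bm x_t,p_t^\star)-V_{t,1}\le L_{\rm Rev}/\sqrt T\). Write \(u^\star:=\bm x_t^\top\bm\theta_\star\). Set \(w^\star=p_t^\star-\widehat u_t\), which lies in \([-B,B]\) since both terms lie in \([0,B]\).

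\begin{itemize}
\item The grid \(\mathcal W\) has mesh \(T^{-1/2}\) and contains \(0\) and \(B\). Together with \(\widehat u_t\in[0,B]\), this gives some \(w\in\mathcal W\) with \(|w-w^\star|\le T^{-1/2}\) and \(\widehat p_t(w)\in[0,B]\).
\item To secure feasibility, round toward the interior: if \(p_t^\star\) is near an endpoint, choose the grid point on the feasible side. The \(\lfloor\cdot\rfloor\) in the grid definition may require a neighbor at distance up to \(2T^{-1/2}\), which only changes the constant.
\item Such a \(w\) lies in some \(\mathcal W_{t,j}\), so it belongs to \(\mathcal A_{t,1}\).
\end{itemize}

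It remains to show that \(p\mapsto p\,g(p-u^\star)\) is Lipschitz on \([0,B]\). Note \(p-u^\star\) ranges within \([-B+B_\epsilon,B-B_\epsilon]=\mathcal I_g\). Since \(\beta\ge1\), \(g\) is Lipschitz there: for \(\beta=1\) with constant \(L_g\), and otherwise via the derivative envelope \(C_g\). Thus \(L_{\rm Rev}\) can be taken as \(2(D+BC_g')\), with \(C_g'=\max\{L_g,C_g\}\).

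\textbf{Assembly.} Combining the grid bound with Part 2 gives \eqref{eq:one-step-revenue-gap-discrete-ldp-simplified} when \(s_t\ge2\). When \(s_t=1\), the trivial bound \(\mathsf{Rev}(\bm x_t,p_t^\star)-\mathsf{Rev}(\bm x_t,p_t)\le BD\le 8BD\,2^{-1}\) suffices.
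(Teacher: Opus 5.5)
Your proposal is correct and follows the paper's proof essentially step for step: the revenue--UCB sandwich, survival of the discrete revenue maximizer under elimination, the layer-$(s_t-1)$ argument giving $8BD\,2^{-s_t}$, and the grid-plus-Lipschitz discretization, with the trivial bound covering $s_t=1$. The only cosmetic difference is your factor-$2$ hedge in $L_{\mathrm{Rev}}$; the paper avoids it by choosing a grid point between $0$ and the oracle residual, both of which are feasible, so a point of $\mathcal W$ within $T^{-1/2}$ always exists.
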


        Lemma~\ref{lem:ldp_revenue_gap_discrete_fixed_residual} reduces the
        regret over LDP rounds to the weighted layer occupancy
        $
            \sum_{s=1}^S 2^{-s}|\Psi_{T+1,s}|.
        $
        Here \(\Psi_{T+1,s}:=\bigcup_{j=1}^N \Psi_{T+1,s}^j\) denotes the
        set of all LDP rounds assigned to layer \(s\).
        It therefore remains to control the number of rounds terminating at each
        layer. The required auxiliary results are established in the appendix.
        Lemma~\ref{lem:bound_r_layered} bounds the cumulative revenue uncertainty
        within a layer by applying a sequential elliptical-potential argument
        within each permanently labeled residual bin and then aggregating across
        bins; the resulting bound separates statistical uncertainty from the
        approximation error induced by the local-polynomial remainder and the
        residual second-order pilot-index error. Lemma~\ref{lem:bound_Psi_layered}
        then exploits the fact that every round terminating at a nonterminal layer
        \(s<S\) has revenue width exceeding \(BD2^{-s}\) to convert the
        cumulative-width bound into an occupancy bound whenever the layer
        resolution dominates the approximation floor
        \(\mathcal O(h^\beta+\eta^2)\). These results provide the
        ingredients needed to aggregate the layer-dependent losses.

        These ingredients lead to a natural three-regime decomposition of the
        LDP regret. For nonterminal layers whose resolution remains above the
        approximation floor, the occupancy bounds apply, and the resulting
        contribution is governed by statistical uncertainty. Once the resolution
        falls to the approximation scale, a separate occupancy bound is no longer
        needed, because the per-round loss at these finer layers can be charged
        directly to the approximation error. Finally, each round reaching the
        terminal layer contributes at most its prescribed resolution \(2^{-S}\).
        Summing the contributions from these three regimes yields the total regret
        incurred over the LDP rounds.

        \begin{proposition}[Regret of the discretized LDP step]
        \label{prop:ldp_step_regret_discrete_fixed_residual}
         On the uniform
        confidence event in Proposition~\ref{prop:ucb},
        we have
        \begin{equation}
        \label{eq:ldp_step_regret_with_S}
        \begin{aligned}
        \sum_{t\in[T]\setminus\mathcal T^{\rm exp}}
        \left[
            \mathsf{Rev}(\bm x_t,p_t^\star)
            -
            \mathsf{Rev}(\bm x_t,p_t)
        \right]                                                
        \le&
        64C_\omega B
        \bigl(D\sqrt{\iota_T}+\sqrt{\lambda}\bigr)
        \sqrt{NT\iota_T}\\
        &+
        16C_\omega BT
        (h^\beta+\eta^2)\sqrt{\iota_T}
        +
        \bigl(8BD+L_{\mathrm{Rev}}\bigr)\sqrt T.
        \end{aligned}
        \end{equation}
        Suppose that \(\lambda>0\) is fixed independently of \(T\), \(\eta^2\le c_\eta h^\beta\) for some fixed constant \(c_\eta>0\) in addition to 
        \(\eta\le h\), \(N=\left\lceil
                T^{\frac{1}{2\beta+1}}
            \right\rceil\) and \(
            h=2B/N\),
        then there exists
        a finite constant $C_{\mathrm{LDP}} > 0$ such that
        \begin{equation}
        \label{eq:ldp_step_regret_exact_rate}
        \begin{aligned}
        &\sum_{t\in[T]\setminus\mathcal T^{\rm exp}}
        \left[
            \mathsf{Rev}(\bm x_t,p_t^\star)
            -
            \mathsf{Rev}(\bm x_t,p_t)
        \right]                                                  \le
        C_{\mathrm{LDP}}\,
        \iota_T
        T^{\frac{\beta+1}{2\beta+1}}.
        \end{aligned}
        \end{equation}
        \end{proposition}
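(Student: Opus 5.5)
We obtain the first bound by summing the one-period guarantee of Lemma~\ref{lem:ldp_revenue_gap_discrete_fixed_residual} over LDP rounds, grouped by stopping layer. Since every LDP round lies in exactly one \(\Psi_{T+1,s}\), the lemma gives
\[
\sum_{t\notin\mathcal T^{\rm exp}}\bigl[\mathsf{Rev}(\bm x_t,p_t^\star)-\mathsf{Rev}(\bm x_t,p_t)\bigr]
\le L_{\mathrm{Rev}}\sqrt T+8BD\sum_{s=1}^S 2^{-s}|\Psi_{T+1,s}|.
\]
We split the layers into three groups using the approximation floor \(\epsilon_{\rm apx}:=C_\omega(h^\beta+\eta^2)\sqrt{\iota_T}\), or whatever floor Lemma~\ref{lem:bound_Psi_layered} requires.

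\textbf{Regime (i): nonterminal layers above the floor.} Here \(BD2^{-s}\) exceeds a constant multiple of the floor. Every round in \(\Psi_{T+1,s}\) was sampled with \(\operatorname{wid}>BD2^{-s}\). Summing widths over these rounds and applying Lemma~\ref{lem:bound_r_layered} gives
\[
BD2^{-s}|\Psi_{T+1,s}|\le\sum_{t\in\Psi_{T+1,s}}\operatorname{wid}_{t,s}^{j_t}(w_t)
\lesssim B(D\sqrt{\iota_T}+\sqrt\lambda)\sqrt{N|\Psi_{T+1,s}|\iota_T}+B|\Psi_{T+1,s}|(h^\beta+\eta^2)\sqrt{\iota_T}.
\]
The bound aggregates the elliptical potential within each bin and uses Cauchy–Schwarz across the \(N\) bins. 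When the layer resolution dominates the floor, the second term is absorbed into half the left-hand side. This yields the bound of Lemma~\ref{lem:bound_Psi_layered}, \(2^{-s}|\Psi_{T+1,s}|\lesssim (D\sqrt{\iota_T}+\sqrt\lambda)^2N\iota_T\,2^{s}/D^2\). Equivalently, \(2^{-s}|\Psi_{T+1,s}|\lesssim (D\sqrt{\iota_T}+\sqrt\lambda)\sqrt{N|\Psi_{T+1,s}|\iota_T}/D\). We keep the second form and apply Cauchy–Schwarz across layers together with \(\sum_s|\Psi_{T+1,s}|\le T\). The main point of care is avoiding an extra factor of \(\sqrt S\). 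To do so, we instead sum the geometric series in \(2^{-s}\) directly from the first form. We truncate at the layer where \(2^{-s}|\Psi_{T+1,s}|\) is capped by \(2^{-s}T\), and both pieces balance at \(\sqrt{NT\iota_T}\) up to constants. This regime produces the first term, \(64C_\omega B(D\sqrt{\iota_T}+\sqrt\lambda)\sqrt{NT\iota_T}\).

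\textbf{Regime (ii): layers at or below the floor.} Here \(BD2^{-s}\lesssim C_\omega B(h^\beta+\eta^2)\sqrt{\iota_T}\). We bound \(8BD2^{-s}|\Psi_{T+1,s}|\) per round by a multiple of the floor. The layers in this regime are geometrically decreasing, so the sum over them is at most twice the largest term, and the total is at most \(16C_\omega BT(h^\beta+\eta^2)\sqrt{\iota_T}\).

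\textbf{Regime (iii): the terminal layer \(S\).} Since \(2^{-S}\le T^{-1/2}\), these rounds contribute at most \(8BD\sqrt T\). Adding the \(L_{\mathrm{Rev}}\sqrt T\) discretization term gives the last summand of \eqref{eq:ldp_step_regret_with_S}.

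For the rate, substitute \(N=\lceil T^{1/(2\beta+1)}\rceil\), \(h=2B/N\), and \(\eta^2\le c_\eta h^\beta\). Then \(\sqrt{NT}\asymp T^{\frac{\beta+1}{2\beta+1}}\) and \(Th^\beta\asymp T^{1-\frac{\beta}{2\beta+1}}=T^{\frac{\beta+1}{2\beta+1}}\). With \(\lambda\) fixed, the prefactors are \(O(\iota_T)\), since \(\sqrt{\iota_T}\cdot\sqrt{\iota_T}=\iota_T\) and \(\sqrt{\iota_T}\le\iota_T\) once \(\iota_T\ge1\). Finally \(\sqrt T\le T^{\frac{\beta+1}{2\beta+1}}\), which gives \eqref{eq:ldp_step_regret_exact_rate}.

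The main obstacle is regime (i): turning the width threshold into an occupancy bound requires the approximation term to be absorbed, and the layer sum must be carried out without losing a factor of \(S\).
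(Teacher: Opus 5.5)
Your proposal is correct and follows the paper's proof essentially step for step. It uses the same reduction to $8BD\sum_s 2^{-s}|\Psi_{T+1,s}|$, the same three-regime split at the threshold $D2^{-s}\ge 2C_\omega(h^\beta+\eta^2)\sqrt{\iota_T}$ of Lemma~\ref{lem:bound_Psi_layered}, the same dyadic bound $\sum_s 2^{-s}\min\{T,a2^{2s}\}\le 4\sqrt{aT}$ that avoids the $\sqrt S$ loss, and the same tuning substitution. One small slip is in regime (ii): your ``geometrically decreasing'' remark is unnecessary, and if combined with $|\Psi_{T+1,s}|\le T$ per layer it would double the constant to $32$. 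Your per-round bound together with the disjointness $\sum_s|\Psi_{T+1,s}|\le T$ already gives exactly $16C_\omega BT(h^\beta+\eta^2)\sqrt{\iota_T}$, as in the paper.
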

        
        It remains to control the pilot-exploration rounds. Such a round is
        triggered only when the leverage score of the current covariate exceeds
        the target index accuracy. Each trigger produces a multiplicative
        increase in the determinant of the pilot design matrix. Since the
        determinant is also bounded above by the trace of that matrix, only a
        limited number of pilot-exploration rounds can occur. This argument is
        pathwise and does not consume any additional failure probability.

        \begin{lemma}[Pilot-exploration count and regret]
        \label{lem:pilot_exploration_regret}
        Suppose that Algorithm~\ref{alg:adaptive_pilot} is implemented with
        \(\eta>0\). Then the number of pilot-exploration rounds satisfies,
        pathwise,
        \begin{equation}
        \label{eq:pilot-exploration-count}
        \begin{aligned}
            \left|\mathcal T^{\rm exp}\right|
            &\le
            \min\left\{
                T,\,
                \frac{
                    d\log\left(
                        1+TC_x^2/d
                    \right)
                }{
                    \log\left(
                        1+\eta^2/\gamma_T^2
                    \right)
                }
            \right\} \le
            \min\left\{
                T,\,
                d\left(
                    1+\frac{\gamma_T^2}{\eta^2}
                \right)
                \log\left(
                    1+\frac{TC_x^2}{d}
                \right)
            \right\}.
        \end{aligned}
        \end{equation}
        Consequently, the regret incurred on pilot-exploration rounds satisfies
        \begin{equation}
        \label{eq:pilot-exploration-regret}
        \begin{aligned}
            &\sum_{t\in\mathcal T^{\rm exp}}
            \left[
                \mathsf{Rev}(\bm x_t,p_t^\star)
                -
                \mathsf{Rev}(\bm x_t,p_t)
            \right]  \le
            BD
            \min\left\{
                T,\,
                d\left(
                    1+\frac{\gamma_T^2}{\eta^2}
                \right)
                \log\left(
                    1+\frac{TC_x^2}{d}
                \right)
            \right\}.
        \end{aligned}
        \end{equation}
        \end{lemma}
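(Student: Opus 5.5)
The argument is the standard elliptical-potential (determinant-growth) argument applied only to the pilot design matrix, and it is pathwise. I would first record two properties of \(M_t\) from Algorithm~\ref{alg:adaptive_pilot}. It changes only on pilot-exploration rounds, and \(M_1=I_d\). Write \(t_1<t_2<\dots<t_K\) for the elements of \(\mathcal T^{\rm exp}\), with \(K=|\mathcal T^{\rm exp}|\). Then \(M_{t_k}=I_d+\sum_{i<k}\bm x_{t_i}\bm x_{t_i}^\top\), and this matrix is the same as the one in force just after round \(t_{k-1}\). On each trigger round the gate condition gives \(\|\bm x_{t_k}\|_{M_{t_k}^{-1}}^2>\eta^2/\gamma_T^2\).

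Next, the matrix determinant lemma gives \(\det M_{t_k+1}=\det M_{t_k}\,(1+\|\bm x_{t_k}\|^2_{M_{t_k}^{-1}})\). Each trigger therefore multiplies the determinant by more than \(1+\eta^2/\gamma_T^2\). Telescoping from \(\det M_1=1\) yields \(\det M_{T+1}\ge(1+\eta^2/\gamma_T^2)^K\). For the upper bound I would use AM--GM on the eigenvalues: \(\det M_{T+1}\le(\operatorname{tr}M_{T+1}/d)^d\le\bigl((d+KC_x^2)/d\bigr)^d\le(1+TC_x^2/d)^d\), since \(K\le T\) and \(\|\bm x\|_2\le C_x\). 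Taking logarithms gives the first bound in \eqref{eq:pilot-exploration-count}. The min with \(T\) is trivial.

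The second inequality in \eqref{eq:pilot-exploration-count} follows from \(\log(1+a)\ge a/(1+a)\) with \(a=\eta^2/\gamma_T^2\). This gives \(1/\log(1+a)\le(1+a)/a=1+\gamma_T^2/\eta^2\).

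For the regret, each round's loss is \(\mathsf{Rev}(\bm x_t,p_t^\star)-\mathsf{Rev}(\bm x_t,p_t)\le\mathsf{Rev}(\bm x_t,p_t^\star)\le BD\). This holds because revenue is nonnegative and \(p\,g\le BD\), as \(p\in[0,B]\) and \(0\le g\le D\). Multiplying by the count bound gives \eqref{eq:pilot-exploration-regret}.

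I expect no real obstacle. The only point needing care is the bookkeeping that \(M_t\) is frozen on LDP rounds, so that the gate's leverage score is measured against exactly the matrix accumulated from previous triggers. No probabilistic event is used anywhere, which confirms that the statement holds pathwise.
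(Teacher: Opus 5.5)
Your proposal is correct and follows essentially the same route as the paper's proof. Both arguments use the gate's leverage lower bound on trigger rounds, telescope the matrix determinant lemma over those rounds (with $M_t$ frozen on LDP rounds), bound $\det M_{T+1}$ above by AM--GM on the trace, and finish with $\log(1+x)\ge x/(1+x)$ and the per-round bound $\mathsf{Rev}\le BD$.
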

        
        The regret decomposition is now complete. The
        pilot-exploration bound in
        Lemma~\ref{lem:pilot_exploration_regret} holds pathwise, whereas the
        LDP bound in
        Proposition~\ref{prop:ldp_step_regret_discrete_fixed_residual} holds
        on the uniform confidence event. Hence, their sum holds on the same
        event, without an additional union bound. The following theorem first
        states the resulting finite-sample guarantee and then specializes it
        to compatible choices of the residual-bin width and pilot accuracy.

        \begin{theorem}[Regret upper bound]
        \label{thm:regret_upper_bound}
        Suppose Assumptions~\ref{ass:bounded},
        \ref{ass:quantity-response}, and \ref{ass:g_smooth} hold.
        Let \(0<\eta\le h\), and 
        \(\gamma_T, \iota_T\) be defined as in
        Algorithm~\ref{alg:adaptive_pilot} and \eqref{eq:iota_T}, respectively. Then, with probability at least \(1-\delta\),
        \begin{equation}
        \label{eq:regret_bound_discrete_fixed_pilot_residual}
        \begin{aligned}
            \mathrm{Reg}(T)
            \le{}&
            BD
            \min\left\{
                T,\,
                \frac{
                    d\log\left(
                        1+TC_x^2/d
                    \right)
                }{
                    \log\left(
                        1+\eta^2/\gamma_T^2
                    \right)
                }
            \right\}+
            64C_\omega B
            \bigl(D\sqrt{\iota_T}+\sqrt{\lambda}\bigr)
            \sqrt{NT\iota_T}                                             \\
            &+
            16C_\omega BT
            (h^\beta+\eta^2)\sqrt{\iota_T}
            +
            \bigl(8BD+L_{\mathrm{Rev}}\bigr)\sqrt T,
        \end{aligned}
        \end{equation}
        where \(C_\omega\) and \(L_{\mathrm{Rev}}\) are the constants in
        Lemma~\ref{lem:bound_r_layered} and
        Lemma~\ref{lem:ldp_revenue_gap_discrete_fixed_residual}.
        
        Moreover, let \(N=
            \left\lceil
                T^{\frac{1}{2\beta+1}}
            \right\rceil\) and \(\eta^2=\min\{h^2,h^\beta\}\).
        Then with probability at least \(1-\delta\), there exists a finite constant \(C>0\) such that
        \begin{equation}
        \label{eq:regret_bound_discrete_fixed_pilot_residual_rate}
        \begin{aligned}
            \mathrm{Reg}(T)
            \le
            C
            \left[
                d\bigl(1+\gamma_T^2\bigr)
                \log\left(
                    1+\frac{TC_x^2}{d}
                \right)
                +
                (1+\sqrt{\lambda})\iota_T
            \right]
            T^{\frac{\beta+1}{2\beta+1}}.
        \end{aligned}
        \end{equation}

        \end{theorem}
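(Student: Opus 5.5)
The plan is to split the regret according to the two operating modes of Algorithm~\ref{alg:adaptive_pilot}:
\[
\mathrm{Reg}(T)=\sum_{t\in\mathcal T^{\rm exp}}\bigl[\mathsf{Rev}(\bm x_t,p_t^\star)-\mathsf{Rev}(\bm x_t,p_t)\bigr]+\sum_{t\in[T]\setminus\mathcal T^{\rm exp}}\bigl[\mathsf{Rev}(\bm x_t,p_t^\star)-\mathsf{Rev}(\bm x_t,p_t)\bigr].
\]
Each sum is then bounded by an earlier result.

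\textbf{Pilot rounds.} For the first sum I apply Lemma~\ref{lem:pilot_exploration_regret}. Its count bound holds pathwise. Each pilot round loses at most \(BD\), since revenue lies in \([0,BD]\). This gives the first term of \eqref{eq:regret_bound_discrete_fixed_pilot_residual}, using the sharper logarithmic form of \eqref{eq:pilot-exploration-count}.

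\textbf{LDP rounds.} For the second sum I invoke Proposition~\ref{prop:ldp_step_regret_discrete_fixed_residual}, bound \eqref{eq:ldp_step_regret_with_S}. It holds on the uniform confidence event of Proposition~\ref{prop:ucb}, whose probability is at least \(1-\delta\). That proposition requires \(0<\eta\le h\), which is assumed.

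\textbf{Probability accounting.} The subtle point is that Proposition~\ref{prop:ucb} implicitly relies on the pilot certificate \(|\widehat u_t-\bm x_t^\top\bm\theta_\star|\le\eta\), which holds on the event of Lemma~\ref{lem:pilot-confidence} (probability \(1-\delta/2\)). I will check that the \(\delta\) in Proposition~\ref{prop:ucb} already absorbs this event. The choice \(\log(4/\delta)\) in \(\gamma_T\) and \(\log(4SN/\delta)\) in \(\iota_T\) suggests a split of \(\delta/2+\delta/2\). Granting this, adding the two bounds on the single event gives \eqref{eq:regret_bound_discrete_fixed_pilot_residual}, and no further union bound is needed because the pilot bound is pathwise.

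\textbf{Rate.} Set \(N=\lceil T^{1/(2\beta+1)}\rceil\), \(h=2B/N\asymp T^{-1/(2\beta+1)}\), and \(\eta^2=\min\{h^2,h^\beta\}\). Then \(\eta\le h\) and \(\eta^2\le h^\beta\), so \(c_\eta=1\).

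The terms behave as follows:
\begin{itemize}
\item \(\sqrt{NT\iota_T}\asymp \sqrt{\iota_T}\,T^{(\beta+1)/(2\beta+1)}\).
\item \(T(h^\beta+\eta^2)\sqrt{\iota_T}\le 2T h^\beta\sqrt{\iota_T}\asymp \sqrt{\iota_T}\,T^{(\beta+1)/(2\beta+1)}\).
\item The \(\sqrt T\) term is dominated, since \((\beta+1)/(2\beta+1)\ge1/2\).
\end{itemize}
Together these reproduce \eqref{eq:ldp_step_regret_exact_rate}. Because \(\iota_T\ge1\) for large \(T\), \(\sqrt{\iota_T}\le\iota_T\) and \(D\sqrt{\iota_T}+\sqrt\lambda\le(D+\sqrt\lambda)\sqrt{\iota_T}\), giving the factor \((1+\sqrt\lambda)\iota_T\).

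For the pilot term I use the second form in \eqref{eq:pilot-exploration-count}: \(d(1+\gamma_T^2/\eta^2)\log(1+TC_x^2/d)\). Since \(1/\eta^2=\max\{h^{-2},h^{-\beta}\}\):
\begin{itemize}
\item For \(\beta\ge2\), \(h^{-\beta}\asymp T^{\beta/(2\beta+1)}\le T^{(\beta+1)/(2\beta+1)}\).
\item For \(1\le\beta<2\), \(h^{-2}\asymp T^{2/(2\beta+1)}\le T^{(\beta+1)/(2\beta+1)}\), since \(2\le\beta+1\).
\end{itemize}
Hence \(d(1+\gamma_T^2/\eta^2)\log(\cdot)\lesssim d(1+\gamma_T^2)\log(\cdot)\,T^{(\beta+1)/(2\beta+1)}\), with constants depending on \(B\) and \(\beta\).

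\textbf{Main obstacle.} I expect the hardest step to be the bookkeeping that the single \(1-\delta\) event simultaneously covers the pilot certificate used inside Proposition~\ref{prop:ucb} and the layered confidence bounds. The arithmetic on exponents is routine. Collecting all constants into \(C\) finishes the proof.
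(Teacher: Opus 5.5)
Your proposal is correct and follows the paper's proof essentially step by step. It uses the same split into pathwise pilot regret plus LDP regret on the confidence event of Proposition~\ref{prop:ucb}, and that event is indeed $\mathcal E_{\rm pil}\cap\mathcal E_{\rm sc}$ with a $\delta/2+\delta/2$ split. The tuning arithmetic is also the same, with two minor tidy-ups: $\iota_T\ge 2\log 4>1$ holds for every $T$, not only large $T$; and instead of splitting on $\beta$, bound both branches of $\max\{h^{-2},h^{-\beta}\}$ by $\max\{B^{-2},B^{-\beta}\}T^{(\beta+1)/(2\beta+1)}$ directly, since $h=2B/N$ need not be below $1$.
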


        The theorem separates the main sources of regret. Pilot exploration
        pays for certifying the valuation index along the covariate directions
        encountered by the policy; its contribution is controlled by the
        uncertainty-triggered determinant argument. On LDP rounds,
        \(\sqrt{NT}\) is the statistical cost of learning across the \(N\)
        residual bins, whereas \(Th^\beta\) is the local-polynomial
        approximation cost. The pilot-corrected lifting leaves only the
        second-order contribution \(T\eta^2\), so choosing
        \(\eta^2\le h^\beta\) prevents pilot-index error from worsening the
        nonparametric rate. The residual-grid discretization and terminal layer
        contribute only order \(\sqrt T\). Choosing \(N\asymp T^{\frac{1}{2\beta+1}}\), \(h\asymp T^{-\frac{1}{2\beta+1}}\) and \(\eta^2\le h^\beta\)
        balances the statistical and approximation terms. For fixed \(d\) and
        fixed problem primitives, this yields the regret on the order of 
        \(
            \widetilde{\mathcal O}\!\left(
                T^{\frac{\beta+1}{2\beta+1}}
            \right)
        \).

        \begin{remark}[Adaptation to an unknown time horizon]
        The knowledge of \(T\) can be removed using the
        standard doubling trick, as commonly adopted in the dynamic-pricing
        literature
        \citep{DP_parametricF,d_free_DP,DP_Fm,dp_tight_2,CDP_smooth}.
        Specifically, the policy is restarted over epochs of geometrically
        increasing lengths, with all horizon-dependent parameters, including \(N\),
        calibrated to the current epoch length. 
        \end{remark}

        \subsection{Lower Bound}
        \label{subsec:lower_bound}
        
        We next establish a minimax lower bound that matches the horizon
        dependence of the regret upper bound in
        Theorem~\ref{thm:regret_upper_bound}, up to logarithmic factors.
        The construction uses a constant-covariate, binary-demand subclass.
        Hence, the lower bound isolates the difficulty of learning an unknown
        smooth demand function and does not rely on estimating the contextual
        parameter.
        
        \paragraph{A hard binary-demand subclass.}
        Consider the admissible demand subclass
        \begin{equation}
        \label{eq:lower-bound-demand-subclass}
            y_t
            =
            D\mathbf 1\{v_t\ge p_t\},
            \qquad
            q(z)
            =
            D\mathbf 1\{0\le z\le B\}.
        \end{equation}
        For every \(u\in\mathcal I_g\), this subclass gives
        $
            g(u)
            =
            \mathbb E[q(\epsilon_t-u)]
            =
            D\mathbb P(\epsilon_t\ge u).
        $
        Indeed, if \(u\in\mathcal I_g\) and \(\epsilon_t\ge u\), then
        $
            0
            \le
            \epsilon_t-u
            \le
            B_\epsilon-(-B+B_\epsilon)
            =
            B.
        $
        Thus, on the effective residual domain \(\mathcal I_g\), the
        normalized link \(g/D\) is the survival function of the valuation
        shock.
        
        Fix an admissible
        \(\bm\theta^\circ\in\Theta\) satisfying
        Assumption~\ref{ass:bounded} over \(\mathcal X\), and fix any
        \(\bm x^\circ\in\mathcal X\).
        Because deterministic covariate sequences are admissible, the minimax
        problem contains the subclass
        $
            \bm x_t\equiv\bm x^\circ,
            \bm\theta_\star=\bm\theta^\circ.
        $
        For every \(p\in[0,B]\),
        $
            p-(\bm x^\circ)^\top\bm\theta^\circ
            \in
            [-(\bm x^\circ)^\top\bm\theta^\circ,
              B-(\bm x^\circ)^\top\bm\theta^\circ]
            \subseteq
            \mathcal I_g,
        $
        and the corresponding expected revenue reduces to
        $
            \mathsf{Rev}(p)
            =
            p\,g\!\left(
                p-(\bm x^\circ)^\top\bm\theta^\circ
            \right).
        $
        
        \begin{theorem}[Minimax lower bound]
        \label{thm:lower_bound_survival}
        There exist finite constants
        \(\bar L_g>0\), \(c>0\), and \(T_0<\infty\)
        such that, for every \(L_g\ge\bar L_g\) and every \(T\ge T_0\),
        \begin{equation}
        \label{eq:minimax-lower-bound}
            \inf_{\pi}
            \sup_{\mathcal P(L_g)}
            \mathbb E^\pi\!\left[\mathrm{Reg}(T)\right]
            \ge
            cT^{\frac{\beta+1}{2\beta+1}}.
        \end{equation}
        Here, the infimum is over all nonanticipating, possibly randomized,
        pricing policies, and the supremum is over all instances in
        \(\mathcal P(L_g)\). The class \(\mathcal P(L_g)\) contains all
        instances satisfying Assumptions~\ref{ass:bounded},
        \ref{ass:quantity-response}, and \ref{ass:g_smooth} with H\"older
        constant at most \(L_g\), together with all admissible covariate
        sequences. The expectation is taken over both the demand randomness
        and randomization of the pricing policy.
        \end{theorem}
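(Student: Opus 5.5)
We reduce to the constant-context, binary-demand subclass \eqref{eq:lower-bound-demand-subclass} with \(\bm x_t\equiv\bm x^\circ\) and \(\bm\theta_\star=\bm\theta^\circ\), and write \(u^\circ:=(\bm x^\circ)^\top\bm\theta^\circ\). The problem then becomes noncontextual pricing with Bernoulli\((g(p-u^\circ)/D)\) feedback scaled by \(D\), where \(g/D\) is a survival function \(\bar F\) of a bounded zero-mean shock. We use a standard multiple-hypothesis (Assouad/Fano-type) argument.

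\textbf{Base instance.} We construct a smooth base survival function \(\bar F_0\) on \([-B_\epsilon,B_\epsilon]\) whose revenue \(\mathsf{Rev}_0(p)=Dp\,\bar F_0(p-u^\circ)\) is exactly constant on a nondegenerate interval \(J=[p_1,p_2]\subset(0,B)\). This requires \(\bar F_0(p-u^\circ)=c_0/p\) on \(J\), and \(\mathsf{Rev}_0<c_0D\) outside \(J\). Specifically:
\begin{itemize}
\item take \(\bar F_0(u)=c_0/(u+u^\circ)\) on \(J-u^\circ\), which lies strictly inside \((-B_\epsilon,B_\epsilon)\);
\item extend \(\bar F_0\) smoothly (\(C^\infty\)) to a nonincreasing function equal to \(1\) near \(-B_\epsilon\) and \(0\) near \(B_\epsilon\), with strictly negative derivative on \(J-u^\circ\) and values in \((0,1)\) there;
\item adjust the shape (e.g.\ by a location shift) so that the zero-mean requirement holds;
\item choose \(c_0\) and the extension so that revenue off \(J\) is strictly below \(c_0D\).
\end{itemize}
Since everything is \(C^\infty\), \(g_0=D\bar F_0\in\mathcal H(\beta,\bar L_g/2)\) for some \(\bar L_g\). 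This fixes \(\bar L_g\).

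\textbf{Perturbations.} Partition \(J\) into \(K\asymp T^{1/(2\beta+1)}\) subintervals of width \(h\asymp T^{-1/(2\beta+1)}\), and fix a \(C^\infty\) bump \(\varphi\) supported in \([0,1]\). For \(\sigma\in\{0,1\}^K\), set
\[
\bar F_\sigma(u)=\bar F_0(u)+\sum_k \sigma_k\, \frac{L h^\beta}{p}\,\varphi\bigl((u+u^\circ-p^{(k)})/h\bigr)\Big/D,
\]
scaled so that each revenue bump equals \(cDh^\beta\varphi(\cdot)\) in price space. The following checks are needed:
\begin{itemize}
\item \textbf{Hölder constant.} Bumps of height \(h^\beta\) and width \(h\) have Hölder-\(\beta\) seminorm \(O(1)\), so \(g_\sigma\in\mathcal H(\beta,\bar L_g)\) for small enough \(c\).
\item \textbf{Monotonicity and range.} Derivatives of the bump are \(O(h^{\beta-1})=o(1)\) while \(\bar F_0'\) is bounded away from \(0\) on \(J\), so \(\bar F_\sigma\) remains a valid survival function with values in \((0,1)\).
\item \textbf{Zero mean.} The bumps change the shock mean by \(O(h^{\beta+1}K)\). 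We restore zero mean exactly with a fixed smooth compensating perturbation supported outside \(J\), of size \(O(h^\beta)\) in amplitude. This keeps revenue off \(J\) strictly suboptimal and preserves the Hölder bound.
\item \textbf{Remaining assumptions.} Assumptions~\ref{ass:bounded}, \ref{ass:quantity-response} and \ref{ass:g_smooth} then hold.
\end{itemize}

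\textbf{Regret decomposition and information.} Under instance \(\sigma\), the optimal revenue is \(c_0D+c'h^\beta\) whenever some \(\sigma_k=1\). Any price outside the near-peak region of an active bump loses at least order \(h^\beta\). Hence
\[
\mathrm{Reg}\gtrsim h^\beta\sum_k \mathbb E_\sigma\bigl[T-N_k\bigr]\quad\text{restricted to }\sigma_k=1,
\]
where \(N_k\) counts prices placed in the peak of bin \(k\). An Assouad-type argument pairs \(\sigma\) with \(\sigma^{(k)}\), which flips coordinate \(k\). The KL divergence per round between Bernoulli observations is \(O(h^{2\beta})\), and only rounds with prices in bin \(k\) contribute, because the base Bernoulli mean is bounded away from \(0\) and \(1\) on \(J\). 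Averaging over \(\sigma\) and applying Pinsker/Bretagnolle–Huber gives
\[
\max_\sigma\mathbb E\,\mathrm{Reg}\ \gtrsim\ \sum_k h^\beta\bigl(T/K\bigr)\exp\bigl(-C h^{2\beta}\,\mathbb E_0 N_k\bigr)\ -\ \text{(correction)}.
\]
Since \(\sum_k\mathbb E_0N_k\le T\), most bins satisfy \(\mathbb E_0N_k\le 2T/K\), and with \(h^{2\beta}T/K\asymp 1\) the exponential factor is bounded below. Summing over these bins yields order \(Th^\beta\asymp T^{(\beta+1)/(2\beta+1)}\).

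An easier variant avoids Assouad: compare the base instance against the \(K\) single-bump instances \(\sigma=e_k\). On the base instance, a policy can place its prices inside \(J\) at no regret, so the regret there is zero. Under \(e_k\), the policy must find bump \(k\), and
\[
\mathbb E_{e_k}[T-N_k]\ \ge\ T-\mathbb E_0N_k-T\sqrt{\mathrm{KL}/2},\qquad \mathrm{KL}\lesssim h^{2\beta}\,\mathbb E_0N_k.
\]
Averaging over \(k\) with \(\mathbb E_0N_k\lesssim T/K\) gives regret at least \(h^\beta\cdot T/2\) for some \(k\). We will use this version. The flat region is exactly what makes the base regret zero and thus forces full exploration cost.

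\textbf{Main obstacle.} The delicate step is the explicit base construction. It must simultaneously satisfy four requirements:
\begin{itemize}
\item an exactly flat revenue on \(J\);
\item a smooth, nonincreasing survival function with values in \((0,1)\) on \(J\) and support inside \([-B_\epsilon,B_\epsilon]\);
\item a zero-mean shock;
\item \(u^\circ\in[B_\epsilon,B-B_\epsilon]\), so that the whole price range maps into \(\mathcal I_g\).
\end{itemize}
In addition, the perturbations must preserve zero mean and the Hölder bound uniformly in \(T\). To handle this we first build \(\bar F_0\) with a free location parameter and fix the mean by shifting. For the perturbations we add a paired negative bump outside \(J\) with matching integral. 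This compensating bump is placed where revenue is strictly below \(c_0D\) by a fixed margin, so it never creates a competing optimum.
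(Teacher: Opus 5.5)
Your testing argument and perturbation design are essentially the paper's. Both use a constant context with binary demand, and a baseline with $g_0(u)=Dc_0/(u+u^\circ)$ on the plateau so that revenue is exactly flat there. Both use bumps of height $\asymp K^{-\beta}$ divided by the price, a negative compensating term away from the plateau to restore the zero-mean integral, and the base-versus-single-bump comparison at the bin least visited under the baseline. You use Pinsker where the paper uses Bretagnolle--Huber; either works once the bump amplitude constant is small.

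The gap is in the base construction, which you rightly call the main obstacle but resolve with a step that fails. A location shift of the shock distribution destroys the plateau. Since $u^\circ$ is fixed, replacing $\bar F_0(\cdot)$ by $\bar F_0(\cdot-a)$ turns the plateau revenue $Dp\,\bar F_0(p-u^\circ)=Dc_0$ into $Dc_0p/(p-a)$, which is not constant. You also cannot, in general, move $u^\circ$ instead, because the admissible index values are dictated by $\mathcal X$ and $\Theta$.

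The mean must instead be corrected by modifying $\bar F_0$ only away from the plateau. Existence is then a genuine constraint that your proposal never checks. If revenue is never to exceed $c_0D$, then $\bar F_0(u)\le\min\{1,c_0/(u+u^\circ)\}$. A smooth zero-mean baseline therefore requires $\int_{-B_\epsilon}^{B_\epsilon}\min\{1,c_0/(u+u^\circ)\}\,\mathrm du>B_\epsilon$, and this can fail. For $u^\circ=B_\epsilon$ the integral equals $c_0\bigl(1+\log(2B_\epsilon/c_0)\bigr)$, which is about $0.87B_\epsilon$ when $c_0=0.3B_\epsilon$.

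The paper handles this in four steps:
\begin{enumerate}
\item It fixes $c_0=u^\circ-B_\epsilon/2$.
\item It proves the strict excess of mass via the monotonicity of $x\log(1+a/x)$.
\item It smooths the kink of the envelope outside the plateau.
\item It applies a smooth right-tail cutoff at a location $\xi$. The integral of the resulting function is continuous and nondecreasing in $\xi$, falls below $DB_\epsilon$ at the smallest admissible $\xi$, and exceeds it at the largest. The intermediate value theorem then yields an exact zero-mean baseline.
\end{enumerate}

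Two smaller repairs are also needed.
\begin{itemize}
\item The compensating bump must sit where $g_0'$ is uniformly negative and $g_0$ is bounded away from $0$ and $D$. Being below the revenue plateau is not enough: subtracting mass where $g_0$ is locally constant breaks monotonicity or the range. The paper arranges this by placing $\chi$ inside the cutoff region.
\item Your KL bound must include rounds priced in the compensation region. These contribute $O(Th^{2\beta+2})$, which is negligible but not absent.
\end{itemize}
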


        \begin{corollary}[Expected-regret upper bound and minimax horizon rate]
        \label{cor:minimax-matching-rate}
        Fix \(L_g\ge \bar L_g\), the dimension, and the problem primitives,
        and suppose the confidence
        constants used by the policy are chosen uniformly over
        \(\mathcal P(L_g)\). For all sufficiently large \(T\), there are finite
        constants \(0< \bar{c}\le \bar{C}<\infty\), independent of \(T\), such that
        \begin{equation}
        \label{eq:minimax-matching-rate}
            \bar{c}T^{\frac{\beta+1}{2\beta+1}}
            \le
            \inf_\pi\sup_{\mathcal P(L_g)}
            \mathbb E^\pi[\mathrm{Reg}(T)]
            \le
            \bar{C}(\log T)^2T^{\frac{\beta+1}{2\beta+1}}.
        \end{equation}
        In particular, the upper and lower bounds match in their horizon
        exponent.
        \end{corollary}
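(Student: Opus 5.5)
The corollary combines Theorem~\ref{thm:lower_bound_survival} with an expected-regret version of Theorem~\ref{thm:regret_upper_bound}. The lower inequality in \eqref{eq:minimax-matching-rate} is immediate: for \(L_g\ge\bar L_g\) and \(T\ge T_0\), Theorem~\ref{thm:lower_bound_survival} gives \(\inf_\pi\sup_{\mathcal P(L_g)}\mathbb E^\pi[\mathrm{Reg}(T)]\ge cT^{\frac{\beta+1}{2\beta+1}}\), so I take \(\bar c=c\). All the work is in the upper inequality.

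For the upper inequality I will exhibit one policy, namely Algorithm~\ref{alg:adaptive_pilot}, and bound its worst-case expected regret. I run it with \(N=\lceil T^{1/(2\beta+1)}\rceil\), \(\eta^2=\min\{h^2,h^\beta\}\), a fixed \(\lambda\) (say \(\lambda=1\)), and \(\delta=1/T\). The confidence constants \(C_z,C_l,C_\psi\) are chosen uniformly over \(\mathcal P(L_g)\), as the hypothesis allows. This is possible because they depend only on \(B,B_\epsilon,D,\beta,L_g,C_x,C_\theta,d\), through the class-level envelope \(C_g\) discussed after Assumption~\ref{ass:g_smooth}. The constants \(C_\omega\) and \(L_{\mathrm{Rev}}\) are likewise uniform over the class.

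Let \(E\) be the event of Theorem~\ref{thm:regret_upper_bound}, which has probability at least \(1-\delta\). I split the expectation as
\[
\mathbb E[\mathrm{Reg}(T)]\le \mathbb E[\mathrm{Reg}(T)\mathbf 1_E]+BD\,T\,\mathbb P(E^c)\le \mathbb E[\mathrm{Reg}(T)\mathbf 1_E]+BD.
\]
This uses the pathwise bound \(\mathrm{Reg}(T)\le BDT\), since every per-period revenue lies in \([0,BD]\). On \(E\), \eqref{eq:regret_bound_discrete_fixed_pilot_residual_rate} gives
\[
\mathrm{Reg}(T)\le C\bigl[d(1+\gamma_T^2)\log(1+TC_x^2/d)+(1+\sqrt\lambda)\iota_T\bigr]T^{\frac{\beta+1}{2\beta+1}}.
\]

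Next I track the logarithmic factors with \(\delta=1/T\). First, \(\gamma_T^2=O(d\log T+\log(1/\delta))=O(\log T)\), so the pilot term is \(O(\log T\cdot\log T)=O((\log T)^2)\). Second, \(\iota_T=O(\log(SNT)+\log T)=O(\log T)\), because \(S=O(\log T)\) and \(N\le T\). Hence the bracket is \(O((\log T)^2)\) for fixed \(d\) and fixed primitives. Adding the \(BD\) failure term, which is dominated for large \(T\), gives \(\bar C\).

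The step needing the most care is uniformity. The constant \(C\) in Theorem~\ref{thm:regret_upper_bound} must be independent of the particular instance and covariate sequence in \(\mathcal P(L_g)\), so that the supremum can pass through. I will check this by inspecting its ingredients. \(C\) is built from \(C_\omega\), \(L_{\mathrm{Rev}}\), \(C_l\), \(B\), \(D\), and \(c_\eta=1\), together with the bounds used in Proposition~\ref{prop:ldp_step_regret_discrete_fixed_residual} and Lemma~\ref{lem:pilot_exploration_regret}. Lemma~\ref{lem:pilot_exploration_regret} is pathwise in the covariates, so arbitrary covariate sequences cost nothing extra. The remaining constants depend only on class-level envelopes (\(C_g\) and \(L_g\)). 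With that checked, the bound holds for every instance simultaneously, and taking \(\sup\) and then \(\inf_\pi\) yields the claimed sandwich for all \(T\ge T_0\).
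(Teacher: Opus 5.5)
Your proposal is correct and matches the argument the paper intends; the paper states the corollary without a separate proof. The lower inequality is Theorem~\ref{thm:lower_bound_survival} verbatim, and the upper inequality follows from running Algorithm~\ref{alg:adaptive_pilot} with \(\delta=1/T\), converting \eqref{eq:regret_bound_discrete_fixed_pilot_residual_rate} into expectation via the \(BDT\delta\) failure term, and using \(\gamma_T^2,\iota_T=\mathcal O(\log T)\) to get the \((\log T)^2\) factor, with your uniformity check on \(C\), \(C_\omega\), and \(L_{\mathrm{Rev}}\) being exactly what the hypothesis on uniformly chosen confidence constants is meant to license.
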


        \paragraph{Construction overview.}
        The main difficulty is to construct a hard family that simultaneously
        satisfies the smoothness, monotonicity, bounded-support, and zero-mean
        requirements imposed on the valuation shock. We first construct a
        smooth baseline link whose expected revenue is constant over a
        nondegenerate interior price interval. We then place a smooth local
        perturbation in one of \(K_T\) disjoint subintervals of this flat
        region. Each perturbation creates a hidden revenue improvement, while
        a smaller adjustment away from the flat region preserves the
        zero-mean condition.
        
        The following lemma provides the required baseline link and the
        function used for the compensating adjustment.
        
        \begin{lemma}[A flat zero-mean baseline]
        \label{lem:lower_bound_flat_baseline}
        There exist constants
        $
            0<p_L<p_U<B,
        $
        a nonnegative infinitely differentiable function
        \(\chi:\mathbb R\to[0,\infty)\), and a nonincreasing function
        \(g_0:\mathbb R\to[0,D]\) satisfying the following properties:
        \begin{enumerate}
            \item
            The function \(g_0\) is infinitely differentiable,
            \(g_0(u)=D\) for \(u\le-B_\epsilon\), and
            \(g_0(u)=0\) for \(u\ge B_\epsilon\).
        
            \item
            The function \(g_0\) belongs to \(\mathcal H(\beta)\) on
            \([-B,B]\), with a finite H\"older constant.
        
            \item
            The normalized function \(g_0/D\) is the survival function of a
            zero-mean distribution supported on
            \([-B_\epsilon,B_\epsilon]\). In view of the monotonicity and
            endpoint conditions above, its zero-mean property is equivalently
            expressed as
            $
                \int_{-B_\epsilon}^{B_\epsilon}
                g_0(u)\,\mathrm du
                =
                DB_\epsilon.
            $
        
            \item
            The baseline revenue satisfies
            \begin{equation}
            \label{eq:lower-bound-flat-revenue}
                p\,g_0\!\left(
                    p-(\bm x^\circ)^\top\bm\theta^\circ
                \right)
                \le
                D\left(
                    (\bm x^\circ)^\top\bm\theta^\circ
                    -\frac{B_\epsilon}{2}
                \right),
                \qquad
                p\in[0,B],
            \end{equation}
            with equality for every \(p\in[p_L,p_U]\).
        
            \item
            The function \(\chi\) satisfies
            $
                \operatorname{supp}(\chi)
                \subset
                (-B_\epsilon,B_\epsilon)
                \setminus
                [p_L-(\bm x^\circ)^\top\bm\theta^\circ,
                 p_U-(\bm x^\circ)^\top\bm\theta^\circ],
                \int_{\mathbb R}\chi(u)\,\mathrm du
                =
                1.
            $
            On neighborhoods of both
            $
                [p_L-(\bm x^\circ)^\top\bm\theta^\circ,
                 p_U-(\bm x^\circ)^\top\bm\theta^\circ]
            $
            and \(\operatorname{supp}(\chi)\), the function \(g_0\) is
            bounded away from \(0\) and \(D\), and \(g_0'\) is uniformly
            negative.
        \end{enumerate}
        \end{lemma}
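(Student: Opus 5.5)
The plan is to build \(g_0\) explicitly in the normalized form \(g_0=D\bar G\), where \(\bar G\) is a survival function. Write \(c:=(\bm x^\circ)^\top\bm\theta^\circ\in[B_\epsilon,B-B_\epsilon]\), \(K:=c-B_\epsilon/2>0\), and use the residual \(u=p-c\). The target revenue level \(DK\) suggests taking the hyperbola \(H(u):=K/(u+c)\) as an envelope. On \(u>-c\), the inequality \eqref{eq:lower-bound-flat-revenue} is equivalent to \(\bar G(u)\le H(u)\), with equality on the flat set. Note that \(H(u)\ge1\) exactly when \(u\le -B_\epsilon/2\). Hence for \(u\le -B_\epsilon/2\) the inequality is automatic from \(\bar G\le 1\); equivalently \(p\,g_0\le Dp=D(u+c)\le DK\) there.

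Fix a small \(a\in(0,B_\epsilon/8)\) and set the flat residual window to \([-a,a]\), so that \(p_L=c-a\) and \(p_U=c+a\); these lie in \((0,B)\) because \(c\pm a\in(0,B)\). On \([-a,a]\) set \(\bar G=H\). This is \(C^\infty\) and strictly decreasing there, with values strictly inside \((0,1)\) since \(-a>-B_\epsilon/2\) and \(H(a)>0\). On \([-B_\epsilon,-a]\), glue from the value \(1\) to \(H\) using a standard \(C^\infty\) monotone transition built from \(e^{-1/x}\)-type functions, chosen so that the graph stays below \(H\); this is possible because \(H>1\) left of \(-B_\epsilon/2\) and \(H\) has a strictly negative slope. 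Do the same on \([a,B_\epsilon]\), from \(H\) down to \(0\), keeping \(\bar G\le H\), \(\bar G'<0\) on the interior of a subinterval, and all derivatives matching at the junctions. Then extend by \(1\) to the left and by \(0\) to the right. The resulting function is infinitely differentiable, nonincreasing, and constant outside a compact set, so all its derivatives are bounded on \([-B,B]\). This gives item 2 with a finite H\"older constant, and items 1 and 4 follow immediately.

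The main obstacle is imposing the zero-mean condition \(\int_{-B_\epsilon}^{B_\epsilon}\bar G=B_\epsilon\) (item 3) while preserving the envelope. I would let the right-hand gluing depend continuously on a parameter \(\tau\), for instance the point at which the transition leaves \(H\) and starts dropping towards \(0\). At one extreme, \(\bar G\) follows \(\min\{1,H\}\) nearly up to \(B_\epsilon\). At the other, it drops to nearly \(0\) just after \(a\) and leaves \(1\) just before \(-B_\epsilon/2\).

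The lower extreme has integral close to \(B_\epsilon/2+O(a)<B_\epsilon\). The upper extreme has integral close to
\[
\int_{-B_\epsilon}^{B_\epsilon}\min\{1,H\}=\tfrac{B_\epsilon}{2}+K\log\bigl(1+\tfrac{3B_\epsilon}{2K}\bigr).
\]
The second term is increasing in \(K\). At the smallest value \(K=B_\epsilon/2\) it equals \(\tfrac{B_\epsilon}{2}\log 4>B_\epsilon/2\). Hence this integral strictly exceeds \(B_\epsilon\), with margin uniform in \(c\). The intermediate value theorem then gives a \(\tau\) with integral exactly \(B_\epsilon\), provided the gluing error is \(o(1)\) as the transitions sharpen. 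Sharper transitions enlarge the derivative bounds, but these remain finite for the fixed construction, which is all that item 2 requires. This integral condition is the step needing the most care.

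Finally, for item 5, note first that \(\bar G'<0\) and \(0<\bar G<1\) on a neighborhood of \([-a,a]\), because \(H\) is strictly decreasing with values in \((0,1)\) and the gluings can be chosen to start strictly decreasing. Then pick a compact interval \(J\subset(a',B_\epsilon)\), with \(a'>a\), lying inside the right transition region, on which \(\bar G'<0\) and \(0<\bar G<1\); such a \(J\) exists because the transition is strictly monotone on an open interval. Take \(\chi\) to be a normalized \(C^\infty\) bump supported in \(J\) with \(\int\chi=1\). Its support avoids \([p_L-c,p_U-c]=[-a,a]\) and lies in \((-B_\epsilon,B_\epsilon)\), as required.
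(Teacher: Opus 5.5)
You follow the paper's architecture: the envelope \(\min\{1,K/(u+c)\}\), a smooth gluing near \(-B_\epsilon/2\), a continuously parametrized right-tail cutoff, the intermediate value theorem for the zero-mean integral, and \(\chi\) placed inside the cutoff transition. Your upper-extreme computation is exactly the paper's. The gap is in the step you flagged, and it comes from where you put the flat window.

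With the window \([-a,a]\) straddling residual \(0\), your estimate that the lower extreme has integral \(B_\epsilon/2+O(a)\) ignores the mass between \(-B_\epsilon/2\) and \(-a\), where \(\bar G\ge H(-a)\). In fact, monotonicity and \(\bar G(a)=H(a)\) force, for every admissible \(\bar G\),
\[
\int_{-B_\epsilon}^{B_\epsilon}\bar G(u)\,\mathrm du
\;\ge\;(B_\epsilon+a)\,H(a)
=\frac{(B_\epsilon+a)(c-B_\epsilon/2)}{c+a}.
\]
The right-hand side exceeds \(B_\epsilon\) as soon as \(a(K-B_\epsilon)>B_\epsilon^2/2\). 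Since \(c\) can be as large as \(B-B_\epsilon\) and \(B/B_\epsilon\) is unrestricted, this happens for any fixed \(a\in(0,B_\epsilon/8)\) once \(c\gg B_\epsilon\). For instance, \(B_\epsilon=1\), \(c=50\), \(a=1/16\) gives about \(1.05>1\).

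In that regime both ends of your intermediate-value argument lie above \(B_\epsilon\). Worse, no survival function with that flat window and zero mean exists at all. The obstruction is intrinsic: zero mean gives
\[
a\,\mathbb P(\epsilon\ge a)\le\mathbb E[\epsilon^+]=\mathbb E[\epsilon^-]\le B_\epsilon\,\mathbb P(\epsilon<0)=\frac{B_\epsilon^2}{2c},
\]
so a window containing residual \(0\) can only have width of order \(B_\epsilon^2/c\).

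There are two repairs.
\begin{itemize}
\item Let \(a\) depend on \(c\). The true lower-extreme value is \(\tfrac{B_\epsilon}{2}+K\log\bigl(1+\tfrac{B_\epsilon/2+a}{K}\bigr)\); you need this plus the gluing errors to stay strictly below \(B_\epsilon\). That holds once \(a\) is a small enough multiple of \(B_\epsilon^2/K\), and must be verified explicitly.
\item Do as the paper does and place the window strictly to the left of zero, at residuals \([-3B_\epsilon/8,-B_\epsilon/4]\). Cutting the tail just after this window leaves a function that vanishes beyond \(-B_\epsilon/4\) plus the transition width. The crude bound \(\bar G\le 1\) then gives \(\int\bar G\) at most about \(3B_\epsilon/4<B_\epsilon\), uniformly in \(c\).
\end{itemize}
With either repair, the rest of your argument (smoothing, continuity in the cutoff parameter, and the placement of \(\chi\)) goes through unchanged.
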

        
        Starting from Lemma~\ref{lem:lower_bound_flat_baseline}, partition the
        flat revenue interval into \(K_T\) disjoint price regions. For the
        \(k\)th region, add a smooth local perturbation \(\Delta_k\) and
        subtract the compensating term \(a_k\chi\), where
        $
            a_k
            =
            \int_{\mathbb R}\Delta_k(v)\,\mathrm dv.
        $
        Thus, the perturbed link is of the form
        $
            g_0(u)+\Delta_k(u)-a_k\chi(u).
        $
        Because \(\int_{\mathbb R}\chi(u)\,\mathrm du=1\),
        $
            \int_{\mathbb R}
            \left(
                \Delta_k(u)-a_k\chi(u)
            \right)\mathrm du
            =
            0.
        $
        Consequently, the perturbation does not alter the integral condition
        corresponding to the zero-mean valuation shock.
        
        The local perturbation has width \(\mathcal{O}(K_T^{-1})\) and raises the
        revenue in its associated region by order \(K_T^{-\beta}\). Its
        integral therefore satisfies
        $
            a_k
            =
            \mathcal{O}\!\left(K_T^{-(\beta+1)}\right).
        $
        Hence, the compensating adjustment is smaller than the local
        perturbation by a factor of order \(K_T^{-1}\). By placing its support
        away from the flat region and choosing the perturbation magnitude
        sufficiently small, all perturbed links remain nonincreasing, take
        values in \([0,D]\), and belong to a common H\"older ball.
        
        \begin{figure*}[t]
            \centering
            \includegraphics[
                width=0.88\textwidth
            ]{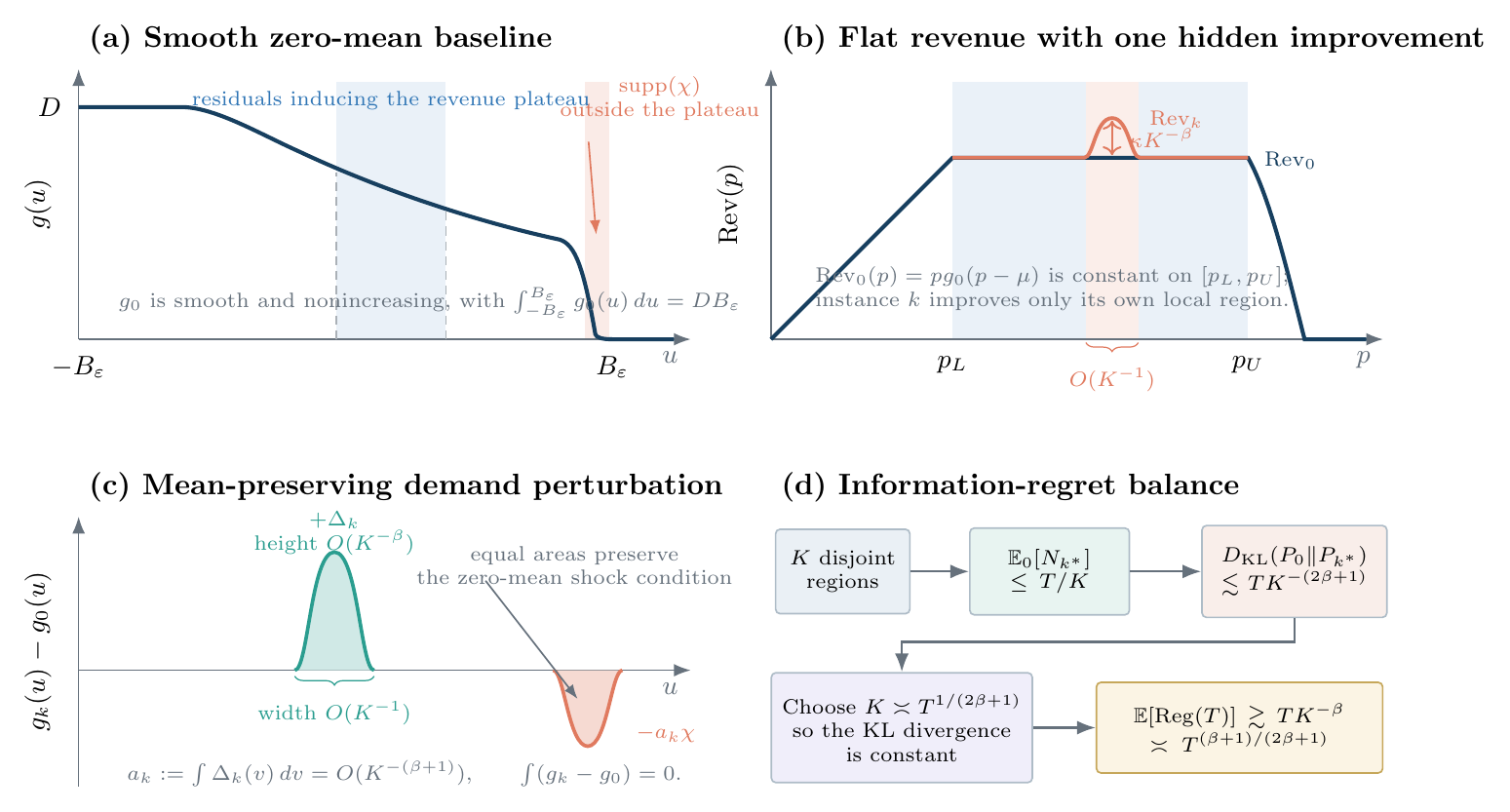}
            \caption{
            Illustration of the lower-bound construction.
            Panel (a) constructs a smooth, nonincreasing baseline demand
            function \(g_0\) satisfying the zero-mean integral condition.
            Panel (b) shows the induced flat revenue region and a local
            perturbation of width \(\mathcal{O}(K_T^{-1})\) and height
            \(\kappa K_T^{-\beta}\).
            Panel (c) adds the compensating term
            \(-a_k\chi\), where
            \(a_k=\int \Delta_k(v)\ \mathrm{d}v
            =\mathcal{O}(K_T^{-(\beta+1)})\), so that the perturbation preserves the
            zero-mean shock condition.
            Panel (d) summarizes the information--regret balance:
            choosing
            \(K_T\asymp T^{1/(2\beta+1)}\)
            keeps the KL divergence bounded and yields the regret lower bound
            \(\Omega\!\left(
                T^{(\beta+1)/(2\beta+1)}
            \right)\).
            }
            \label{fig:lower-bound-construction}
        \end{figure*}
        
        \paragraph{Information--regret balance.}
        Under the baseline instance, the expected total number of visits to all \(K_T\) disjoint perturbation regions is at most \(T\); hence some region is visited at most \(T/K_T\) times in expectation. 
        In that region, the perturbation changes the Bernoulli success probability by \(\Theta(K_T^{-\beta})\), 
        producing a per-observation KL divergence of order \(K_T^{-2\beta}\). 
        The local KL contribution is therefore \(\mathcal{O}\bigl(\frac{T}{K_T}K_T^{-2\beta}\bigr)=\mathcal{O}(T K_T^{-(2\beta+1)})\). 
        The compensating adjustment, of magnitude \(\mathcal{O}(K_T^{-(\beta+1)})\), contributes at most \(\mathcal{O}(T K_T^{-2(\beta+1)})\) to the KL divergence, 
        which is of lower order. Hence the total KL divergence remains bounded when \(K_T\asymp T^{1/(2\beta+1)}\). 
        A standard testing argument then shows that with probability bounded away from zero, the policy fails to identify the perturbed region for a constant fraction of the horizon. 
        Since any price outside that region incurs a revenue loss of order \(K_T^{-\beta}\), the expected regret is at least of order \(T K_T^{-\beta}\asymp T^{\frac{\beta+1}{2\beta+1}}\), proving Theorem~\ref{thm:lower_bound_survival}.
        Therefore, for fixed problem primitives, the regret rate
        $
            \widetilde{\mathcal O}\!\left(
                T^{\frac{\beta+1}{2\beta+1}}
            \right)
        $
        is minimax optimal in its dependence on the horizon.

        \begin{remark}[The effect of strong unimodality]
            \label{rem:strong-unimodality-lower-bound}
            The hard family of instances underlying Theorem~\ref{thm:lower_bound_survival}
            is based on a revenue function that is flat over a nondegenerate price interval.
            It therefore lies outside any class that satisfies the strong unimodality condition.
            The strong unimodality condition, however, rules out such flat alternatives and
            ensures that price errors translate into quadratic revenue losses.
            For \(\beta\ge2\), \cite{CDP_oracle_price_map} obtain the matching horizon exponent
            \(
                \frac{2\beta-1}{4\beta-3}.
            \)
            Thus, strong unimodality fundamentally changes the learning difficulty and
            permits higher-order smoothness to be exploited more effectively.
        \end{remark}

\section{Conclusion and Future Research}
\label{sec:conclusion}

This paper studies contextual dynamic pricing with an unknown linear
valuation component and an unknown nonparametric demand response. The
model accommodates bounded discrete or continuous purchase quantities
and imposes only H\"older smoothness on the induced demand function.
In particular, the revenue function may be multimodal and its maximizer
need not be unique. The framework therefore applies beyond the binary
purchase model and avoids the strong shape restrictions commonly used
to make contextual pricing analytically tractable.

To address the interaction between parametric estimation, nonparametric
learning, and adaptive pricing, we develop a layered decision-partitioning
framework that combines directional pilot learning, pilot-corrected local
polynomial regression, and permanent layer--bin assignments. The
pilot correction reduces the remaining index error to second order,
whereas the permanent data partition preserves predictability and enables
concentration under adaptively collected observations. Together with a
global elimination procedure, these ingredients yield the regret bound
$
    \Reg(T)
    =
    \widetilde{\mathcal O}\!\left(
        T^{\frac{\beta+1}{2\beta+1}}
    \right),
$
which matches our lower bound in its dependence on the horizon.

Two directions appear particularly promising for future research.
First, it would be valuable to develop a policy that adapts simultaneously
to the unknown H\"older radius \(L_g\) and smoothness order \(\beta\),
while preserving predictable data assignment and valid confidence
guarantees under adaptive sampling. An accompanying question is whether
such simultaneous adaptation necessarily incurs an additional statistical
cost. Second, although the present framework accommodates general,
possibly multimodal revenue functions, it does not exploit favorable local
revenue geometry when such structure is present. Developing a
shape-adaptive policy that achieves faster regret under quadratic revenue
growth or strong unimodality, while retaining the minimax guarantee over
the unrestricted revenue class, would yield a desirable best-of-both-worlds
result.

\newpage

\bibliographystyle{apalike}

\bibliography{ref}

\newpage
\appendix

\section{Proofs}

\paragraph{Proof roadmap.}
The appendix follows the logical dependency of the results. We first justify
the induced link and the uniform-price pilot identity. We then establish pilot
confidence, the pilot-corrected local approximation, and the uniform
layer--bin confidence event. These ingredients feed the one-period gap and
layer-counting arguments used in the contextual upper bound. The lower-bound
construction is proved next. 

\subsection{Proofs for the Model}

\noindent
\textbf{Proof of \cref{prop:induced_F}.}
\begin{myproof}
  Let \(\mathcal I_g = [-B + B_\epsilon, B - B_\epsilon]\).
  Since \(q:\mathbb R\to[0,D]\), the function
  \(g(u)=\mathbb E\!\left[q(\epsilon_t-u)\right]\)
  is well-defined and satisfies \(0\le g(u)\le D\) for all \(u\in\mathcal I_g\).
  Let \(\bar F(u):=\mathbb P(\epsilon_t\ge u)\) be the survival function of the noise.
  Since \(F\in\mathcal H(\beta,L_F;[-B,B])\) with \(\beta\ge1\), \(F\) is continuous.
  Hence \(\epsilon_t\) has no atoms.

  By Assumption~\ref{ass:bounded}, the support of \(\epsilon_t\) is contained in
  \([-B_\epsilon,B_\epsilon]\). Therefore \(\bar F\) is constant on
  \((-\infty,-B_\epsilon]\) and on \([B_\epsilon,\infty)\). Since
  \(B_\epsilon<B\), the function \(1-F\) is already constant on
  \([-B,-B_\epsilon]\) and on \([B_\epsilon,B]\). Thus, when viewed on the
  interval \([-B+B_\epsilon,2B-B_\epsilon]\),
  the survival function \(\bar F\) belongs to
  \(\mathcal H(\beta,L_F;[-B+B_\epsilon,2B-B_\epsilon])\).
  
  Because \(q\) has bounded variation on \([0,B]\), it has at most countably many discontinuities.
  Let \(q^+\) be a right-continuous representative of \(q\) on
\([0,B]\) with
\(\operatorname{TV}_{[0,B]}(q^+)\le V_q\), extend it by zero outside
\([0,B]\), and continue to denote the resulting function by \(q^+\).
  Since \(\epsilon_t\) has no atoms, for each fixed \(u\in\mathcal I_g\),
  $
  \mathbb E\!\left[q(\epsilon_t-u)\right]
  =
  \mathbb E\!\left[q^+(\epsilon_t-u)\right].
  $
  Hence \(g\) can be represented using \(q^+\). By the Lebesgue--Stieltjes representation for bounded-variation functions,
  there exists a finite signed measure \(\mu_q\) on \((0,B]\) such that
  \[
  q^+(w)=q^+(0)+\mu_q((0,w]), \qquad w\in[0,B],
  \]
  and moreover,
  \[
  |\mu_q|((0,B])\le V_q, \qquad |q^+(0)|\le D.
  \]
  
  For \(u\in\mathcal I_g\), Assumption~\ref{ass:bounded} implies that
  \(\epsilon_t-u\le B\) almost surely whenever \(\epsilon_t-u\ge0\). Therefore,
  \[
   q^+(\epsilon_t-u)
  =
  q^+(0)\mathbf 1\{\epsilon_t\ge u\}
  +
  \int_{(0,B]}
  \mathbf 1\{\epsilon_t\ge u+a\}\,\mu_q(\mathrm{d} a).
  \]
  Taking expectations and applying Fubini's theorem for finite signed measures gives
  \[
  g(u)=q^+(0)\bar F(u)+\int_{(0,B]}\bar F(u+a)\,\mu_q(\mathrm{d} a),
  \qquad \forall u\in\mathcal I_g .
  \]
  
  Let \(m:=\varpi(\beta)\). Since \(\mu_q\) is finite and
  \(\bar F\in\mathcal H(\beta,L_F;[-B+B_\epsilon,2B-B_\epsilon])\),
  differentiating under the finite signed measure yields
  \[
  g^{(m)}(u)
  =
  q^+(0)\bar F^{(m)}(u)
  +
  \int_{(0,B]} \bar F^{(m)}(u+a)\,\mu_q(\mathrm{d} a).
  \]
  
  Thus, for any \(u,u'\in\mathcal I_g\),
  \[
  \begin{aligned}
  |g^{(m)}(u)-g^{(m)}(u')|
  &\le |q^+(0)|\,|\bar F^{(m)}(u)-\bar F^{(m)}(u')| \\
  &\quad + \int_{(0,B]} |\bar F^{(m)}(u+a)-\bar F^{(m)}(u'+a)|\,|\mu_q|(\mathrm{d} a) \\
  &\le (|q^+(0)|+|\mu_q|((0,B]))\,L_F|u-u'|^{\beta-m} \\
  &\le (D+V_q)L_F|u-u'|^{\beta-m}.
  \end{aligned}
  \]
  
  Therefore \(g\in\mathcal H(\beta,L_g;\mathcal I_g)\) with \(L_g\le (D+V_q)L_F\).
  The proof is complete.
  \end{myproof}

\subsection{Proofs for the Contextual Policy}

\noindent
\textbf{Proof of \cref{lem:pilot-confidence}.}
\begin{myproof}
  On an exploratory round, \( p_i \sim \mathrm{Unif}[0,B] \) is drawn independently of \( \epsilon_i \) conditional on \( \mathcal F_{i-1}\vee\sigma(\bm x_i) \). 
  Since \( \mathbf 1\{y_i>0\}=\mathbf 1\{v_i\ge p_i\} \) and \( v_i\in[0,B] \), we have
  \[
  \mathbb E\!\left[
      B\mathbf 1\{y_i>0\}\mid \mathcal F_{i-1},\bm x_i,\{i\in\mathcal T^{\rm exp}\}
  \right]
  =
  \mathbb E[v_i\mid \mathcal F_{i-1},\bm x_i]
  =
  \bm x_i^\top\bm\theta_\star.
  \]
  Thus, \( B\mathbf 1\{y_i>0\} \) is an unbiased measurement of \( \bm x_i^\top\bm\theta_\star \) on pilot rounds. 
  The observation error is bounded by \( B \) in absolute value, and \( \mathbf 1\{i\in\mathcal T^{\rm exp}\}\bm x_i \) is determined before \( y_i \) is revealed. 
  By the standard self-normalized concentration inequality for adaptive linear regression, with probability at least \( 1-\delta/2 \), simultaneously for all \( t\le T \),
  \[
  \left\|
      \sum_{i<t, i\in\mathcal T^{\rm exp}}
      \bm x_i
      \left(
          B\mathbf 1\{y_i>0\}
          -
          \bm x_i^\top\bm\theta_\star
      \right)
  \right\|_{M_t^{-1}}
  \le
  B\sqrt{
      \log\det(M_t)
      +
      2\log\frac{4}{\delta}
  }.
  \]
  
  The update rules give
  \[
  M_t=I_d+\sum_{i<t, i\in\mathcal T^{\rm exp}}\bm x_i\bm x_i^\top,\qquad
  \bm b_t=\sum_{i<t, i\in\mathcal T^{\rm exp}}B\mathbf 1\{y_i>0\}\bm x_i.
  \]
  Since \( \|\bm x_i\|_2\le C_x \), standard elliptical-potential arguments yield
  \[
  \log\det(M_t)\le d\log\left(1+\frac{TC_x^2}{d}\right).
  \]

Since \(\widehat{\bm\theta}_t
  =
  M_t^{-1}\bm b_t\),
we have
\begin{align}
  \widehat{\bm\theta}_t-\bm\theta_\star
  &=
  M_t^{-1}\bm b_t
  -
  M_t^{-1}M_t\bm\theta_\star =
  M_t^{-1}
  \left(
      \bm b_t-M_t\bm\theta_\star
  \right).
\label{eq:pilot-error-first-decomposition}
\end{align}
Moreover, 
\begin{align}
  M_t\bm\theta_\star
  &=
  \left(
      I_d
      +
      \sum_{\substack{i<t, i\in \mathcal{T}^{\rm exp}}}
      \bm x_i\bm x_i^\top
  \right)
  \bm\theta_\star =
  \bm\theta_\star
  +
  \sum_{\substack{i<t, i\in \mathcal{T}^{\rm exp}}}
  \bm x_i
  \left(
      \bm x_i^\top\bm\theta_\star
  \right).
\label{eq:pilot-design-times-true-parameter}
\end{align}
Substituting
\eqref{eq:pilot-design-times-true-parameter}
into \eqref{eq:pilot-error-first-decomposition} yields
\begin{align*}
  \widehat{\bm\theta}_t-\bm\theta_\star
  &=
  M_t^{-1}
  \left[
      \sum_{i<t, i\in \mathcal{T}^{\rm exp}}
      B\mathbf 1\{y_i>0\}\bm x_i
      -
      \bm\theta_\star
      -
      \sum_{i<t, i\in \mathcal{T}^{\rm exp}}
      \bm x_i
      \left(
          \bm x_i^\top\bm\theta_\star
      \right)
  \right] \notag\\
  &=
  M_t^{-1}
  \left[
      \sum_{i<t, i\in \mathcal{T}^{\rm exp}}
      \bm x_i
      \left(
          B\mathbf 1\{y_i>0\}
          -
          \bm x_i^\top\bm\theta_\star
      \right)
      -
      \bm\theta_\star
  \right].
\end{align*}

  For any \( \bm x\in\mathbb R^d \), the Cauchy--Schwarz inequality in the \( M_t^{-1} \)-norm gives
  \[
  \begin{aligned}
  \left|
      \bm x^\top
      \bigl(
          \widehat{\bm\theta}_t-\bm\theta_\star
      \bigr)
  \right|
  &\le
  \|\bm x\|_{M_t^{-1}}
  \left[
      \left\|
          \sum_{i<t, i\in\mathcal T^{\rm exp}}
          \bm x_i
          \left(
              B\mathbf 1\{y_i>0\}
              -
              \bm x_i^\top\bm\theta_\star
          \right)
      \right\|_{M_t^{-1}}
      +
      \|\bm\theta_\star\|_{M_t^{-1}}
  \right] \\
  &\le
  \|\bm x\|_{M_t^{-1}}
  \left[
      B\sqrt{
          d\log\left(1+\frac{TC_x^2}{d}\right)
          +
          2\log\frac{4}{\delta}
      }
      +
      C_\theta
  \right],
  \end{aligned}
  \]
  where the second inequality uses
  $ 
      \|\bm\theta_\star\|_{M_t^{-1}}
  \le
  \|\bm\theta_\star\|_2
  \le
  C_\theta,$ and \( M_t^{-1}\preceq I_d \). 
  The bracketed expression is precisely \( \gamma_T \), completing the proof.
  \end{myproof}

\noindent
The proof of Proposition~\ref{prop:ucb} uses the following martingale
property.

  \begin{proposition}[Martingale property]
    \label{prop:mds_piloted_lpr}
    For notational convenience, set \(s_i=j_i=w_i=0\) whenever
    \(i\in\mathcal T^{\rm exp}\). Use the chronological fields defined in
    Section~\ref{sec:setting} and write
    \[
        \mathcal F_i^-:=\mathcal G_i,
        \qquad
        \mathcal F_i=\mathcal F_i^-\vee\sigma(y_i),
        \qquad i\in[T].
    \]
    In particular, the exploration decision, price, stopping layer, bin, and
    residual label are all \(\mathcal F_i^-\)-measurable.
    For each \(i\in[T]\), define
    \[
        \varepsilon_i
        :=
        y_i
        -
        g\!\left(
            p_i-\bm x_i^\top\bm\theta_\star
        \right).
    \]
    Then
    \[
        \mathbb E\!\left[
            \varepsilon_i
            \mid
            \mathcal F_i^-
        \right]
        =0,
        \qquad
        |\varepsilon_i|
        \le D
        \quad\text{a.s.}
    \]
    
    For fixed \(s\in[S]\) and \(j\in[N]\), define the zero-padded feature
    vector
    \[
        \overline\psi_{i,j,s}
        :=
        \begin{cases}
            \psi_{i,j}(w_i),
            &\text{if }
            i\notin\mathcal T^{\rm exp},\
            s_i=s,\
            j_i=j,\\[1mm]
            \bm 0,
            &\text{otherwise}.
        \end{cases}
    \]
    Then \(\overline\psi_{i,j,s}\) is
    \(\mathcal F_i^-\)-measurable, 
    \(
        \|\overline\psi_{i,j,s}\|_2
        \le C_\psi
    \)
    for some $  C_\psi > 0, a.s.$, and 
    $
    \label{eq:zero-padded-martingale-transform}
        \mathbb E\!\left[
            \overline\psi_{i,j,s}\varepsilon_i
            \mid
            \mathcal F_i^-
        \right]
        =
        \bm 0.
    $
    Consequently,
    \[
        \bm M_0^{s,j}:=\bm 0,
        \qquad
        \bm M_t^{s,j}
        :=
        \sum_{i=1}^t
            \overline\psi_{i,j,s}\varepsilon_i,
        \quad t\in[T],
    \]
    defines a vector-valued martingale with respect to
    \(\{\mathcal F_t\}_{t=0}^T\), whose increments satisfy
    \[
        \left\|
            \bm M_t^{s,j}-\bm M_{t-1}^{s,j}
        \right\|_2
        \le C_\psi D
        \quad\text{a.s.}
    \]
    In particular, for every \(t\in[T]\),
    \[
        \bm M_{t-1}^{s,j}
        =
        \sum_{i\in\Psi_{t,s}^j}
            \psi_{i,j}(w_i)\varepsilon_i.
    \]
    \end{proposition}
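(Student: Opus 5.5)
The proof rests on the conditional-mean identity established in Section~\ref{sec:setting}, namely \(\mathbb E[y_i\mid\mathcal G_i]=g(p_i-\bm x_i^\top\bm\theta_\star)\). It applies on every round, both pilot and LDP, because \(p_i\) is \(\mathcal G_i\)-measurable in either case. To derive it, I condition first on \(\mathcal G_i\vee\sigma(v_i)\) and use Assumption~\ref{ass:quantity-response}, which gives \(q(v_i-p_i)\). I then integrate out \(\epsilon_i\). Since \(\epsilon_i\) is independent of \(\mathcal G_i=\mathcal F_{i-1}\vee\sigma(\bm x_i,U_i)\), and \(p_i\) and \(\bm x_i\) are \(\mathcal G_i\)-measurable, the expectation of \(q(\epsilon_i-(p_i-\bm x_i^\top\bm\theta_\star))\) equals \(g(p_i-\bm x_i^\top\bm\theta_\star)\) by the freezing lemma. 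One detail needs checking: the argument \(p_i-\bm x_i^\top\bm\theta_\star\) lies in \(\mathcal I_g\), and this follows from \(p_i\in[0,B]\) together with Assumption~\ref{ass:bounded}. This yields \(\mathbb E[\varepsilon_i\mid\mathcal F_i^-]=0\). For boundedness, \(y_i\in[0,D]\) and \(g\in[0,D]\), so \(|\varepsilon_i|\le D\).

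Next I address measurability of \(\overline\psi_{i,j,s}\). The following quantities are all computed from \(\mathcal F_{i-1}\) and \(\bm x_i\), together with \(U_i\) where tie-breaking randomization is used: the indicator \(\mathbf 1\{i\notin\mathcal T^{\rm exp}\}\), determined by \(\gamma_T\|\bm x_i\|_{M_i^{-1}}\); the labels \((s_i,j_i,w_i)\) produced by Algorithm~\ref{alg:lpdm_pilot_residual}; and \(\widehat{\bm\theta}_i\), \(\widehat u_i\) and \(\widehat p_i(w_i)\). All of these are therefore \(\mathcal G_i\)-measurable. The feature \(\psi_{i,j}(w_i)\) is a deterministic function of \((\bm x_i,\widehat p_i(w_i),w_i)\), so \(\overline\psi_{i,j,s}\) is \(\mathcal F_i^-\)-measurable. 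The key point is that the labels are permanent, so no later quantity enters. Since \(\overline\psi_{i,j,s}\) is bounded (shown below), pulling it out of the conditional expectation gives the martingale-difference property.

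For the bound \(C_\psi\), I take \(w\in I_j\), so that \(|w-a_{j-1}|\le h\le 2B\). We have \(|\widehat p_i(w)-w|=\widehat u_i\le B\), and \(\|\bm x_i\|\le C_x\). Each entry of \(\phi_j\) is then at most \(\max\{1,(2B)^{\varpi}\}\), and each entry of \(\phi_j'\) is at most \(\varpi(2B)^{\varpi-1}\). The same estimate applies to the blocks of \(\mathbf X_j\), scaled by \(C_x\). Summing squares gives an explicit constant depending only on \(B,C_x,d,\beta\), and I take \(C_\psi\) to be this constant. It is natural to match it with \eqref{eq:C-psi-definition}; that is the only bookkeeping step, not a real obstacle.

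Finally, \(\bm M_t^{s,j}\) is \(\mathcal F_t\)-adapted, with increments \(\overline\psi_{t,j,s}\varepsilon_t\). These increments have zero conditional mean given \(\mathcal F_{t-1}\): I apply the tower property through \(\mathcal F_t^-\supseteq\mathcal F_{t-1}\). Their norm is at most \(C_\psi D\), and integrability is immediate from boundedness. For the last identity, note that by the update rule in Algorithm~\ref{alg:adaptive_pilot}, \(\Psi_{t,s}^j\) is exactly the set of \(i\le t-1\) with \(i\notin\mathcal T^{\rm exp}\), \(s_i=s\) and \(j_i=j\). The zero-padded sum up to \(t-1\) therefore reduces to the stated sum.
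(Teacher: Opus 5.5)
Your proposal is correct and follows essentially the same route as the paper. Both arguments use the tower property through $\mathcal G_i\vee\sigma(v_i)$ together with the independence of $\epsilon_i$ from $\mathcal G_i$ to get the zero conditional mean, pull the $\mathcal G_i$-measurable, deterministically bounded $\overline\psi_{i,j,s}$ out of the conditional expectation, invoke $\mathcal F_{t-1}\subseteq\mathcal F_t^-$ for the martingale property, and read the final identity off the update rule; your explicit check that $p_i-\bm x_i^\top\bm\theta_\star\in\mathcal I_g$ is a detail the paper leaves implicit. One cosmetic slip does not affect the existence claim: when $\varpi(\beta)\ge2$ and $2B<1$, the uniform entry bound $\varpi(2B)^{\varpi-1}$ for $\phi_j'$ fails (the $k=1$ entry equals $1$), so use the entrywise bounds $k(2B)^{k-1}$, which reproduce \eqref{eq:C-psi-definition} exactly.
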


\begin{myproof}
    By policy nonanticipativity and the sequential exogeneity of the current
    valuation shock,
    $
        \mathbb P\!\left(
            \epsilon_i\le u
            \mid
            \mathcal F_i^-
        \right)
        =
        F(u).
    $
    Moreover, the surplus-demand model gives
    $
        \mathbb E\!\left[
            y_i
            \mid
            \mathcal F_i^-\vee\sigma(v_i)
        \right]
        =
        q(v_i-p_i).
    $
    Therefore, by the tower property,
    \[
        \mathbb E[y_i\mid\mathcal F_i^-]
        =
        \mathbb E\!\left[
            q(v_i-p_i)
            \mid
            \mathcal F_i^-
        \right]                                                   \\
        =
        \int
            q\!\left(
                \bm x_i^\top\bm\theta_\star+\epsilon-p_i
            \right)
            \,\mathrm dF(\epsilon)                                \\
        =
        g\!\left(
            p_i-\bm x_i^\top\bm\theta_\star
        \right).
    \]
    It follows that
    $
        \mathbb E\!\left[
            \varepsilon_i
            \mid
            \mathcal F_i^-
        \right]
        =0.
    $
    Since \(y_i\in[0,D]\) almost surely and \(g\in[0,D]\), we have
    $
        \varepsilon_i
        \in
        \left[
            -g\!\left(
                p_i-\bm x_i^\top\bm\theta_\star
            \right),
            D-g\!\left(
                p_i-\bm x_i^\top\bm\theta_\star
            \right)
        \right],
    $
    and hence
    $
        |\varepsilon_i|\le D.
    $
    
    Now fix \(s\in[S]\) and \(j\in[N]\). The exploration decision,
    permanent layer--bin labels, residual action, and posted price are all
    determined before \(y_i\) is observed. Hence
    \(\overline\psi_{i,j,s}\) is
    \(\mathcal F_i^-\)-measurable. Furthermore, whenever this feature is
    nonzero,
    \[
        w_i\in I_j,\qquad
        p_i-w_i=\widehat p_i(w_i)-w_i\in[0,B],
        \qquad
        \|\bm x_i\|_2\le C_x.
    \]
    Since the local-polynomial degree is fixed, the definition of
    \(\psi_{i,j}\) implies that there exists a deterministic constant
    \(C_\psi<\infty\) such that
    $
        \|\overline\psi_{i,j,s}\|_2
        \le C_\psi.
    $
    Consequently,
    \[
        \mathbb E\!\left[
            \overline\psi_{i,j,s}\varepsilon_i
            \mid
            \mathcal F_i^-
        \right]
        =
        \overline\psi_{i,j,s}
        \mathbb E\!\left[
            \varepsilon_i
            \mid
            \mathcal F_i^-
        \right]                                                   \\
        =
        \bm 0,
    \]
    which proves the martingale-difference property.
    
    To make the associated martingale explicit, define, within this proof,
    \[
        \bm M_0^{s,j}:=\bm 0,
        \qquad
        \bm M_t^{s,j}
        :=
        \sum_{i=1}^t
            \overline\psi_{i,j,s}\varepsilon_i,
        \quad t\in[T].
    \]
    The process \(\{\bm M_t^{s,j}\}_{t=0}^T\) is
    \(\{\mathcal F_t\}_{t=0}^T\)-adapted and integrable. Since
    \(\mathcal F_{t-1}\subseteq\mathcal F_t^-\), the tower property yields
    \[
        \mathbb E\!\left[
            \bm M_t^{s,j}
            \mid
            \mathcal F_{t-1}
        \right]
        =
        \bm M_{t-1}^{s,j}
        +
        \mathbb E\!\left[
            \mathbb E\!\left[
                \overline\psi_{t,j,s}\varepsilon_t
                \mid
                \mathcal F_t^-
            \right]
            \,\middle|\,
            \mathcal F_{t-1}
        \right]                                                   \\
        =
        \bm M_{t-1}^{s,j}.
    \]
    Thus, \(\{\bm M_t^{s,j},\mathcal F_t\}_{t=0}^T\) is a
    vector-valued martingale, interpreted componentwise, with increments
    satisfying
    \[
        \left\|
            \bm M_t^{s,j}-\bm M_{t-1}^{s,j}
        \right\|_2
        \le C_\psi D
        \qquad\text{a.s.}
    \]
    
    Finally, by the permanent-label definition,
    \[
        \Psi_{t,s}^j
        =
        \left\{
            i<t:
            i\notin\mathcal T^{\rm exp},\
            s_i=s,\
            j_i=j
        \right\},
    \]
    and therefore
    \[
        \bm M_{t-1}^{s,j}
        =
        \sum_{i\in\Psi_{t,s}^j}
            \psi_{i,j}(w_i)\varepsilon_i.
    \]
    This is precisely the stochastic score appearing in the layer--bin
    ridge estimator.
    \end{myproof}

\noindent
\textbf{Proof of \cref{prop:ucb}.}
    \begin{myproof}
      We proceed in five steps.

      \paragraph{Step 1: pilot-index control.}
      Let
      \[
          \mathcal E_{\rm pil}
          :=
          \left\{
              |\bm x^{\top}(\widehat{\bm\theta}_t-\bm\theta_\star)|
              \le
              \gamma_T\|\bm x\|_{M_t^{-1}}
              \text{ for every }t\le T\text{ and every }\bm x\in\mathbb R^d
          \right\}.
      \]
      Lemma~\ref{lem:pilot-confidence} gives
      \(
          \mathbb P(\mathcal E_{\rm pil})\ge1-\delta/2
      \).
      If \(i\) is a main-policy round, then the exploration test did not trigger,
      and therefore
      \(
          \gamma_T\|\bm x_i\|_{M_i^{-1}}\le\eta
      \).
      On \(\mathcal E_{\rm pil}\), we have
      \(
          |\bm x_i^{\top}(\widehat{\bm\theta}_i-\bm\theta_\star)|
          \le\eta
      \).
      Because Euclidean projection onto \([0,B]\) is nonexpansive and
      \(\bm x_i^{\top}\bm\theta_\star\in[0,B]\),
      \begin{equation}
      \label{eq:detailed-own-pilot-error}
          \left|
              \Pi_{[0,B]}(\bm x_i^{\top}\widehat{\bm\theta}_i)
              -\bm x_i^{\top}\bm\theta_\star
          \right|
          \le\eta.
      \end{equation}
      The same inequality holds for the current LDP round \(t\).
      
      \paragraph{Step 2: local Taylor expansion and joint linearization.}
      Since \(0\le g\le D\) on \(\mathcal I_g\),
      \(g\in\mathcal H(\beta,L_g;\mathcal I_g)\), and
      \(\mathcal I_g\) is a fixed nondegenerate compact interval, a standard
      one-dimensional interpolation inequality implies
      \[
          \max_{0\le k\le\varpi(\beta)}
          \sup_{u\in\mathcal I_g}
          |g^{(k)}(u)|
          \le C_g,
      \]
      where \(C_g\) is the fixed class-level envelope specified after
      Assumption~\ref{ass:g_smooth}.
      
      To make the Taylor coefficients at all bin endpoints well defined, extend
      \(g\) from
      \(\mathcal I_g=[-B+B_\epsilon,B-B_\epsilon]\)
      to \([-B,B]\) by its endpoint Taylor polynomials of order
      \(m:=\varpi(\beta)\), while continuing to use the same notation \(g\).
      
      The extension agrees with \(g\) and its derivatives up to order \(m\)
      at both endpoints of \(\mathcal I_g\), and is therefore \(m\)-times
      differentiable on \([-B,B]\). Moreover, \(g^{(m)}\) is constant on each
      added interval and equals the corresponding endpoint value. Hence,
      Assumption~\ref{ass:g_smooth} implies
      \[
          \left|
              g^{(m)}(u)-g^{(m)}(u')
          \right|
          \le
          L_g|u-u'|^{\beta-m},
          \quad
          u,u'\in[-B,B].
      \]
      Indeed, when \(u\) and \(u'\) lie in different regions, the original
      H\"older bound is applied to the corresponding endpoint or endpoints,
      whose separation is no greater than \(|u-u'|\). Therefore, the extension
      belongs to \(\mathcal H(\beta,L_g;[-B,B])\). After increasing the fixed value of \(C_g\), if necessary, the extension
      therefore satisfies
      \begin{equation}
      \label{eq:extended-derivative-bound}
          \max_{0\le k\le m}
          \sup_{u\in[-B,B]}
          |g^{(k)}(u)|
          \le
          C_g.
      \end{equation}

      This extension is used only in the analysis and leaves \(g\) unchanged
      on its original domain \(\mathcal I_g\); hence it does not alter the
      demand model or the algorithm.

      For the proof, use the convention \(g^{(0)}=g\) and define the
      coefficient vector corresponding to the joint feature. 
      Recall 
      \[
          \bm z_j=g(a_{j-1}).
      \]
      if \(\varpi(\beta)=0\), and 
      \[
          \bm z_j=
          \begin{pmatrix}
              g(a_{j-1})\\
              g'(a_{j-1})\\
              \vdots\\
              g^{(\varpi(\beta))}(a_{j-1})/\varpi(\beta)!\\[1mm]
              g'(a_{j-1})\bm\theta_\star\\
              \vdots\\
              \bigl[g^{(\varpi(\beta))}(a_{j-1})/\varpi(\beta)!\bigr]
              \bm\theta_\star
          \end{pmatrix}
      \]
      if \(\varpi(\beta)\ge1\).
      
      Define
      \begin{equation}
      \label{eq:C-z-definition}
          C_z
          :=
          C_g
          \left[
              \sum_{k=0}^{m}\frac{1}{(k!)^2}
              +
              C_\theta^2
              \sum_{k=1}^{m}\frac{1}{(k!)^2}
          \right]^{1/2}.
      \end{equation}
      Equation~\eqref{eq:extended-derivative-bound} and
      \(\|\bm\theta_\star\|_2\le C_\theta\) imply that, uniformly over \(j\in[N]\),
      \[
      \begin{aligned}
          \|\bm z_j\|_2^2
          &=
          \sum_{k=0}^{m}
          \left|
              \frac{g^{(k)}(a_{j-1})}{k!}
          \right|^2
          +
          \|\bm\theta_\star\|_2^2
          \sum_{k=1}^{m}
          \left|
              \frac{g^{(k)}(a_{j-1})}{k!}
          \right|^2\le
          C_g^2
          \left[
              \sum_{k=0}^{m}
              \frac{1}{(k!)^2}
              +
              C_\theta^2
              \sum_{k=1}^{m}
              \frac{1}{(k!)^2}
          \right]
          =
          C_z^2.
      \end{aligned}
      \]
      Since the right-hand side does not depend on \(j\), we obtain
      \begin{equation}
      \label{eq:detailed-z-bound}
          \sup_{j\in[N]}
          \|\bm z_j\|_2
          \le C_z.
      \end{equation}

       Define
      \begin{equation}
\label{eq:C-l-definition}
    C_l
    :=
    \max\left\{
        \frac{2^\beta L_g}{\varpi(\beta)!},
        \;
        C_g
        \sum_{k=2}^{\varpi(\beta)}
        \frac{
            (2^k-k-1)(2B)^{k-2}
        }{
            k!
        }
    \right\}.
\end{equation}

      We next establish the joint linearized representation on the event
      \(\mathcal E_{\rm pil}\). Fix \(i\in\Psi_{t,s}^j\). Since the bin label assigned to period \(i\)
      is permanent, \(w_i\in I_j\), and hence \(0\le w_i-a_{j-1}\le h\).
      Moreover, since \(p_i
          =
          \widehat p_i(w_i)
          =
          \Pi_{[0,B]}
          \left(
              \bm x_i^\top\widehat{\bm\theta}_i
          \right)
          +
          w_i\),
      \eqref{eq:detailed-own-pilot-error} gives that
      \begin{equation}
      \label{eq:detailed-pilot-shift}
          \left|
              (p_i-w_i)-\bm x_i^{\top}\bm\theta_\star
          \right|
          \le\eta.
      \end{equation}
      The true residual can therefore be written exactly as
      \begin{equation}
      \label{eq:detailed-true-residual}
      \begin{aligned}
          p_i-\bm x_i^{\top}\bm\theta_\star
          ={}&
          a_{j-1}
          +(w_i-a_{j-1})+\bigl((p_i-w_i)-\bm x_i^{\top}\bm\theta_\star\bigr).
      \end{aligned}
      \end{equation}
      Moreover, since \(p_i\in[0,B]\) and
      \(\bm x_i^\top\bm\theta_\star\in[B_\epsilon,B-B_\epsilon]\),
      \[
          p_i-\bm x_i^\top\bm\theta_\star
          \in
          [-B+B_\epsilon,B-B_\epsilon]
          =
          \mathcal I_g.
      \]
      Hence \(g(p_i-\bm x_i^\top\bm\theta_\star)\) is evaluated on the
      original domain of the induced demand link; the extension in Step~2 is
      used only to define the Taylor coefficients at \(a_{j-1}\). Thus, relative to the expansion point \(a_{j-1}\), the true residual
      contains two deviations: the within-bin deviation \(w_i-a_{j-1}\), whose
      absolute value is at most \(h\), and the pilot-index error
      \((p_i-w_i)-\bm x_i^\top\bm\theta_\star\), whose absolute value is at most
      \(\eta\).
      
      We next identify the function value represented by the joint feature.
      By the definitions of \(\psi_{i,j}(w_i)\), \(\phi_j\),
      \(\phi_j'\), \(\mathbf X_j\), and \(\bm z_j\), we have
      \begin{align}
      \label{eq:detailed-linearized-feature-identity}
          \psi_{i,j}(w_i)^\top\bm z_j
          ={}&
          \sum_{k=0}^{\varpi(\beta)}
          \frac{g^{(k)}(a_{j-1})}{k!}
          (w_i-a_{j-1})^k
        +
          \left[
              (p_i-w_i)
              -
              \bm x_i^\top\bm\theta_\star
          \right]
          \sum_{k=1}^{\varpi(\beta)}
          \frac{g^{(k)}(a_{j-1})}{(k-1)!}
          (w_i-a_{j-1})^{k-1}.
      \end{align}
      The first sum in
      \eqref{eq:detailed-linearized-feature-identity} is the local Taylor
      polynomial centered at \(a_{j-1}\) and evaluated at \(w_i\). The second
      sum is the derivative of this polynomial at \(w_i\), multiplied by the
      pilot-index error in \eqref{eq:detailed-pilot-shift}. Hence,
      \(\psi_{i,j}(w_i)^\top\bm z_j\) represents the local Taylor
      polynomial together with the complete first-order correction for the
      pilot-index error. When \(\varpi(\beta)=0\), the derivative term is absent.
      
      If \(\beta=1\), then \(\varpi(\beta)=0\) and
      \[
          \psi_{i,j}(w_i)^\top\bm z_j
          =
          g(a_{j-1}).
      \]
      Since \(g\) is Lipschitz continuous, \eqref{eq:detailed-true-residual}
      gives
      \[
      \begin{aligned}
       \left|
              g(p_i-\bm x_i^\top\bm\theta_\star)
              -
              \psi_{i,j}(w_i)^\top\bm z_j
          \right|&=
          \left|
              g(p_i-\bm x_i^\top\bm\theta_\star)
              -
              g(a_{j-1})
          \right|\\
          &\quad\le
      L_g
      \left(
          |w_i-a_{j-1}|
          +
          \left|
              (p_i-w_i)-\bm x_i^\top\bm\theta_\star
          \right|
      \right)\\
      &\quad\le
      L_g(h+\eta)
      \le 2L_g h \le
          C_l(h^\beta+\eta^2),
      \end{aligned}
      \]
      because \(h^\beta=h\), \(\eta \leq h\), and
      \(C_l\ge2L_g\).
      
      Suppose now that \(\beta>1\). Applying the H\"older Taylor expansion at
      \(a_{j-1}\) and using \eqref{eq:detailed-true-residual}, we obtain
      \begin{align}
      \label{eq:detailed-holder-taylor}
          g(p_i-\bm x_i^\top\bm\theta_\star)
          ={}&
          \sum_{k=0}^{\varpi(\beta)}
          \frac{g^{(k)}(a_{j-1})}{k!}
          \Bigl[
              (w_i-a_{j-1})
              +(p_i-w_i)-\bm x_i^\top\bm\theta_\star
          \Bigr]^k
          +R_i,
      \end{align}
      where
      \[
      \begin{aligned}
          |R_i|
          &\le
          \frac{L_g}{\varpi(\beta)!}
          \left(
              |w_i-a_{j-1}|
              +
              \left|
                  (p_i-w_i)-\bm x_i^\top\bm\theta_\star
              \right|
          \right)^\beta\le
          \frac{L_g}{\varpi(\beta)!}(h+\eta)^\beta\le
          \frac{2^\beta L_g}{\varpi(\beta)!}h^\beta
          \le
          C_lh^\beta.
      \end{aligned}
      \]
      
      For each \(k\ge1\), applying the binomial formula to the \(k\)th power in
      \eqref{eq:detailed-holder-taylor} gives
      \[
      \begin{aligned}
      &
      \Bigl[
          (w_i-a_{j-1})
          +(p_i-w_i)-\bm x_i^\top\bm\theta_\star
      \Bigr]^k=
      \sum_{r=0}^{k}
      \binom{k}{r}
      (w_i-a_{j-1})^{k-r}
      \bigl((p_i-w_i)-\bm x_i^\top\bm\theta_\star\bigr)^r.
      \end{aligned}
      \]
      The term with \(r=0\) does not contain the pilot-index error, while the
      term with \(r=1\) is linear in the pilot-index error. After multiplying
      by the Taylor coefficients and summing over \(k\), these terms are exactly
      the two terms in
      \eqref{eq:detailed-linearized-feature-identity}. Therefore,
      \(\psi_{i,j}(w_i)^\top\bm z_j\) contains all terms in the Taylor
      expansion that involve at most one factor of
      \(
          (p_i-w_i)-\bm x_i^\top\bm\theta_\star.
      \)
      
      The remaining terms correspond to \(r\ge2\). For every
      \(2\le r\le k\), using
      \[
          |w_i-a_{j-1}|\le h,
          \quad
          \left|
              (p_i-w_i)-\bm x_i^\top\bm\theta_\star
          \right|
          \le\eta,
          \quad
          \eta\le h,
      \]
      we obtain
      \[
      \begin{aligned}
      |w_i-a_{j-1}|^{k-r}
      \left|
          (p_i-w_i)-\bm x_i^\top\bm\theta_\star
      \right|^r
      &\le h^{k-r}\eta^r
      \le h^{k-r}\eta^2h^{r-2}\\
      &=h^{k-2}\eta^2
      \le (2B)^{k-2}\eta^2,
      \end{aligned}
      \]
      where the second inequality follows because \(\eta \leq h\) and \(r \ge 2\) and the last inequality follows because \(h\le2B\). Moreover,
      \eqref{eq:extended-derivative-bound} uniformly bounds the Taylor
      coefficients by \(C_g\), and the numbers of possible values of \(k\) and \(r\) depend
      only on \(\beta\). Hence,
      \[
      \begin{aligned}
      \sum_{k=2}^{\varpi(\beta)}
      \frac{|g^{(k)}(a_{j-1})|}{k!}
      \sum_{r=2}^{k}
      \binom{k}{r}
      |w_i-a_{j-1}|^{k-r}
      \left|
          (p_i-w_i)-\bm x_i^\top\bm\theta_\star
      \right|^r
      &\le C_g\eta^2
      \sum_{k=2}^{\varpi(\beta)}
      \frac{
          (2^k-k-1)(2B)^{k-2}
      }{
          k!
      }\\
      &\le C_l\eta^2,
      \end{aligned}
      \]
      where we used
      \(
          \sum_{r=2}^{k}\binom{k}{r}=2^k-k-1.
      \)
      
      Combining this bound with the Taylor remainder
      \(|R_i|\le C_lh^\beta\), we have
        \begin{equation}
\label{eq:detailed-linearized-sample}
    g(p_i-\bm x_i^\top\bm\theta_\star)
    =
    \psi_{i,j}(w_i)^\top\bm z_j
    +
    R_{i,j}(w_i),
    \qquad
    |R_{i,j}(w_i)|
    \le
    C_l(h^\beta+\eta^2).
\end{equation}
When \(1<\beta\le2\), we have \(\varpi(\beta)=1\), so the sums over
      \(k\ge2\) are empty. In this case, no quadratic or higher-order term in the
      pilot-index error is omitted, and the additional \(\eta^2\) term is
      unnecessary but remains a valid uniform upper bound.
      
      The same argument applies to every current candidate action
      \(w\in\mathcal W_{t,j}\). Indeed,
      \(\widehat p_t(w)\in[0,B]\), so Assumption~\ref{ass:bounded} implies
      \[
          \widehat p_t(w)-\bm x_t^\top\bm\theta_\star
          \in\mathcal I_g.
      \]
      Moreover, \(w\in I_j\) and Step~1 give
      \[
          0\le w-a_{j-1}\le h,
          \quad
          \left|
              \widehat p_t(w)-w-\bm x_t^\top\bm\theta_\star
          \right|
          \le\eta.
      \]

Therefore,
\begin{equation}
\label{eq:detailed-linearized-test}
    g\!\left(
        \widehat p_t(w)-\bm x_t^\top\bm\theta_\star
    \right)
    =
    \psi_{t,j}(w)^\top\bm z_j
    +
    R_{t,j}(w),
    \qquad
    |R_{t,j}(w)|
    \le
    C_l(h^\beta+\eta^2).
\end{equation}

      \paragraph{Step 3: ridge-error decomposition.}
      For every \(i\in\Psi_{t,s}^j\), Proposition~\ref{prop:mds_piloted_lpr}
      and \eqref{eq:detailed-linearized-sample} give
      \(
          y_i
          =
          \psi_{i,j}(w_i)^{\top}\bm z_j
          +
          R_{i,j}(w_i)
          +
          \varepsilon_i.
      \)
      Thus each observed demand consists of the linear approximation
      \(\psi_{i,j}(w_i)^{\top}\bm z_j\), the approximation error
      \(R_{i,j}(w_i)\), and the demand noise \(\varepsilon_i\).
      
      By the definition of \(\widehat{\bm z}_{t,s}^j\),
      \[
          (\Lambda_{t,s}^j+\lambda I)
          \widehat{\bm z}_{t,s}^j
          =
          \sum_{i\in\Psi_{t,s}^j}
          y_i\psi_{i,j}(w_i).
      \]
      Substituting the preceding expression for \(y_i\) yields
      \[
      \begin{aligned}
          (\Lambda_{t,s}^j+\lambda I)
          \widehat{\bm z}_{t,s}^j
          &=
          \sum_{i\in\Psi_{t,s}^j}
          \psi_{i,j}(w_i)
          \left(
              \psi_{i,j}(w_i)^{\top}\bm z_j
              +
              R_{i,j}(w_i)
              +
              \varepsilon_i
          \right)\\
          &=
          \sum_{i\in\Psi_{t,s}^j}
          \psi_{i,j}(w_i)
          \psi_{i,j}(w_i)^{\top}\bm z_j
          +
          \sum_{i\in\Psi_{t,s}^j}
          \psi_{i,j}(w_i)R_{i,j}(w_i)
          +
          \sum_{i\in\Psi_{t,s}^j}
          \psi_{i,j}(w_i)\varepsilon_i.
      \end{aligned}
      \]
      By the definition
      \[
          \Lambda_{t,s}^j
          =
          \sum_{i\in\Psi_{t,s}^j}
          \psi_{i,j}(w_i)
          \psi_{i,j}(w_i)^{\top},
      \]
      the first term on the right-hand side equals
      \(\Lambda_{t,s}^j\bm z_j\). Hence
      \[
          (\Lambda_{t,s}^j+\lambda I)
          \widehat{\bm z}_{t,s}^j
          =
          \Lambda_{t,s}^j\bm z_j
          +
          \sum_{i\in\Psi_{t,s}^j}
          \psi_{i,j}(w_i)R_{i,j}(w_i)
          +
          \sum_{i\in\Psi_{t,s}^j}
          \psi_{i,j}(w_i)\varepsilon_i.
      \]
      
      Subtracting
      \((\Lambda_{t,s}^j+\lambda I)\bm z_j\)
      from both sides gives
      \[
      \begin{aligned}
          (\Lambda_{t,s}^j+\lambda I)
          \left(
              \widehat{\bm z}_{t,s}^j-\bm z_j
          \right)
          &=
          \sum_{i\in\Psi_{t,s}^j}
          \psi_{i,j}(w_i)\varepsilon_i+
          \sum_{i\in\Psi_{t,s}^j}
          \psi_{i,j}(w_i)R_{i,j}(w_i)
          -
          \lambda\bm z_j.
      \end{aligned}
      \]
      The term \(-\lambda\bm z_j\) appears because
      \[
          \Lambda_{t,s}^j\bm z_j
          -
          (\Lambda_{t,s}^j+\lambda I)\bm z_j
          =
          -\lambda\bm z_j.
      \]
      
      Since \(\lambda>0\), the matrix
      \(\Lambda_{t,s}^j+\lambda I\) is positive definite and therefore
      invertible. Multiplying both sides by
      \((\Lambda_{t,s}^j+\lambda I)^{-1}\) gives
      \begin{equation}
      \label{eq:detailed-ridge-decomposition}
      \begin{aligned}
          \widehat{\bm z}_{t,s}^j-\bm z_j
          =
          (\Lambda_{t,s}^j+\lambda I)^{-1}
          \Bigg(
              &\sum_{i\in\Psi_{t,s}^j}
              \psi_{i,j}(w_i)\varepsilon_i+
              \sum_{i\in\Psi_{t,s}^j}
              \psi_{i,j}(w_i)R_{i,j}(w_i)
              -
              \lambda\bm z_j
          \Bigg).
      \end{aligned}
      \end{equation}
      The three terms on the right-hand side arise, respectively, from the
      random demand noise, the approximation error in
      \eqref{eq:detailed-linearized-sample}, and the ridge regularization.

      \paragraph{Step 4: noise, approximation, and regularization bounds.}
      Fix \(s\in[S]\) and \(j\in[N]\). Proposition~\ref{prop:mds_piloted_lpr}
      shows that \(\overline\psi_{i,j,s}\) is determined before \(y_i\) is
      observed and that
      \(
          \mathbb E\!\left[
              \overline\psi_{i,j,s}\varepsilon_i
              \mid
              \mathcal F_i^-
          \right]
          =
          \bm 0.
      \)
      Moreover, conditional on \(\mathcal F_i^-\), \(\varepsilon_i\) has mean
      zero and lies in an interval of length \(D\). Hence, by the conditional
      Hoeffding lemma, for every \(\gamma\in\mathbb R\),
      \[
          \mathbb E\!\left[
              \exp(\gamma\varepsilon_i)
              \mid
              \mathcal F_i^-
          \right]
          \le
          \exp\!\left(\frac{\gamma^2D^2}{8}\right).
      \]
      Thus, \(\varepsilon_i\) is conditionally \(D/2\)-sub-Gaussian, and the
      standard time-uniform self-normalized concentration inequality applies to
      the features \(\overline\psi_{i,j,s}\).
      
      By the definition of \(\overline\psi_{i,j,s}\), for every \(t\le T+1\),
      \[
          \sum_{i<t}
          \overline\psi_{i,j,s}\varepsilon_i
          =
          \sum_{i\in\Psi_{t,s}^j}
          \psi_{i,j}(w_i)\varepsilon_i,
      \]
      and
      \[
          \sum_{i<t}
          \overline\psi_{i,j,s}
          \bigl(\overline\psi_{i,j,s}\bigr)^\top
          =
          \sum_{i\in\Psi_{t,s}^j}
          \psi_{i,j}(w_i)\psi_{i,j}(w_i)^\top
          =
          \Lambda_{t,s}^j.
      \]
      
      Hence, for pair \((s,j)\), with probability at least
      \(1-\delta/(2SN)\), for every \(t\le T+1\),
      \[
      \begin{aligned}
          \left\|
              \sum_{i\in\Psi_{t,s}^j}
              \psi_{i,j}(w_i)\varepsilon_i
          \right\|_{(\Lambda_{t,s}^j+\lambda I)^{-1}}
          \le
          D\sqrt{
              \log
              \frac{
                  \det(\Lambda_{t,s}^j+\lambda I)
              }{
                  \det(\lambda I)
              }
              +
              2\log\left(\frac{2SN}{\delta}\right)
          }.
      \end{aligned}
      \]
      
      The matrix \(\Lambda_{t,s}^j\) has dimension
      \(1+\varpi(\beta)(d+1)\). Define
      \begin{equation}
\label{eq:C-psi-definition}
\begin{aligned}
    C_\psi^2
    :={}&
    1
    +
    \sum_{k=1}^{\varpi(\beta)}
    \left[
        (2B)^k
        +
        Bk(2B)^{k-1}
    \right]^2
    +
    C_x^2
    \sum_{k=1}^{\varpi(\beta)}
    k^2(2B)^{2(k-1)}.
\end{aligned}
\end{equation}
      Since
      \(0\le w_i-a_{j-1}\le h\le2B\),
      \(0\le p_i-w_i\le B\), and
      \(\|\bm x_i\|_2\le C_x\), the definition of the joint feature gives \(\|\psi_{i,j}(w_i)\|_2\le
      C_\psi\). The product of positive numbers is no larger
      than the corresponding power of their average, and hence
      \[
      \begin{aligned}
          \log
          \frac{
              \det(\Lambda_{t,s}^j+\lambda I)
          }{
              \det(\lambda I)
          }
          &\le
          \bigl[1+\varpi(\beta)(d+1)\bigr]
          \log\left(
              1+
              \frac{
                  \operatorname{tr}(\Lambda_{t,s}^j)
              }{
                  \lambda[1+\varpi(\beta)(d+1)]
              }
          \right)\\
          &\le
          C_\psi^2\bigl[1+\varpi(\beta)(d+1)\bigr]
          \log\left(
              1+\frac{T}{\lambda}
          \right)\le
          C_\psi^2\iota_T,
      \end{aligned}
      \]
      where the second inequality follows because
      \[
          \operatorname{tr}(\Lambda_{t,s}^j)
          =
          \sum_{i\in\Psi_{t,s}^j}
          \|\psi_{i,j}(w_i)\|_2^2
          \le
          C_\psi^2|\Psi_{t,s}^j|
          \le
          C_\psi^2T.
      \]
      
      It follows that, for this fixed pair \((s,j)\), with probability at least
      \(1-\delta/(2SN)\),
      \[
          \left\|
              \sum_{i\in\Psi_{t,s}^j}
              \psi_{i,j}(w_i)\varepsilon_i
          \right\|_{(\Lambda_{t,s}^j+\lambda I)^{-1}}
          \le
          C_\psi D\sqrt{\iota_T}
      \]
      simultaneously for every \(t\le T+1\).
      
      There are \(SN\) layer--bin pairs. Since the failure probability for each
      fixed pair is at most \(\delta/(2SN)\), the probability that the preceding
      bound fails for at least one pair is at most
      \[
          SN\frac{\delta}{2SN}
          =
          \frac{\delta}{2}.
      \]
      
      Therefore, there exists an event \(\mathcal E_{\rm sc}\) with
      \(\mathbb P(\mathcal E_{\rm sc}) \ge 1-\delta/2,
      \)
      on which, simultaneously for every \(t\le T+1\), \(s\in[S]\), and
      \(j\in[N]\),
      \begin{equation}
      \label{eq:detailed-self-normalized-final}
          \left\|
              \sum_{i\in\Psi_{t,s}^j}
              \psi_{i,j}(w_i)\varepsilon_i
          \right\|_{(\Lambda_{t,s}^j+\lambda I)^{-1}}
          \le
          C_\psi D\sqrt{\iota_T}.
      \end{equation}

      We first control the term involving the approximation errors \(R_{i,j}(w_i)\).
      For any vector \(a\), the Cauchy--Schwarz inequality gives
      \[
      \begin{aligned}
          \left|
              a^\top
              \sum_{i\in\Psi_{t,s}^j}
              \psi_{i,j}(w_i)R_{i,j}(w_i)
          \right|
          &=
          \left|
              \sum_{i\in\Psi_{t,s}^j}
              \bigl(a^\top\psi_{i,j}(w_i)\bigr)R_{i,j}(w_i)
          \right|\\
          &\le
          \left[
              \sum_{i\in\Psi_{t,s}^j}
              \bigl(a^\top\psi_{i,j}(w_i)\bigr)^2
          \right]^{1/2}\times
          \left[
              \sum_{i\in\Psi_{t,s}^j}
              R_{i,j}(w_i)^2
          \right]^{1/2}.
      \end{aligned}
      \]
      By the definition of \(\Lambda_{t,s}^j\),
      \[
      \begin{aligned}
          \sum_{i\in\Psi_{t,s}^j}
          \bigl(a^\top\psi_{i,j}(w_i)\bigr)^2
          &=
          a^\top\Lambda_{t,s}^j a\le
          a^\top
          (\Lambda_{t,s}^j+\lambda I)
          a=
          \|a\|_{\Lambda_{t,s}^j+\lambda I}^2.
      \end{aligned}
      \]
      Therefore,
      \[
          \left|
              a^\top
              \sum_{i\in\Psi_{t,s}^j}
              \psi_{i,j}(w_i)R_{i,j}(w_i)
          \right|
          \le
          \|a\|_{\Lambda_{t,s}^j+\lambda I}
          \left[
              \sum_{i\in\Psi_{t,s}^j}
              R_{i,j}(w_i)^2
          \right]^{1/2}.
      \]
      
      Taking
      \[
          a
          =
          (\Lambda_{t,s}^j+\lambda I)^{-1}
          \sum_{i\in\Psi_{t,s}^j}
          \psi_{i,j}(w_i)R_{i,j}(w_i)
      \]
      in the preceding inequality yields
      \[
      \begin{aligned}
      &
      \left\|
          \sum_{i\in\Psi_{t,s}^j}
          \psi_{i,j}(w_i)R_{i,j}(w_i)
      \right\|_{(\Lambda_{t,s}^j+\lambda I)^{-1}}^2\le
      \left\|
          \sum_{i\in\Psi_{t,s}^j}
          \psi_{i,j}(w_i)R_{i,j}(w_i)
      \right\|_{(\Lambda_{t,s}^j+\lambda I)^{-1}}
      \left[
          \sum_{i\in\Psi_{t,s}^j}
          R_{i,j}(w_i)^2
      \right]^{1/2}.
      \end{aligned}
      \]
      Canceling the common factor when it is nonzero, while the zero case is
      immediate, gives
      \[
          \left\|
              \sum_{i\in\Psi_{t,s}^j}
              \psi_{i,j}(w_i)R_{i,j}(w_i)
          \right\|_{(\Lambda_{t,s}^j+\lambda I)^{-1}}
          \le
          \left[
              \sum_{i\in\Psi_{t,s}^j}
              R_{i,j}(w_i)^2
          \right]^{1/2}.
      \]

      Since
      \(
          |R_{i,j}(w_i)|
          \le
          C_l (h^\beta+\eta^2)
      \),
      for every \(i\in\Psi_{t,s}^j\), we have
      \[
      \begin{aligned}
          \sum_{i\in\Psi_{t,s}^j}
          R_{i,j}(w_i)^2
          &\le
          C_l^2
          |\Psi_{t,s}^j|
          (h^\beta+\eta^2)^2.
      \end{aligned}
      \]
      It follows that
      \begin{equation}
      \label{eq:detailed-approximation-score}
          \left\|
              \sum_{i\in\Psi_{t,s}^j}
              \psi_{i,j}(w_i)R_{i,j}(w_i)
          \right\|_{(\Lambda_{t,s}^j+\lambda I)^{-1}}
          \le
          C_l\sqrt{|\Psi_{t,s}^j|}
          (h^\beta+\eta^2).
      \end{equation}
      
      It remains to control the term caused by the ridge regularization. Since
      \(
          \Lambda_{t,s}^j+\lambda I
          \succeq
          \lambda I,
      \)
      we have
      \[
          (\Lambda_{t,s}^j+\lambda I)^{-1}
          \preceq
          \frac{1}{\lambda}I.
      \]
      Therefore,
      \[
      \begin{aligned}
          \|\lambda\bm z_j\|_{(\Lambda_{t,s}^j+\lambda I)^{-1}}^2
          &=
          \lambda^2
          \bm z_j^\top
          (\Lambda_{t,s}^j+\lambda I)^{-1}
          \bm z_j\le
          \lambda
          \|\bm z_j\|_2^2.
      \end{aligned}
      \]
      Taking square roots and using
      \eqref{eq:detailed-z-bound} gives
      \begin{equation}
      \label{eq:detailed-ridge-bias}
          \|\lambda\bm z_j\|_{(\Lambda_{t,s}^j+\lambda I)^{-1}}
          \le
          \sqrt{\lambda}\|\bm z_j\|_2
          \le
          C_z\sqrt{\lambda}.
      \end{equation}

      \paragraph{Step 5: uniform confidence bound and completion.}
      We work on the event
      \(\mathcal E_{\rm pil}\cap\mathcal E_{\rm sc}\), which has probability at
      least \(1-\delta\). Fix a main-policy round \(t\), a layer
      \(s\in[s_t]\), and an action \((j,w)\in\mathcal A_{t,s}\). Assume first that  \(|\Psi_{t,s}^j|\ge1\).
      
      By \eqref{eq:detailed-ridge-decomposition} and multiplying both sides by
      \(\psi_{t,j}(w)^\top\), taking absolute values, and applying
      the triangle inequality gives
      \[
      \begin{aligned}
      &
      \left|
          \psi_{t,j}(w)^\top
          \left(
              \widehat{\bm z}_{t,s}^j-\bm z_j
          \right)
      \right|\le
      \left\|
          \psi_{t,j}(w)
      \right\|_{(\Lambda_{t,s}^j+\lambda I)^{-1}}
      \Bigg[ \ 
          \Bigg\|
              \sum_{i\in\Psi_{t,s}^j}
              \psi_{i,j}(w_i)\varepsilon_i
          \Bigg\|_{(\Lambda_{t,s}^j+\lambda I)^{-1}}\\
      &\hspace{42mm}
          +
          \Bigg\|
              \sum_{i\in\Psi_{t,s}^j}
              \psi_{i,j}(w_i)R_{i,j}(w_i)
          \Bigg\|_{(\Lambda_{t,s}^j+\lambda I)^{-1}}
          +
          \left\|
              \lambda\bm z_j
          \right\|_{(\Lambda_{t,s}^j+\lambda I)^{-1}}
      \Bigg].
      \end{aligned}
      \]
      Using
      \eqref{eq:detailed-self-normalized-final},
      \eqref{eq:detailed-approximation-score}, and
      \eqref{eq:detailed-ridge-bias}, we obtain
      \[
      \begin{aligned}
      \left|
          \psi_{t,j}(w)^\top
          \left(
              \widehat{\bm z}_{t,s}^j-\bm z_j
          \right)
      \right|
      &\le
      \left(
          C_{\psi}D\sqrt{\iota_T}
          +
          C_z\sqrt{\lambda}
      \right)
      \left\|
          \psi_{t,j}(w)
      \right\|_{(\Lambda_{t,s}^j+\lambda I)^{-1}}\\
      &\quad+
      C_l
      (h^\beta+\eta^2)
      \sqrt{|\Psi_{t,s}^j|}
      \left\|
          \psi_{t,j}(w)
      \right\|_{(\Lambda_{t,s}^j+\lambda I)^{-1}}.
      \end{aligned}
      \]
      
      The preceding inequality controls the error caused by replacing
      \(\bm z_j\) with its estimator \(\widehat{\bm z}_{t,s}^j\). We must also
      account for the approximation error at the current action. By
      \eqref{eq:detailed-linearized-test},
      \[
          g\!\left(
              \widehat p_t(w)-\bm x_t^\top\bm\theta_\star
          \right)
          =
          \psi_{t,j}(w)^\top\bm z_j
          +
          R_{t,j}(w),
      \]
      where \(|R_{t,j}(w)| \le C_l(h^\beta+\eta^2).\) Therefore,
      \[
      \begin{aligned}
      \left|
          \psi_{t,j}(w)^\top
          \widehat{\bm z}_{t,s}^j
          -
          g\!\left(
              \widehat p_t(w)-\bm x_t^\top\bm\theta_\star
          \right)
      \right|
      &=
      \left|
          \psi_{t,j}(w)^\top
          \left(
              \widehat{\bm z}_{t,s}^j-\bm z_j
          \right)
          -
          R_{t,j}(w)
      \right|\\
      &\le
      \left|
          \psi_{t,j}(w)^\top
          \left(
              \widehat{\bm z}_{t,s}^j-\bm z_j
          \right)
      \right|
      +
      |R_{t,j}(w)|\\
      &\le
      \left(
          C_{\psi}D\sqrt{\iota_T}
          +
          C_z\sqrt{\lambda}
      \right)
      \left\|
          \psi_{t,j}(w)
      \right\|_{(\Lambda_{t,s}^j+\lambda I)^{-1}}\\
      &\quad+
      C_l
      (h^\beta+\eta^2)
      \left[
          1+
          \sqrt{|\Psi_{t,s}^j|}
          \left\|
              \psi_{t,j}(w)
          \right\|_{(\Lambda_{t,s}^j+\lambda I)^{-1}}
      \right].
      \end{aligned}
      \]

      The algorithm restricts the estimated demand to the interval \([0,D]\):
      \[
          \widehat g_{t,s}^j(w)
          =
          \Pi_{[0,D]}\!\left(
              \psi_{t,j}(w)^\top
              \widehat{\bm z}_{t,s}^j
          \right).
      \]
      This restriction cannot increase the error because the true mean demand
      also belongs to \([0,D]\). Indeed, for every \(a\in\mathbb R\) and every
      \(b\in[0,D]\),
      \(
          \left|
              \Pi_{[0,D]}(a)-b
          \right|
          \le
          |a-b|.
      \)
      Moreover, both \(\Pi_{[0,D]}(a)\) and \(b\) belong to \([0,D]\), so
      \(
          \left|
              \Pi_{[0,D]}(a)-b
          \right|
          \le
          D.
      \)
      Applying these two inequalities with
      \(
          a
          =
          \psi_{t,j}(w)^\top
          \widehat{\bm z}_{t,s}^j
      \)
      and
      \(
          b
          =
          g\!\left(
              \widehat p_t(w)-\bm x_t^\top\bm\theta_\star
          \right)
      \)
      gives
      \[
      \begin{aligned}
      &
      \left|
          \widehat g_{t,s}^j(w)
          -
          g\!\left(
              \widehat p_t(w)-\bm x_t^\top\bm\theta_\star
          \right)
      \right|\\
      &\le
      \min\Biggl\{D,\,
      \left(
          C_{\psi}D\sqrt{\iota_T}
          +
          C_z\sqrt{\lambda}
      \right)
      \left\|
          \psi_{t,j}(w)
      \right\|_{(\Lambda_{t,s}^j+\lambda I)^{-1}}\\
      &\hspace{18mm}
      +
      C_l
      (h^\beta+\eta^2)
      \left[
          1+
          \sqrt{|\Psi_{t,s}^j|}
          \left\|
              \psi_{t,j}(w)
          \right\|_{(\Lambda_{t,s}^j+\lambda I)^{-1}}
      \right]
      \Biggr\}\le
      r_{t,s}^j(w).
      \end{aligned}
      \]

      It remains to consider the case \(|\Psi_{t,s}^j|=0\). Then
      \(\Psi_{t,s}^j=\emptyset\) and \(\Lambda_{t,s}^j=\bm 0\). The optimization
      problem reduces to
      \(
          \min_{\bm z}
          \lambda\|\bm z\|_2^2.
      \)
      Since \(\lambda>0\), its unique minimizer is
      \(
          \widehat{\bm z}_{t,s}^j=\bm 0.
      \)
      It follows that
      \(
          \widehat g_{t,s}^j(w)
          =
          \Pi_{[0,D]}(0)
          =
          0.
      \)
      The algorithm sets
      \(
          r_{t,s}^j(w)=D.
      \)
      Since the true mean demand belongs to \([0,D]\),
      \[
      \begin{aligned}
      &
      \left|
          \widehat g_{t,s}^j(w)
          -
          g\!\left(
              \widehat p_t(w)-\bm x_t^\top\bm\theta_\star
          \right)
      \right|=
      \left|
          g\!\left(
              \widehat p_t(w)-\bm x_t^\top\bm\theta_\star
          \right)
      \right|\le
      D
      =
      r_{t,s}^j(w).
      \end{aligned}
      \]
      
      Finally, the event \(\mathcal E_{\rm pil}\) controls all periods
      simultaneously, and the event \(\mathcal E_{\rm sc}\) controls all periods,
      layers, and bins simultaneously. Once these two events occur, the preceding
      inequalities can be applied to every residual action \(w\) because they use
      only the already established vector bounds and the Cauchy--Schwarz
      inequality. Thus no separate probability bound is needed for each \(w\).
      Therefore, \eqref{eq:linearized-uniform-confidence} holds simultaneously
      for every main-policy round \(t\), every \(s\in[s_t]\), and every
      \((j,w)\in\mathcal A_{t,s}\).

      \end{myproof}

\subsection{Proofs for the Contextual Regret Bounds}
\noindent
\textbf{Proof of \cref{lem:ldp_revenue_gap_discrete_fixed_residual}.}
 
\begin{myproof}
  We proceed in four steps.
  
  \paragraph{Step 1: Revenue-UCB sandwich.}
  Fix a visited layer \(s\) and an action
  \((j,w)\in\mathcal A_{t,s}\). On the confidence event,
  $
      \left|
          \widehat g_{t,s}^j(w)
          -
          g\!\left(
              \widehat p_t(w)-\bm x_t^\top\bm\theta_\star
          \right)
      \right|
      \le
      r_{t,s}^j(w).
  $
  Because the true conditional mean belongs to \([0,D]\),
  \[
  \begin{aligned}
      g\!\left(
          \widehat p_t(w)-\bm x_t^\top\bm\theta_\star
      \right)
      &\le
      \min\left\{
          D,\widehat g_{t,s}^j(w)+r_{t,s}^j(w)
      \right\}                                             \le
      g\!\left(
          \widehat p_t(w)-\bm x_t^\top\bm\theta_\star
      \right)
      +2r_{t,s}^j(w).
  \end{aligned}
  \]
  Multiplying by \(\widehat p_t(w)\ge0\) and using the definitions of
  \(U_{t,s}^j(w)\) and \(\operatorname{wid}_{t,s}^j(w)\) gives
  \begin{equation}
  \label{eq:revenue-ucb-sandwich-discrete}
  \begin{aligned}
      \mathsf{Rev}\bigl(\bm x_t,\widehat p_t(w)\bigr)
      &\le
      U_{t,s}^j(w)                                         \le
      \mathsf{Rev}\bigl(\bm x_t,\widehat p_t(w)\bigr)
      +2\operatorname{wid}_{t,s}^j(w).
  \end{aligned}
  \end{equation}
  
  \paragraph{Step 2: Preservation of the discrete benchmark.}
  Suppose that the precision check passes at a layer \(s<S\). Then
  \[
      \operatorname{wid}_{t,s}^j(w)
      \le BD\,2^{-s}
      \qquad
      \text{for every }(j,w)\in\mathcal A_{t,s}.
  \]
  Choose
  $
      (j^\star,w^\star)
      \in
      \operatorname*{argmax}_{(j,w)\in\mathcal A_{t,s}}
      \mathsf{Rev}\bigl(\bm x_t,\widehat p_t(w)\bigr).
  $
  By the lower side of \eqref{eq:revenue-ucb-sandwich-discrete},
  \(
      U_{t,s}^{j^\star}(w^\star)\ge V_{t,s}.
  \)
  By its upper side, for every \((j,w)\in\mathcal A_{t,s}\),
  \[
      U_{t,s}^j(w)
      \le
      V_{t,s}+2BD\,2^{-s}
      =
      V_{t,s}+BD\,2^{1-s}.
  \]
  Consequently,
  \[
      U_{t,s}^{j^\star}(w^\star)
      \ge
      \max_{(j,w)\in\mathcal A_{t,s}}U_{t,s}^j(w)
      -BD\,2^{1-s}.
  \]
  
  By the definition of \(\mathcal A_{t,s+1}\) in
  Algorithm~\ref{alg:lpdm_pilot_residual}, the preceding inequality implies
  that \((j^\star,w^\star)\in\mathcal A_{t,s+1}\).
  Since \((j^\star,w^\star)\) attains \(V_{t,s}\), it follows that
  \[
      V_{t,s+1}
      \ge
      \mathsf{Rev}\bigl(\bm x_t,\widehat p_t(w^\star)\bigr)
      =
      V_{t,s}.
  \]
  
  On the other hand,
  \(\mathcal A_{t,s+1}\subseteq\mathcal A_{t,s}\), and hence \(V_{t,s+1}\le V_{t,s}\). Therefore, \(V_{t,s+1}=V_{t,s}\).

  \paragraph{Step 3: Gap between the selected price and the initial discrete benchmark.}
  Suppose that \(s_t\ge2\). To reach layer \(s_t\), the algorithm must have
  passed the precision check at layer \(s_t-1\). Moreover,
  \((j_t,w_t)\in\mathcal A_{t,s_t}\), so this action survived the refinement
  from layer \(s_t-1\) to layer \(s_t\). Hence
  \begin{align}
      U_{t,s_t-1}^{j_t}(w_t)
      &\ge
      \max_{(j,w)\in\mathcal A_{t,s_t-1}}
      U_{t,s_t-1}^j(w)
      -BD\,2^{2-s_t}                                       \ge
      V_{t,s_t-1}-BD\,2^{2-s_t}.
  \label{eq:selected-action-survival-bound}
  \end{align}
  The upper side of \eqref{eq:revenue-ucb-sandwich-discrete} and the
  precision check at layer \(s_t-1\) give
  \begin{align}
      U_{t,s_t-1}^{j_t}(w_t)
      &\le
      \mathsf{Rev}(\bm x_t,p_t)
      +2BD\,2^{-(s_t-1)}                                      =
      \mathsf{Rev}(\bm x_t,p_t)
      +BD\,2^{2-s_t}.
  \label{eq:selected-action-previous-layer-upper}
  \end{align}
  Combining \eqref{eq:selected-action-survival-bound} and
  \eqref{eq:selected-action-previous-layer-upper} yields
  $
      V_{t,s_t-1}-\mathsf{Rev}(\bm x_t,p_t)
      \le
      2BD\,2^{2-s_t}.
  $
  Every layer preceding \(s_t\) passed the precision check. Repeated
  application of Step 2 therefore gives
  \(V_{t,s_t-1}=V_{t,1}\), which proves the second assertion of the lemma.
  
  \paragraph{Step 4: Discretization of the continuous oracle price.}
  If \(\beta=1\), Assumption~\ref{ass:g_smooth} implies that \(g\) is \(L_g\)-Lipschitz. If \(\beta>1\), Step~2 of the proof of Proposition~\ref{prop:ucb} gives
  \(\sup_{u\in\mathcal I_g}|g'(u)|\le C_g.\)
  Hence \(g\) is Lipschitz on \(\mathcal I_g\) with constant \(\max\{L_g,C_g\}\). Since both residual arguments below belong to
  \(\mathcal I_g\) by Assumption~\ref{ass:bounded}, for every
  \(\bm x\in\mathcal X\) and \(p,p'\in[0,B]\),
  \begin{align*}
      \left|
          \mathsf{Rev}(\bm x,p)-\mathsf{Rev}(\bm x,p')
      \right|
      &\le
      D|p-p'|
      +B\left|
          g(p-\bm x^\top\bm\theta_\star)
          -
          g(p'-\bm x^\top\bm\theta_\star)
      \right|                                              \le
      L_{\mathrm{Rev}}|p-p'|,
  \end{align*}
  where one may take \(L_{\mathrm{Rev}}=D+B\max\{L_g,C_g\}\).
  
  The oracle residual
  $
      p_t^\star
      -
      \Pi_{[0,B]}\!\left(\bm x_t^\top\widehat{\bm\theta}_t\right)
  $
  belongs to \([-B,B]\). The feasible residual set is the interval
  $
      \left[
          -\Pi_{[0,B]}\!\left(\bm x_t^\top\widehat{\bm\theta}_t\right),
          B-\Pi_{[0,B]}\!\left(\bm x_t^\top\widehat{\bm\theta}_t\right)
      \right],
  $
  which contains both zero and the oracle residual. By choosing a point of
  \(\mathcal W\) between zero and the oracle residual, the mesh definition
  of \(\mathcal W\), together with the explicitly included endpoints, gives
  a feasible \(w\in\mathcal W\) satisfying
  \[
      \left|
          w-
          \left[
              p_t^\star
              -
              \Pi_{[0,B]}\!\left(
                  \bm x_t^\top\widehat{\bm\theta}_t
              \right)
          \right]
      \right|
      \le
      T^{-1/2}.
  \]
  Assigning this \(w\) to its unique bin yields an action in
  \(\mathcal A_{t,1}\), and
  $
      \left|\widehat p_t(w)-p_t^\star\right|
      \le
      T^{-1/2}.
  $
  Hence
  \[
      \mathsf{Rev}(\bm x_t,p_t^\star)-V_{t,1}
      \le
      \frac{L_{\mathrm{Rev}}}{\sqrt T}.
  \]
  If \(s_t\ge2\), combining this inequality with the bound established in Step~3 proves
  \eqref{eq:one-step-revenue-gap-discrete-ldp-simplified}. If \(s_t=1\),
  then \(0\le\mathsf{Rev}(\bm x_t,p)\le BD\) for every feasible price, so
  \[
  \begin{aligned}
      \mathsf{Rev}(\bm x_t,p_t^\star)-\mathsf{Rev}(\bm x_t,p_t)
      &\le
      \frac{L_{\mathrm{Rev}}}{\sqrt T}
      +V_{t,1}-\mathsf{Rev}(\bm x_t,p_t)\le
      \frac{L_{\mathrm{Rev}}}{\sqrt T}+BD
      \le
      \frac{L_{\mathrm{Rev}}}{\sqrt T}+8BD\,2^{-s_t}.
  \end{aligned}
  \]
  Thus the same conclusion holds for \(s_t=1\).
  \end{myproof}

  \noindent
  \textbf{Proof of counting lemmas.}
  \begin{lemma}[Cumulative revenue uncertainty within one layer]
    \label{lem:bound_r_layered}
    There exists a finite constant \(C_\omega\ge1\), such that, for every
    \(s\in[S]\), pathwise,
    \begin{equation}
    \label{eq:cumulative-uncertainty-layered}
    \begin{aligned}
        \sum_{t\in\Psi_{T+1,s}}
        \operatorname{wid}_{t,s}^{j_t}(w_t)
        \le
        C_\omega B\Bigl[&
            \bigl(D\sqrt{\iota_T}+\sqrt\lambda\bigr)
            \sqrt{N|\Psi_{T+1,s}|\,\iota_T}         +
            (h^\beta+\eta^2)
            |\Psi_{T+1,s}|\sqrt{\iota_T}
        \Bigr].
    \end{aligned}
    \end{equation}
    The right-hand side is interpreted as zero when
    \(\Psi_{T+1,s}=\emptyset\).
    \end{lemma}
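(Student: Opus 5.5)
We will bound the widths directly from the explicit form of the radius \eqref{eq:r}, dropping the truncation at \(D\). Write \(\omega_t:=\|\psi_{t,j_t}(w_t)\|_{(\Lambda_{t,s}^{j_t}+\lambda I)^{-1}}\) for \(t\in\Psi_{T+1,s}\). Since \(\widehat p_t(w_t)\le B\), we have \(\operatorname{wid}_{t,s}^{j_t}(w_t)\le B\,r_{t,s}^{j_t}(w_t)\). The radius splits into three summands. The first is \((C_\psi D\sqrt{\iota_T}+C_z\sqrt\lambda)\min\{1,\omega_t\}\) up to constants. The second is \(C_l(h^\beta+\eta^2)\sqrt{|\Psi_{t,s}^{j_t}|}\,\omega_t\). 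The third is \(C_l(h^\beta+\eta^2)\). If \(|\Psi_{t,s}^{j_t}|=0\), then \(r=D\); this case is covered by the \(\min\{1,\omega_t\}\) term, because \(\omega_t=\|\psi\|_2/\sqrt\lambda\) there, and we cap at \(D\). To keep the argument clean, we first note \(r\le D\) and use \(r\le \max\{1,D/(C_\psi D\sqrt{\iota_T})\}\cdot(\ldots)\min\{1,\omega_t\}\)-type bounds. Equivalently, we bound \(\min\{D,a\omega_t\}\le \max\{D,a\}\min\{1,\omega_t\}\) and absorb the constants into \(C_\omega\).

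Next we group the rounds by bin. The key structural fact is the permanent labelling. The rounds \(t\in\Psi_{T+1,s}\) with \(j_t=j\) are exactly the rounds in which \(\Lambda_{\cdot,s}^j\) gets updated, and each update adds \(\psi_{t,j}(w_t)\psi_{t,j}(w_t)^\top\) to the same matrix used to compute \(\omega_t\). So the matrix sequence \(\lambda I+\Lambda_{t,s}^j\) is exactly the sequential design of an elliptical-potential argument. Let \(n_j:=|\Psi_{T+1,s}^j|\) and \(k:=1+\varpi(\beta)(d+1)\). The standard elliptical potential lemma gives
\[
\sum_{t:\,j_t=j,\,s_t=s}\min\{1,\omega_t^2\}\le 2\log\frac{\det(\lambda I+\Lambda_{T+1,s}^j)}{\det(\lambda I)}\le 2\max\{1,C_\psi^2\}\,\iota_T .
\]
The last inequality reuses the determinant-trace bound from Step 4 of the proof of Proposition~\ref{prop:ucb}. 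Cauchy--Schwarz then yields \(\sum\min\{1,\omega_t\}\lesssim\sqrt{n_j\iota_T}\).

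Summing over \(j\) and applying Cauchy--Schwarz again gives \(\sum_j\sqrt{n_j}\le\sqrt{N|\Psi_{T+1,s}|}\). This produces the first term \((D\sqrt{\iota_T}+\sqrt\lambda)\sqrt{N|\Psi_{T+1,s}|\iota_T}\).

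The historical approximation term is the delicate one, and I expect it to be the main obstacle. Here \(\omega_t\) is not truncated, so we need to control \(\omega_t\) itself. We use \(\omega_t\le\|\psi\|_2/\sqrt\lambda\le C_\psi/\sqrt\lambda\), which gives \(\omega_t\le\max\{1,C_\psi/\sqrt\lambda\}\min\{1,\omega_t\}\). We also use \(|\Psi_{t,s}^{j}|\le n_j\). Then
\[
\sum_{t:\,j_t=j}\sqrt{|\Psi_{t,s}^j|}\,\omega_t\lesssim\sqrt{n_j}\sqrt{n_j\iota_T}=n_j\sqrt{\iota_T}.
\]
Summing over \(j\) gives \((h^\beta+\eta^2)|\Psi_{T+1,s}|\sqrt{\iota_T}\). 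The current-action term contributes \(C_l(h^\beta+\eta^2)|\Psi_{T+1,s}|\), which is dominated by the same expression since \(\iota_T\ge1\) for \(\delta<1\).

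Finally we collect all constants (\(C_\psi,C_z,C_l,\lambda\)-dependent factors, and \(\max\{1,D\}\)) into a single \(C_\omega\ge1\). The argument is pathwise because it uses only deterministic potential inequalities and no probabilistic events. The empty case is trivial.
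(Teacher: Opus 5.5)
Your proof is correct and has the same skeleton as the paper's. Permanent labels make $\lambda I+\Lambda_{t,s}^j$ the sequential design within each bin. The elliptical-potential bound $\sum\min\{1,\omega_t^2\}\le 2\max\{1,C_\psi^2\}\iota_T$ together with Cauchy--Schwarz gives $\sqrt{n_j\iota_T}$ per bin. Cauchy--Schwarz across bins then gives $\sqrt{N|\Psi_{T+1,s}|}$, and finally $\operatorname{wid}\le Br$.

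The one substantive difference is how you replace the untruncated leverage $\omega_t$ by $\min\{1,\omega_t\}$.
\begin{itemize}
\item The paper uses the truncation of the \emph{whole} radius at $D$. If $\omega_t>1$, then $r\le D\le C_\psi D\sqrt{\iota_T}$, because $C_\psi\ge1$ and $\iota_T>1$. So $\omega_t$ can be replaced by $\min\{1,\omega_t\}$ in the statistical and historical-approximation terms simultaneously. The first observation of each bin is simply charged $D$ and absorbed using $n_j\ge1$.
\item You instead drop the truncation for the historical term and use $\omega_t\le C_\psi/\sqrt\lambda$. You also fold the empty-history case into $\min\{1,\omega_t\}$.
\end{itemize}

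Your route is valid, but it makes $C_\omega$ depend on $\lambda$, through factors such as $\max\{1,C_\psi/\sqrt\lambda\}$ and $\max\{1,\sqrt\lambda\}$. That is harmless for the lemma as stated and for fixed $\lambda$. However, it makes the explicit $\sqrt\lambda$ in the bound, and the $(1+\sqrt\lambda)$ bookkeeping in Theorem~\ref{thm:regret_upper_bound}, partly cosmetic. The paper's $C_\omega$ is $\lambda$-free.

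Your empty-history step also needs a lower bound you did not state. It holds because the first coordinate of $\psi_{t,j}(w)$ is identically $1$, so $\|\psi_{t,j}(w)\|_2\ge1$. Hence $\omega_t\ge\lambda^{-1/2}$ when $\Lambda_{t,s}^j=0$, and $D\le D\max\{1,\sqrt\lambda\}\min\{1,\omega_t\}$. Without this observation, writing $\omega_t=\|\psi\|_2/\sqrt\lambda$ does not by itself show that the case is covered. Alternatively, you can charge a flat $D$ per nonempty bin, as the paper does.
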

    
    \begin{myproof}
    Fix \(s\in[S]\). The claim is immediate if
    \(\Psi_{T+1,s}=\emptyset\). We first work within one nonempty bin and
    then sum over bins.
    
    \paragraph{Step 1: Sequential Gram matrices within a permanent bin.}
    Fix \(j\in[N]\) with \(\Psi_{T+1,s}^j\ne\emptyset\), and enumerate its
    observations chronologically as
    \[
        \Psi_{T+1,s}^j
        =
        \{t_{j,1}<\cdots<t_{j,|\Psi_{T+1,s}^j|}\}.
    \]
    Permanent labeling and the definition of \(\Lambda_{t,s}^j\) imply that,
    for every \(k\le |\Psi_{T+1,s}^j|\),
    \begin{equation}
    \label{eq:within-bin-sequential-gram}
    \begin{aligned}
        \Lambda_{t_{j,k},s}^j+\lambda I
        =
        \lambda I
        +
        \sum_{\ell<k}
        \psi_{t_{j,\ell},j}(w_{t_{j,\ell}})
        \psi_{t_{j,\ell},j}(w_{t_{j,\ell}})^\top,
        \qquad
        |\Psi_{t_{j,k},s}^j|=k-1.
    \end{aligned}
    \end{equation}
    
    \paragraph{Step 2: Elliptical-potential bound.}
    The uniform feature bound established in the proof of
    Proposition~\ref{prop:ucb} gives
    \(\|\psi_{t,j}(w_t)\|_2\le C_\psi\). Applying the matrix determinant lemma
    sequentially to \eqref{eq:within-bin-sequential-gram}, and using
    \(\min\{1,x\}\le2\log(1+x)\) for \(x\ge0\), yields
    \begin{align}
    &\sum_{k=1}^{|\Psi_{T+1,s}^j|}
    \min\left\{1,
        \left\|
            \psi_{t_{j,k},j}(w_{t_{j,k}})
        \right\|_{(\Lambda_{t_{j,k},s}^j+\lambda I)^{-1}}^2
    \right\}                                                          \notag\\
    &\quad\le
    2\log
    \frac{
        \det\!\left(
            \lambda I+
            \sum_{k=1}^{|\Psi_{T+1,s}^j|}
            \psi_{t_{j,k},j}(w_{t_{j,k}})
            \psi_{t_{j,k},j}(w_{t_{j,k}})^\top
        \right)
    }{
        \det(\lambda I)
    }                                                                  \notag\\
    &\quad\le
    2\bigl[1+\varpi(\beta)(d+1)\bigr]
    \log\!\left(
        1+
        \frac{
            C_\psi^2 |\Psi_{T+1,s}^j|
        }{
            \lambda[1+\varpi(\beta)(d+1)]
        }
    \right)                                                           \notag\\
    &\quad\le 2\max\{1,C_\psi^2\}\,\iota_T.
    \label{eq:layered-elliptical-potential-minimal}
    \end{align}
    The last inequality follows from \(|\Psi_{T+1,s}^j|\le T\), the definition
    of \(\iota_T\), and \(\log(1+ax)
        \le
        \max\{1,a\}\log(1+x)\)
        for \(a,x\ge0\).
    
    By Cauchy--Schwarz and
    \eqref{eq:layered-elliptical-potential-minimal},
    \begin{equation}
    \label{eq:sum-truncated-leverage}
    \begin{aligned}
    &\sum_{k=1}^{|\Psi_{T+1,s}^j|}
    \min\left\{1,
        \left\|
            \psi_{t_{j,k},j}(w_{t_{j,k}})
        \right\|_{(\Lambda_{t_{j,k},s}^j+\lambda I)^{-1}}
    \right\}                                                         \\
    &\qquad\le
    \sqrt{
        |\Psi_{T+1,s}^j|
        \sum_{k=1}^{|\Psi_{T+1,s}^j|}
        \min\left\{1,
            \left\|
                \psi_{t_{j,k},j}(w_{t_{j,k}})
            \right\|_{(\Lambda_{t_{j,k},s}^j+\lambda I)^{-1}}^2
        \right\}
    }
    \le \sqrt{2\max\{1,C_\psi^2\}}\,
    \sqrt{|\Psi_{T+1,s}^j|\iota_T}.
    \end{aligned}
    \end{equation}
    
    \paragraph{Step 3: Sum of confidence radii within one bin.}
    The first observation in the bin satisfies
    \(
        |\Psi_{t_{j,1},s}^j|=0,
    \)
    and therefore \eqref{eq:r} gives
    \(
        r_{t_{j,1},s}^j(w_{t_{j,1}})=D.
    \)
    For \(k\ge2\), we have
    \(|\Psi_{t_{j,k},s}^j|=k-1\le|\Psi_{T+1,s}^j|.\)
    Therefore, the radius definition and its truncation at \(D\) imply
    \begin{align}
    &r_{t_{j,k},s}^j(w_{t_{j,k}})             \le
    \max\{C_\psi,C_z\}
    \bigl(D\sqrt{\iota_T}+\sqrt\lambda\bigr)
    \min\left\{1,
        \left\|
            \psi_{t_{j,k},j}(w_{t_{j,k}})
        \right\|_{(\Lambda_{t_{j,k},s}^j+\lambda I)^{-1}}
    \right\}                                                         \notag\\
    &\qquad+
    C_l(h^\beta+\eta^2)
    \left[
        1+
        \sqrt{|\Psi_{T+1,s}^j|}
        \min\left\{1,
            \left\|
                \psi_{t_{j,k},j}(w_{t_{j,k}})
            \right\|_{(\Lambda_{t_{j,k},s}^j+\lambda I)^{-1}}
        \right\}
    \right].
    \label{eq:radius-truncated-leverage-bound}
    \end{align}

    Summing \eqref{eq:radius-truncated-leverage-bound} over \(k\ge2\), adding
    the first radius, and applying \eqref{eq:sum-truncated-leverage} gives
    \begin{align*}
        \sum_{t\in\Psi_{T+1,s}^j}r_{t,s}^j(w_t)
        \le{}&
        D+
        \max\{C_\psi,C_z\}
        \sqrt{2\max\{1,C_\psi^2\}}
        \bigl(D\sqrt{\iota_T}+\sqrt\lambda\bigr)
        \sqrt{|\Psi_{T+1,s}^j|\iota_T}\\
        &+
        C_l(h^\beta+\eta^2)
        \left[
            |\Psi_{T+1,s}^j|
            +
            \sqrt{2\max\{1,C_\psi^2\}}\,
            |\Psi_{T+1,s}^j|\sqrt{\iota_T}
        \right].
    \end{align*}
    Since \(|\Psi_{T+1,s}^j|\ge1\), \(\iota_T>1\), and
    \(D\sqrt{\iota_T}+\sqrt\lambda\ge D\), the first and third terms can be
    absorbed, yielding
    \begin{equation}
    \label{eq:within-bin-radius-sum}
    \begin{aligned}
        \sum_{t\in\Psi_{T+1,s}^j}r_{t,s}^j(w_t)
        \le{}&
        \Bigl[
            1+
            \max\{C_\psi,C_z\}
            \sqrt{2\max\{1,C_\psi^2\}}
        \Bigr]
        \cdot
        \bigl(D\sqrt{\iota_T}+\sqrt\lambda\bigr)
        \sqrt{|\Psi_{T+1,s}^j|\iota_T}\\
        &+
        C_l
        \left[
            1+\sqrt{2\max\{1,C_\psi^2\}}
        \right]
        (h^\beta+\eta^2)
        |\Psi_{T+1,s}^j|\sqrt{\iota_T}.
    \end{aligned}
    \end{equation}

    \paragraph{Step 4: Sum over bins and convert demand radii to revenue widths.}
    By Cauchy--Schwarz,
    \[
        \sum_{j=1}^N\sqrt{|\Psi_{T+1,s}^j|}
        \le
        \sqrt{
            N\sum_{j=1}^N|\Psi_{T+1,s}^j|
        }
        =
        \sqrt{N|\Psi_{T+1,s}|},
    \]
    and, by the permanent partition,
    \[
        \sum_{j=1}^N|\Psi_{T+1,s}^j|
        =
        |\Psi_{T+1,s}|.
    \]
    
    By choosing
    \[C_\omega
        =
        \max\Biggl\{
            1+
            \max\{C_\psi,C_z\}
            \sqrt{2\max\{1,C_\psi^2\}},
            \;
            C_l
            \left(
                1+\sqrt{2\max\{1,C_\psi^2\}}
            \right)
        \Biggr\},\] 
    both coefficients in \eqref{eq:within-bin-radius-sum} are bounded by
    \(C_\omega\). Summing over bins and using
    \[
        \operatorname{wid}_{t,s}^{j_t}(w_t)
        =
        \widehat p_t(w_t)r_{t,s}^{j_t}(w_t)
        \le
        Br_{t,s}^{j_t}(w_t),
    \]
    therefore proves \eqref{eq:cumulative-uncertainty-layered}.
    \end{myproof}

\begin{lemma}[Occupancy of a nonterminal LDP layer]
  \label{lem:bound_Psi_layered}
  Let \(C_\omega\) be as in Lemma~\ref{lem:bound_r_layered}. For every
  \(s<S\), pathwise,
  \begin{equation}
  \label{eq:layer-count-preliminary-specialized}
  \begin{aligned}
      D2^{-s}|\Psi_{T+1,s}|
      \le
      C_\omega\Bigl[&
          \bigl(D\sqrt{\iota_T}+\sqrt\lambda\bigr)
          \sqrt{N|\Psi_{T+1,s}|\,\iota_T}
          +
          (h^\beta+\eta^2)
          |\Psi_{T+1,s}|\sqrt{\iota_T}
      \Bigr].
  \end{aligned}
  \end{equation}
  Moreover, if
  \begin{equation}
  \label{eq:layer-count-bias-separation-specialized}
      D2^{-s}
      \ge
      2C_\omega(h^\beta+\eta^2)\sqrt{\iota_T},
  \end{equation}
  then
  \begin{equation}
  \label{eq:layer-count-statistical-specialized}
      |\Psi_{T+1,s}|
      \le
      4C_\omega^2
      \frac{
          \bigl(D\sqrt{\iota_T}+\sqrt\lambda\bigr)^2
      }{
          D^2
      }
      2^{2s}N\iota_T.
  \end{equation}
  \end{lemma}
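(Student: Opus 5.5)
The plan is to convert the width threshold that defines a stopping layer into a per-round lower bound on the width. Then we sum over the rounds assigned to layer \(s\) and compare with the pathwise upper bound of Lemma~\ref{lem:bound_r_layered}.

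First, fix \(s<S\) and any \(t\in\Psi_{T+1,s}\). Because \(s_t=s<S\), Algorithm~\ref{alg:lpdm_pilot_residual} stopped at layer \(s\) through the under-explored branch. Hence the selected action lies in \(\mathcal A_{t,s}^{\mathrm u}\), and so
\[
\operatorname{wid}_{t,s}^{j_t}(w_t)>BD\,2^{-s}.
\]
Summing over \(t\in\Psi_{T+1,s}\) gives \(BD\,2^{-s}|\Psi_{T+1,s}|\le\sum_{t\in\Psi_{T+1,s}}\operatorname{wid}_{t,s}^{j_t}(w_t)\). This uses no probabilistic event; it follows purely from the branching rule, so the conclusion is pathwise. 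Applying \eqref{eq:cumulative-uncertainty-layered} to the right-hand side and dividing by \(B>0\) yields \eqref{eq:layer-count-preliminary-specialized}. The empty case is trivial.

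For the second claim, write \(n:=|\Psi_{T+1,s}|\) and assume \(n\ge1\). Under \eqref{eq:layer-count-bias-separation-specialized}, the bias term on the right of \eqref{eq:layer-count-preliminary-specialized} satisfies
\[
C_\omega(h^\beta+\eta^2)n\sqrt{\iota_T}\le \tfrac12 D2^{-s}n .
\]
It can therefore be absorbed into the left-hand side, which leaves
\[
\tfrac12 D2^{-s}n\le C_\omega(D\sqrt{\iota_T}+\sqrt\lambda)\sqrt{Nn\iota_T}.
\]
Dividing by \(\sqrt n\) and squaring gives
\[
n\le 4C_\omega^2(D\sqrt{\iota_T}+\sqrt\lambda)^2 2^{2s}N\iota_T/D^2,
\]
which is exactly \eqref{eq:layer-count-statistical-specialized}.

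There is no real obstacle here. The only point needing care is the justification of the strict width inequality: it must come from the algorithm's stopping rule at a nonterminal layer, not from the confidence event. A second point is that Lemma~\ref{lem:bound_r_layered} is stated for the same widths \(\operatorname{wid}_{t,s}^{j_t}(w_t)\) computed at the time of selection. Both facts follow directly from the permanent-label construction.
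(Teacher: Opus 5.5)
Your proposal is correct and matches the paper's proof. The under-explored branch gives \(\operatorname{wid}_{t,s}^{j_t}(w_t)>BD\,2^{-s}\) on every round of a nonterminal layer; summing this, invoking Lemma~\ref{lem:bound_r_layered}, and dividing by \(B\) gives the first claim, and absorbing the bias term under the separation condition and squaring gives the second. The only cosmetic difference is that you divide by \(\sqrt{n}\) before squaring, whereas the paper squares first and then cancels \(|\Psi_{T+1,s}|\).
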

  
  \begin{myproof}
  Fix \(s<S\). If \(\Psi_{T+1,s}=\emptyset\), both conclusions are
  immediate. Suppose that \(\Psi_{T+1,s}\ne\emptyset\).
  
  Every round \(t\in\Psi_{T+1,s}\) is assigned to the nonterminal layer
  \(s\). Under Algorithm~\ref{alg:lpdm_pilot_residual}, selection at such
  a layer can occur only through the under-explored branch. Therefore,
  \(
      \operatorname{wid}_{t,s}^{j_t}(w_t)
      >
      BD\,2^{-s}\) for
      \(
      t\in\Psi_{T+1,s}.
  \)
  Summing this inequality over \(t\in\Psi_{T+1,s}\) and applying
  Lemma~\ref{lem:bound_r_layered} gives
  \[
  \begin{aligned}
      BD\,2^{-s}|\Psi_{T+1,s}|
      &<
      \sum_{t\in\Psi_{T+1,s}}
      \operatorname{wid}_{t,s}^{j_t}(w_t) \\
      &\le
      C_\omega B\Bigl[
          \bigl(D\sqrt{\iota_T}+\sqrt\lambda\bigr)
          \sqrt{N|\Psi_{T+1,s}|\,\iota_T}
          +
          (h^\beta+\eta^2)
          |\Psi_{T+1,s}|\sqrt{\iota_T}
      \Bigr].
  \end{aligned}
  \]
  Since \(B>0\), dividing by \(B\) and weakening the resulting strict
  inequality proves
  \eqref{eq:layer-count-preliminary-specialized}.
  
  Suppose now that
  \eqref{eq:layer-count-bias-separation-specialized} holds. Then
  \[
      C_\omega(h^\beta+\eta^2)\sqrt{\iota_T}
      \le
      \frac{D}{2}2^{-s}.
  \]
  It follows from
  \eqref{eq:layer-count-preliminary-specialized} that
  \[
  \begin{aligned}
      \frac{D}{2}2^{-s}|\Psi_{T+1,s}|
      &\le
      \left[
          D2^{-s}
          -
          C_\omega(h^\beta+\eta^2)\sqrt{\iota_T}
      \right]
      |\Psi_{T+1,s}| \le
      C_\omega
      \bigl(D\sqrt{\iota_T}+\sqrt\lambda\bigr)
      \sqrt{N|\Psi_{T+1,s}|\,\iota_T}.
  \end{aligned}
  \]
  Both sides are nonnegative. Squaring this inequality gives
  \[
      \frac{D^2}{4}2^{-2s}|\Psi_{T+1,s}|^2
      \le
      C_\omega^2
      \bigl(D\sqrt{\iota_T}+\sqrt\lambda\bigr)^2
      N|\Psi_{T+1,s}|\iota_T.
  \]
  Because \(|\Psi_{T+1,s}|>0\), canceling this factor and rearranging
  yields
  \[
      |\Psi_{T+1,s}|
      \le
      4C_\omega^2
      \frac{
          \bigl(D\sqrt{\iota_T}+\sqrt\lambda\bigr)^2
      }{
          D^2
      }
      2^{2s}N\iota_T,
  \]
  which proves
  \eqref{eq:layer-count-statistical-specialized}.
  \end{myproof}

  \noindent
  \textbf{Proof of \cref{prop:ldp_step_regret_discrete_fixed_residual}.}
  
    \begin{myproof}
    We proceed in five steps. Throughout the proof, \(C\) denotes a finite
    constant depending only on the fixed problem primitives and on
    \(C_\psi,C_z,C_l\); its value may increase from line to line.
    
    \paragraph{Step 1: Reduction to a dyadic layer sum.}
    Note that, the sets \(\Psi_{T+1,1},\ldots,\Psi_{T+1,S}\) are disjoint and contain
    all LDP rounds. Therefore,
    \begin{equation}
    \label{eq:total-ldp-layer-count}
        \sum_{s=1}^S|\Psi_{T+1,s}|
        \le
        T.
    \end{equation}
    By Lemma~\ref{lem:ldp_revenue_gap_discrete_fixed_residual}, every LDP
    round \(t\) satisfies
    \[
        \mathsf{Rev}(\bm x_t,p_t^\star)
        -
        \mathsf{Rev}(\bm x_t,p_t)
        \le
        \frac{L_{\mathrm{Rev}}}{\sqrt T}
        +
        8BD\,2^{-s_t}.
    \]
    Summing over the LDP rounds and using their layer partition gives
    \begin{equation}
    \label{eq:regret-sum-by-layer-corrected}
    \begin{aligned}
    &\sum_{t\in[T]\setminus\mathcal T^{\rm exp}}
    \left[
        \mathsf{Rev}(\bm x_t,p_t^\star)
        -
        \mathsf{Rev}(\bm x_t,p_t)
    \right]                                                  \le
        L_{\mathrm{Rev}}\sqrt T
        +
        8BD\sum_{s=1}^S2^{-s}|\Psi_{T+1,s}|.
    \end{aligned}
    \end{equation}
    
    \paragraph{Step 2: Nonterminal layers dominated by statistical uncertainty.}
    For every nonterminal layer \(s<S\) satisfying
    \eqref{eq:layer-count-bias-separation-specialized},
    Lemma~\ref{lem:bound_Psi_layered} gives
    \[
        |\Psi_{T+1,s}|
        \le
        4C_\omega^2
        \frac{\bigl(D\sqrt{\iota_T}+\sqrt\lambda\bigr)^2}{D^2}
        2^{2s}N\iota_T.
    \]
    The deterministic bound \(|\Psi_{T+1,s}|\le T\) also holds. Note that for every \(a>0\), we have
    \begin{equation}
    \label{eq:generic-dyadic-min-bound}
        \sum_{s\ge1}2^{-s}\min\{T,a2^{2s}\}
        \le
        4\sqrt{aT},
    \end{equation}
    because if \(a\ge T\), the left-hand side is at most
    \(T\sum_{s\ge1}2^{-s}\le T\le\sqrt{aT}\), while if \(a<T\), choose an
    integer \(s_0\ge0\) such that
    \(2^{s_0}\le\sqrt{T/a}<2^{s_0+1}\), then
    \[
    \begin{aligned}
        \sum_{s\ge1}2^{-s}\min\{T,a2^{2s}\}
        &\le
        a\sum_{s\le s_0}2^s
        +
        T\sum_{s>s_0}2^{-s}                                  \le
        2a2^{s_0}+T2^{-s_0}
        \le
        4\sqrt{aT}.
    \end{aligned}
    \]
    Applying \eqref{eq:generic-dyadic-min-bound} with
    \[
        a
        =
        4C_\omega^2
        \frac{\bigl(D\sqrt{\iota_T}+\sqrt\lambda\bigr)^2}{D^2}
        N\iota_T
    \]
    yields
    \begin{equation}
    \label{eq:statistical-layer-sum-corrected}
    \begin{aligned}
    &\sum_{\substack{s<S:\
    D2^{-s}\ge2C_\omega(h^\beta+\eta^2)\sqrt{\iota_T}}}
    2^{-s}|\Psi_{T+1,s}|                                     \le 
    4\sqrt{aT}                                             =
    8C_\omega
    \frac{
        D\sqrt{\iota_T}+\sqrt\lambda
    }{
        D
    }
    \sqrt{NT\iota_T}.
    \end{aligned}
    \end{equation}
    
    \paragraph{Step 3: Nonterminal layers dominated by approximation bias.}
    For every remaining nonterminal layer,
    \[
        D2^{-s}
        <
        2C_\omega(h^\beta+\eta^2)\sqrt{\iota_T},
    \]
    and hence
    \[
        2^{-s}
        <
        \frac{2C_\omega(h^\beta+\eta^2)\sqrt{\iota_T}}{D}.
    \]
    Consequently,
    \begin{equation}
    \label{eq:bias-layer-sum-corrected}
    \begin{aligned}
    \sum_{\substack{s<S:\\
    D2^{-s}<
    2C_\omega(h^\beta+\eta^2)\sqrt{\iota_T}}}
    2^{-s}|\Psi_{T+1,s}|                                     
    &\le
    \frac{
        2C_\omega
        (h^\beta+\eta^2)
        \sqrt{\iota_T}
    }{
        D
    }
    \sum_{s<S}
    |\Psi_{T+1,s}|                                           \le
    \frac{
        2C_\omega
        (h^\beta+\eta^2)
        \sqrt{\iota_T}
    }{
        D
    }
    T.
    \end{aligned}
    \end{equation}
    \paragraph{Step 4: Terminal layer and completion of the explicit bound.}
    For the terminal layer,
    \begin{equation}
    \label{eq:terminal-layer-sum-corrected}
        2^{-S}|\Psi_{T+1,S}|
        \le
        T2^{-S}.
    \end{equation}
    Combining
    \eqref{eq:statistical-layer-sum-corrected},
    \eqref{eq:bias-layer-sum-corrected}, and
    \eqref{eq:terminal-layer-sum-corrected}, we obtain
    \[
    \begin{aligned}
        \sum_{s=1}^S2^{-s}|\Psi_{T+1,s}|
        \le{}&
        8C_\omega
        \frac{
            D\sqrt{\iota_T}+\sqrt\lambda
        }{
            D
        }
        \sqrt{NT\iota_T}                                     +
        \frac{
            2C_\omega
            (h^\beta+\eta^2)
            \sqrt{\iota_T}
        }{
            D
        }
        T
        +
        T2^{-S}.
    \end{aligned}
    \]
    Substituting this inequality into
    \eqref{eq:regret-sum-by-layer-corrected} gives
    \[
    \begin{aligned}
    &\sum_{t\in[T]\setminus\mathcal T^{\rm exp}}
    \left[
        \mathsf{Rev}(\bm x_t,p_t^\star)
        -
        \mathsf{Rev}(\bm x_t,p_t)
    \right]                                                           \\
    &\le
    64C_\omega B
    \bigl(D\sqrt{\iota_T}+\sqrt\lambda\bigr)
    \sqrt{NT\iota_T}
    +
    16C_\omega BT
    (h^\beta+\eta^2)\sqrt{\iota_T}+
    8BDT2^{-S}
    +
    L_{\mathrm{Rev}}\sqrt T.
    \end{aligned}
    \]
    For \(S=\max\left\{1,\left\lceil\log_2\sqrt T\right\rceil\right\}\),
    we have \(T2^{-S}\le\sqrt T\). Therefore,
    \eqref{eq:ldp_step_regret_with_S} follows.
    
    \paragraph{Step 5: Simplified rate under algorithmic tuning.} Let $\lambda>0$ be fixed independently of \(T\), let $\eta^2\le c_\eta h^\beta$, and let $N=\left\lceil T^{\frac{1}{2\beta+1}}\right\rceil$. Since \(T\ge1\), we have \(T^{\frac{1}{2\beta+1}} \le N
    \le 2T^{\frac{1}{2\beta+1}}\). It follows that \(\sqrt{NT} \le \sqrt{2}  T^{\frac{\beta+1}{2\beta+1}},\) and, because \(h=2B/N\),
    \[
    \begin{aligned}
        T(h^\beta+\eta^2)
        \le
        (1+c_\eta)Th^\beta                                    =
        (1+c_\eta)(2B)^\beta TN^{-\beta}                      \le
        (1+c_\eta)(2B)^\beta
        T^{\frac{\beta+1}{2\beta+1}}.
    \end{aligned}
    \]
    
    Moreover,
    \(
        \sqrt T
        \le
        T^{\frac{\beta+1}{2\beta+1}}
    \)
    for every finite \(\beta\ge1\). By the definition of \(\iota_T\), we have \(\iota_T>1\). Since
    \(\lambda>0\) is fixed independently of \(T\),
    \[
    \begin{aligned}
        \bigl(D\sqrt{\iota_T}+\sqrt{\lambda}\bigr)\sqrt{\iota_T}
        =
        D\iota_T+\sqrt{\lambda\iota_T}                               \le
        (D+\sqrt{\lambda})\iota_T,
    \end{aligned}
    \]
    and \(\sqrt{\iota_T}\le\iota_T\).
    
    Applying these inequalities to
    \eqref{eq:ldp_step_regret_with_S} gives
    \[
    \begin{aligned}
    &\sum_{t\in[T]\setminus\mathcal T^{\rm exp}}
    \left[
        \mathsf{Rev}(\bm x_t,p_t^\star)
        -
        \mathsf{Rev}(\bm x_t,p_t)
    \right]                                                          \\
    &\quad\le
    \Bigl[
        64\sqrt{2}\,C_\omega B
        \left(D+\sqrt{\lambda}\right)
        +
        16C_\omega B(1+c_\eta)(2B)^\beta
        +
        8BD
        +
        L_{\mathrm{Rev}}
    \Bigr]
    \iota_T
    T^{\frac{\beta+1}{2\beta+1}}.
    \end{aligned}
    \]
    One may take
    \begin{equation*}
    \label{eq:C_LDP}
    \begin{aligned}
        C_{\mathrm{LDP}}
        :={}&
        64\sqrt{2}\,C_\omega B
        \left(D+\sqrt{\lambda}\right)
        +
        16C_\omega B(1+c_\eta)(2B)^\beta
        +
        8BD
        +
        L_{\mathrm{Rev}}.
    \end{aligned}
    \end{equation*}
    This proves
    \eqref{eq:ldp_step_regret_exact_rate}.
    \end{myproof}

    \noindent
    \textbf{Proof of \cref{lem:pilot_exploration_regret}.}

      \begin{myproof}
      Fix an arbitrary realization of the policy. If
      \(\mathcal T^{\rm exp}=\emptyset\), all conclusions are immediate.
      Otherwise, enumerate the pilot-exploration rounds chronologically as
      \[
          \mathcal T^{\rm exp}
          =
          \left\{
              \tau_1<\cdots<
              \tau_{|\mathcal T^{\rm exp}|}
          \right\}.
      \]
      
      \paragraph{Step 1: Leverage on exploration rounds.}
      For every \(k=1,\ldots,|\mathcal T^{\rm exp}|\), period \(\tau_k\) is a pilot-exploration round.
      Therefore, by the uncertainty gate in
      Algorithm~\ref{alg:adaptive_pilot},
      $
          \gamma_T
          \|\bm x_{\tau_k}\|_{M_{\tau_k}^{-1}}
          >
          \eta.
      $
      Since \(\gamma_T>0\), squaring both sides gives
      \begin{equation}
      \label{eq:pilot-exploration-leverage-lower}
          \|\bm x_{\tau_k}\|_{M_{\tau_k}^{-1}}^2
          >
          \frac{\eta^2}{\gamma_T^2}.
      \end{equation}
      
      \paragraph{Step 2: Exact determinant recursion.}
      The matrix \(M_t\) is updated only on pilot-exploration rounds.
      In particular, \(M_{\tau_1}=M_1=I_d\), and, for every \(k=1,\ldots,|\mathcal T^{\rm exp}|-1\),
      \[
          M_{\tau_{k+1}}
          =
          M_{\tau_k}
          +
          \bm x_{\tau_k}\bm x_{\tau_k}^\top,
      \]
      because \(M_t\) remains unchanged on all LDP rounds strictly between
      \(\tau_k\) and \(\tau_{k+1}\). Similarly,
      \[
          M_{T+1}
          =
          M_{\tau_{|\mathcal T^{\rm exp}|}}
          +
          \bm x_{\tau_{|\mathcal T^{\rm exp}|}}
          \bm x_{\tau_{|\mathcal T^{\rm exp}|}}^\top.
      \]
      
      Since \(M_{\tau_k}\succeq I_d\), it is positive definite. The matrix
      determinant lemma therefore gives
      \[
      \begin{aligned}
          \det\left(
              M_{\tau_k}
              +
              \bm x_{\tau_k}\bm x_{\tau_k}^\top
          \right)
          &=
          \det(M_{\tau_k})
          \left(
              1+
              \bm x_{\tau_k}^\top
              M_{\tau_k}^{-1}
              \bm x_{\tau_k}
          \right)                                               =
          \det(M_{\tau_k})
          \left(
              1+
              \|\bm x_{\tau_k}\|_{M_{\tau_k}^{-1}}^2
          \right).
      \end{aligned}
      \]
      Iterating this identity and using \(\det(M_{\tau_1})=\det(I_d)=1\)
      yields
      \begin{equation}
      \label{eq:pilot-determinant-recursion}
          \log\det(M_{T+1})
          =
          \sum_{k=1}^{|\mathcal T^{\rm exp}|}
          \log\left(
              1+
              \|\bm x_{\tau_k}\|_{M_{\tau_k}^{-1}}^2
          \right).
      \end{equation}
      
      \paragraph{Step 3: Lower bound on the determinant growth.}
      By \eqref{eq:pilot-exploration-leverage-lower}, every summand in
      \eqref{eq:pilot-determinant-recursion} satisfies
      \[
          \log\left(
              1+
              \|\bm x_{\tau_k}\|_{M_{\tau_k}^{-1}}^2
          \right)
          >
          \log\left(
              1+\frac{\eta^2}{\gamma_T^2}
          \right).
      \]
      Summing over \(k\) gives,
      \begin{equation}
      \label{eq:pilot-determinant-lower}
          \left|\mathcal T^{\rm exp}\right|
          \log\left(
              1+\frac{\eta^2}{\gamma_T^2}
          \right)
          <
          \log\det(M_{T+1}).
      \end{equation}

      \paragraph{Step 4: Upper bound on the determinant.}
      By the update rule,
      \[
          M_{T+1}
          =
          I_d
          +
          \sum_{t\in\mathcal T^{\rm exp}}
          \bm x_t\bm x_t^\top.
      \]
      Therefore, we have
      \[
      \begin{aligned}
          \operatorname{tr}(M_{T+1})
          =
          d+
          \sum_{t\in\mathcal T^{\rm exp}}
          \|\bm x_t\|_2^2  \le
          d+TC_x^2.
      \end{aligned}
      \]
      
      Since \(M_{T+1}\) is positive definite, by the arithmetic--geometric mean inequality applied to the eigenvalues of \(M_{T+1}\),
      \[
          \det(M_{T+1})^{1/d}
          \le
          \frac{\operatorname{tr}(M_{T+1})}{d}.
      \]

      Consequently,
      \begin{equation}
      \label{eq:pilot-determinant-upper}
      \begin{aligned}
          \log\det(M_{T+1})
          \le
          d\log\left(
              \frac{\operatorname{tr}(M_{T+1})}{d}
          \right)                                                       
          \le
          d\log\left(
              1+\frac{TC_x^2}{d}
          \right).
      \end{aligned}
      \end{equation}
      
      Combining
      \eqref{eq:pilot-determinant-lower} and
      \eqref{eq:pilot-determinant-upper} yields
      \[
          \left|\mathcal T^{\rm exp}\right|
          <
          \frac{
              d\log\left(
                  1+TC_x^2/d
              \right)
          }{
              \log\left(
                  1+\eta^2/\gamma_T^2
              \right)
          }.
      \]
      Together with the trivial bound \(\left|\mathcal T^{\rm exp}\right|\le T\), this proves the first
      inequality in \eqref{eq:pilot-exploration-count}.
      
      For every \(x\ge0\),
      $
          \log(1+x)\ge x/(1+x).
      $
      Applying this inequality with
      $
          x=\eta^2/\gamma_T^2
      $
      gives
      \[
          \frac{1}{
              \log\left(
                  1+\eta^2/\gamma_T^2
              \right)
          }
          \le
          1+\frac{\gamma_T^2}{\eta^2}.
      \]
      This proves the second inequality in
      \eqref{eq:pilot-exploration-count}.
      
      \paragraph{Step 5: Pilot-exploration regret.}
      Since \(0\le g(u)\le D\) and \(0\le p\le B\),
      $
          0
          \le
          \mathsf{Rev}(\bm x,p)
          =
          p\,g(p-\bm x^\top\bm\theta_\star)
          \le
          BD.
      $
      Therefore, for every \(t\in\mathcal T^{\rm exp}\),
      \[
      \begin{aligned}
          0
          \le
          \mathsf{Rev}(\bm x_t,p_t^\star)
          -
          \mathsf{Rev}(\bm x_t,p_t)                                    
          \le
          \mathsf{Rev}(\bm x_t,p_t^\star)
          \le
          BD.
      \end{aligned}
      \]
      Summing over \(t\in\mathcal T^{\rm exp}\) and applying
      \eqref{eq:pilot-exploration-count} proves
      \eqref{eq:pilot-exploration-regret}.
      \end{myproof}

\noindent
\textbf{Proof of \cref{thm:regret_upper_bound}.}
       
\begin{myproof}
  We work throughout on the uniform confidence event in
  Proposition~\ref{prop:ucb}, which occurs with probability at least
  \(1-\delta\).
  
  \paragraph{Step 1: Decomposition of the regret.}
  The pilot-exploration rounds and the LDP rounds form a disjoint
  partition of the horizon.
  Therefore,
  \begin{equation}
  \label{eq:regret-exploration-ldp-decomposition}
  \begin{aligned}
      \mathrm{Reg}(T)
      =
      \sum_{t\in\mathcal T^{\rm exp}}
      \left[
          \mathsf{Rev}(\bm x_t,p_t^\star)
          -
          \mathsf{Rev}(\bm x_t,p_t)
      \right]  +
      \sum_{t\in[T]\setminus\mathcal T^{\rm exp}}
      \left[
          \mathsf{Rev}(\bm x_t,p_t^\star)
          -
          \mathsf{Rev}(\bm x_t,p_t)
      \right].
  \end{aligned}
  \end{equation}
  
  \paragraph{Step 2: Finite-sample bound.}
  Lemma~\ref{lem:pilot_exploration_regret} gives the pathwise
  \begin{equation}
  \label{eq:pilot-regret-main-theorem}
  \begin{aligned}
      \sum_{t\in\mathcal T^{\rm exp}}
      \left[
          \mathsf{Rev}(\bm x_t,p_t^\star)
          -
          \mathsf{Rev}(\bm x_t,p_t)
      \right] \le
      BD
      \min\left\{
          T,\,
          \frac{
              d\log\left(
                  1+TC_x^2/d
              \right)
          }{
              \log\left(
                  1+\eta^2/\gamma_T^2
              \right)
          }
      \right\}.
  \end{aligned}
  \end{equation}
  Proposition~\ref{prop:ldp_step_regret_discrete_fixed_residual} and the
  choice 
  \(
      S=
      \max\left\{
          1,
          \left\lceil\log_2\sqrt T\right\rceil
      \right\}
  \)
  give
  \begin{equation}
  \label{eq:ldp-regret-main-theorem}
  \begin{aligned}
  &\sum_{t\in[T]\setminus\mathcal T^{\rm exp}}
  \left[
      \mathsf{Rev}(\bm x_t,p_t^\star)
      -
      \mathsf{Rev}(\bm x_t,p_t)
  \right]                                                        \\
  &\le
  64C_\omega B
  \bigl(D\sqrt{\iota_T}+\sqrt{\lambda}\bigr)
  \sqrt{NT\iota_T}                                         +
  16C_\omega BT
  (h^\beta+\eta^2)\sqrt{\iota_T}
  +
  \bigl(8BD+L_{\mathrm{Rev}}\bigr)\sqrt T.
  \end{aligned}
  \end{equation}
  Substituting \eqref{eq:pilot-regret-main-theorem} and
  \eqref{eq:ldp-regret-main-theorem} into
  \eqref{eq:regret-exploration-ldp-decomposition} proves
  \eqref{eq:regret_bound_discrete_fixed_pilot_residual}.
  
  \paragraph{Step 3: LDP regret under the prescribed tuning.}
  Let 
  \(N=
      \left\lceil
          T^{\frac{1}{2\beta+1}}
      \right\rceil\) and \(\eta^2=\min\{h^2,h^\beta\}\).
  Because \(\eta^2\le h^2\), we have
  \(
      \eta\le h.
  \)
  Moreover, since \(\eta^2\le h^\beta\), we have \(h^\beta+\eta^2
      \le
      2h^\beta\).
  Thus the prescribed tuning satisfies the conditions used in
  Proposition~\ref{prop:ucb} and
  Proposition~\ref{prop:ldp_step_regret_discrete_fixed_residual}.
  
  Since \(T\ge1\), we have
  \[
      T^{\frac{1}{2\beta+1}}
      \le
      N
      \le
      T^{\frac{1}{2\beta+1}}+1
      \le
      2T^{\frac{1}{2\beta+1}}.
  \]
  It follows that
  \begin{equation}
  \label{eq:tuned-sqrt-NT}
  \begin{aligned}
      \sqrt{NT}
      &\le
      \sqrt{2}\,
      T^{\frac12+\frac{1}{2(2\beta+1)}}                     =
      \sqrt{2}\,
      T^{\frac{\beta+1}{2\beta+1}}.
  \end{aligned}
  \end{equation}
  
  Since \(N\ge T^{1/(2\beta+1)}\) and \(h=2B/N\),
  \begin{equation}
  \label{eq:tuned-approximation-term}
  \begin{aligned}
      Th^\beta
      &=
      T\left(\frac{2B}{N}\right)^\beta                       \le
      (2B)^\beta
      T^{1-\frac{\beta}{2\beta+1}}                           =
      (2B)^\beta
      T^{\frac{\beta+1}{2\beta+1}}.
  \end{aligned}
  \end{equation}
  Furthermore, since
  \(
      (\beta+1)/(2\beta+1)>1/2,
  \)
  we have \(\sqrt T
      \le
      T^{\frac{\beta+1}{2\beta+1}}\).
  By the definition of \(\iota_T\),
  \(
      \iota_T
      \ge
      2\log 4
      >
      1.
  \)
  Therefore,
  \(
      \sqrt{\iota_T}\le\iota_T.
  \)
  In addition,
  \begin{align}
  &\bigl(D\sqrt{\iota_T}+\sqrt{\lambda}\bigr)
  \sqrt{NT\iota_T}                                             =
  \bigl(D\iota_T+\sqrt{\lambda\iota_T}\bigr)\sqrt{NT}         \le
  (D+1)(1+\sqrt{\lambda})\iota_T\sqrt{NT}.
  \label{eq:tuned-statistical-term}
  \end{align}
  Combining \eqref{eq:tuned-statistical-term} with
  \eqref{eq:tuned-sqrt-NT} gives
  \[
  \begin{aligned}
  &64C_\omega B
  \bigl(D\sqrt{\iota_T}+\sqrt{\lambda}\bigr)
  \sqrt{NT\iota_T}                                         \le
  64\sqrt{2}\,C_\omega B(D+1)
  (1+\sqrt{\lambda})\iota_T
  T^{\frac{\beta+1}{2\beta+1}}.
  \end{aligned}
  \]
  
  Next, by \(h^\beta+\eta^2\le 2h^\beta\),
  \eqref{eq:tuned-approximation-term}, and
  \(\sqrt{\iota_T}\le\iota_T\),
  \[
  \begin{aligned}
  &16C_\omega BT
  (h^\beta+\eta^2)\sqrt{\iota_T}                             \le
  32C_\omega BTh^\beta\iota_T                                \le
  32C_\omega B(2B)^\beta
  (1+\sqrt{\lambda})\iota_T
  T^{\frac{\beta+1}{2\beta+1}}.
  \end{aligned}
  \]
  Finally, by \(\sqrt T
      \le
      T^{\frac{\beta+1}{2\beta+1}}\) and
  \((1+\sqrt{\lambda})\iota_T\ge1\),
  \[
  \begin{aligned}
      \bigl(8BD+L_{\mathrm{Rev}}\bigr)\sqrt T
      \le
      \bigl(8BD+L_{\mathrm{Rev}}\bigr)
      (1+\sqrt{\lambda})\iota_T
      T^{\frac{\beta+1}{2\beta+1}}.
  \end{aligned}
  \]
  Consequently, we have
  \begin{equation}
  \label{eq:ldp-regret-prescribed-tuning}
  \begin{aligned}
  &\sum_{t\in[T]\setminus\mathcal T^{\rm exp}}
  \left[
      \mathsf{Rev}(\bm x_t,p_t^\star)
      -
      \mathsf{Rev}(\bm x_t,p_t)
  \right]                                                        \\
  &\quad\le
  \Bigl[
      64\sqrt{2}\,C_\omega B(D+1)
      +
      32C_\omega B(2B)^\beta
      +
      8BD
      +
      L_{\mathrm{Rev}}
  \Bigr]                                                   
  (1+\sqrt{\lambda})\iota_T
  T^{\frac{\beta+1}{2\beta+1}}.
  \end{aligned}
  \end{equation}
  
  \paragraph{Step 4: Pilot-exploration regret under the prescribed tuning.}
  For every \(x\ge0\),
  \(
      \log(1+x)
      \ge
      x/(1+x).
  \)
  Applying this inequality with
  \(x=\eta^2/\gamma_T^2\) gives
  \[
      \frac{1}{
          \log\left(
              1+\eta^2/\gamma_T^2
          \right)
      }
      \le
      1+\frac{\gamma_T^2}{\eta^2}.
  \]
  Thus, by \eqref{eq:pilot-regret-main-theorem},
  \begin{equation}
  \label{eq:pilot-regret-simplified-main-theorem}
  \begin{aligned}
  &\sum_{t\in\mathcal T^{\rm exp}}
  \left[
      \mathsf{Rev}(\bm x_t,p_t^\star)
      -
      \mathsf{Rev}(\bm x_t,p_t)
  \right]                                                  \le
  BDd
  \left(
      1+\frac{\gamma_T^2}{\eta^2}
  \right)
  \log\left(
      1+\frac{TC_x^2}{d}
  \right).
  \end{aligned}
  \end{equation}
  
  Since
  \(
      N\le2T^{\frac{1}{2\beta+1}},
  \)
  we have
  \(
      h=2B/N
      \ge
      B T^{-\frac{1}{2\beta+1}}.
  \)
  Because
  \(
      \eta^2=\min\{h^2,h^\beta\},
  \)
  it follows that
  \begin{align}
      \frac1{\eta^2}
      &=
      \max\{h^{-2},h^{-\beta}\}                             \le
      \max\left\{
          B^{-2}T^{\frac{2}{2\beta+1}},
          B^{-\beta}T^{\frac{\beta}{2\beta+1}}
      \right\}                                               \le
      \max\{B^{-2},B^{-\beta}\}
      T^{\frac{\beta+1}{2\beta+1}}.
  \label{eq:tuned-eta-inverse}
  \end{align}
  
  Since \(T^{(\beta+1)/(2\beta+1)}\ge1\),
  \begin{align*}
      1+\frac{\gamma_T^2}{\eta^2}
      &\le
      \left[
          1+
          \max\{B^{-2},B^{-\beta}\}\gamma_T^2
      \right]
      T^{\frac{\beta+1}{2\beta+1}}                          \le
      \max\{1,B^{-2},B^{-\beta}\}
      (1+\gamma_T^2)
      T^{\frac{\beta+1}{2\beta+1}}.
  \end{align*}
  Substituting this inequality into
  \eqref{eq:pilot-regret-simplified-main-theorem} gives
  \begin{equation}
  \label{eq:pilot-regret-prescribed-tuning}
  \begin{aligned}
  &\sum_{t\in\mathcal T^{\rm exp}}
  \left[
      \mathsf{Rev}(\bm x_t,p_t^\star)
      -
      \mathsf{Rev}(\bm x_t,p_t)
  \right]                                                  \le
  BD\max\{1,B^{-2},B^{-\beta}\}
  d(1+\gamma_T^2)
  \log\left(
      1+\frac{TC_x^2}{d}
  \right)
  T^{\frac{\beta+1}{2\beta+1}}.
  \end{aligned}
  \end{equation}
  
  \paragraph{Step 5: Completion of the tuned bound.}
  Define
  \[
  \begin{aligned}
      C
      :=
      \max\Bigl\{&
          BD\max\{1,B^{-2},B^{-\beta}\},64\sqrt{2}\,C_\omega B(D+1)
          +
          32C_\omega B(2B)^\beta
          +
          8BD
          +
          L_{\mathrm{Rev}}
      \Bigr\}.
  \end{aligned}
  \]
  By Lemma~\ref{lem:bound_r_layered} and
  Lemma~\ref{lem:ldp_revenue_gap_discrete_fixed_residual}, \(C\) is
  finite and depends only on the fixed problem primitives.
  
  Combining
  \eqref{eq:ldp-regret-prescribed-tuning} and
  \eqref{eq:pilot-regret-prescribed-tuning} yields
  \[
  \begin{aligned}
      \mathrm{Reg}(T)
      \le
      C
      \left[
          d(1+\gamma_T^2)
          \log\left(
              1+\frac{TC_x^2}{d}
          \right)
          +
          (1+\sqrt{\lambda})\iota_T
      \right]
      T^{\frac{\beta+1}{2\beta+1}},
  \end{aligned}
  \]
  which proves
  \eqref{eq:regret_bound_discrete_fixed_pilot_residual_rate}.
  
  Finally, 
  the definitions of \(\gamma_T\) and \(\iota_T\) imply
  $
      \gamma_T^2
      =
      \mathcal O(\log T),
      \iota_T
      =
      \mathcal O(\log T).
  $
  Consequently,
  \[
      d\bigl(1+\gamma_T^2\bigr)
      \log\left(
          1+\frac{TC_x^2}{d}
      \right)
      +
      (1+\sqrt{\lambda})\iota_T
      =
      \mathcal O(\log^2 T),
  \]
  and it follows that
  \[
      \mathrm{Reg}(T)
      =
      \widetilde{\mathcal O}\!\left(
          T^{\frac{\beta+1}{2\beta+1}}
      \right)
  \]
  with probability at least \(1-\delta\).
  This completes the proof. 
  \end{myproof}

  \noindent \textbf{Proof of Lemma~\ref{lem:lower_bound_flat_baseline}}.

  \begin{myproof}
  We construct \(g_0\) in four steps. We first build a nonincreasing
  envelope that generates a flat globally optimal revenue region and has
  more mass than required by the zero-mean condition. We then smooth its
  only kink without changing the flat region, truncate its right tail so
  that the zero-mean condition holds exactly, and finally verify the
  remaining properties and construct the compensation function \(\chi\).
  
  \paragraph{Step 1: Constructing a flat-revenue envelope.}
  We first construct a nonincreasing function whose associated revenue is
  constant at its global maximum over an interior price interval. The
  construction initially has more total mass than required by the
  zero-mean condition; this excess mass will be removed in Step~3.

  Define
  \[
      \bar g(u)
      :=
      \min\left\{
          D,\,
          \frac{
              D((\bm x^\circ)^\top\bm\theta^\circ-B_\epsilon/2)
          }{
              (\bm x^\circ)^\top\bm\theta^\circ+u
          }
      \right\},
      \qquad
      u\in[-B_\epsilon,B_\epsilon],
  \]
  where the ratio is interpreted as \(+\infty\) when \((\bm x^\circ)^\top\bm\theta^\circ+u=0\).
  Then
  \[
      \bar g(u)
      =
      D, \,\
      \forall
      u\le-\frac{B_\epsilon}{2} \quad \text{and} \quad \bar g(u)
      =
      \frac{
          D((\bm x^\circ)^\top\bm\theta^\circ-B_\epsilon/2)
      }{
          (\bm x^\circ)^\top\bm\theta^\circ+u
      }, \,\
      \forall
      u>-\frac{B_\epsilon}{2}.
  \]
  Consequently,
  \begin{align}
      \int_{-B_\epsilon}^{B_\epsilon}
      \bar g(u)\,\mathrm du
      &=
      \frac{DB_\epsilon}{2}
      +
      D\left(
          (\bm x^\circ)^\top\bm\theta^\circ-\frac{B_\epsilon}{2}
      \right)
      \log\left(
          1+
          \frac{
              3B_\epsilon
          }{
              2((\bm x^\circ)^\top\bm\theta^\circ-B_\epsilon/2)
          }
      \right).
  \label{eq:lower-bound-envelope-integral}
  \end{align}
  
  For every \(a>0\), the function \(x\log\left(1+a/x\right)\) is strictly increasing in \(x\) on \((0,\infty)\), because
  \[
      \frac{\mathrm d}{\mathrm dx}
      \left[
          x\log\left(1+\frac{a}{x}\right)
      \right]
      =
      \log\left(1+\frac{a}{x}\right)
      -
      \frac{a}{x+a}
      >
      0.
  \]
  Since \((\bm x^\circ)^\top\bm\theta^\circ-B_\epsilon/2
      \ge
      B_\epsilon/2\), take \(a=3B_\epsilon/2\) and
  equation~\eqref{eq:lower-bound-envelope-integral} implies
  \begin{equation}
  \label{eq:lower-bound-envelope-excess}
      \int_{-B_\epsilon}^{B_\epsilon}
      \bar g(u)\,\mathrm du
      \ge
      \frac{DB_\epsilon}{2}
      +
      \frac{DB_\epsilon}{2}\log4
      >
      DB_\epsilon.
  \end{equation}
  
  Set
  \[
      p_L
      :=
      (\bm x^\circ)^\top\bm\theta^\circ-\frac{3B_\epsilon}{8},
      \quad
      p_U
      :=
      (\bm x^\circ)^\top\bm\theta^\circ-\frac{B_\epsilon}{4}.
  \]
  Because
  \((\bm x^\circ)^\top\bm\theta^\circ\in[B_\epsilon,B-B_\epsilon]\), we have \( 0<p_L<p_U<B\) and
  \[
      [p_L-(\bm x^\circ)^\top\bm\theta^\circ,p_U-(\bm x^\circ)^\top\bm\theta^\circ]
      =
      \left[
          -\frac{3B_\epsilon}{8},\,
          -\frac{B_\epsilon}{4}
      \right]
      \subset
      (-B_\epsilon,B_\epsilon).
  \]

  For every \(p\in[p_L,p_U]\), we have 
  \(
      p-(\bm x^\circ)^\top\bm\theta^\circ
      \ge
      -3B_\epsilon/8
      >
      -B_\epsilon/2.
  \)
  Hence,
  \begin{equation}
  \label{eq:lower-bound-envelope-flat-revenue}
  \begin{aligned}
      p\,
      \bar g\!\left(
          p-(\bm x^\circ)^\top\bm\theta^\circ
      \right)
      &=
      p\,
      \frac{
          D\bigl(
              (\bm x^\circ)^\top\bm\theta^\circ-B_\epsilon/2
          \bigr)
      }{p} &=
      D\left(
          (\bm x^\circ)^\top\bm\theta^\circ
          -
          \frac{B_\epsilon}{2}
      \right).
  \end{aligned}
  \end{equation}
  Thus, the revenue induced by \(\bar g\) is constant on
  \([p_L,p_U]\).

  \paragraph{Step 2: Smoothing the envelope without changing the flat region.}
  The envelope \(\bar g\) is continuous but has a kink at
  \(-B_\epsilon/2\). We smooth this kink over a short interval while
  preserving monotonicity, remaining below \(\bar g\), and leaving the
  flat-revenue region unchanged.
  
  Fix a nondecreasing infinitely differentiable function
  \(\rho:\mathbb R\to[0,1]\) satisfying
  \[
      \rho(z)=0
      \quad\text{for }z\le0,
      \qquad
      \rho(z)=1
      \quad\text{for }z\ge1,
      \qquad
      \rho'(z)>0
      \quad\text{for }z\in(0,1).
  \]
  
  Choose \(\zeta>0\) sufficiently small that
  \begin{equation}
  \label{eq:lower-bound-zeta-choice}
      \zeta
      <
      B_\epsilon/64
      \quad \text{and} \quad 
      6D\zeta
      <
      \int_{-B_\epsilon}^{B_\epsilon}
      \bar g(u)\,\mathrm du
      -
      DB_\epsilon.
  \end{equation}
  There exists \(z_0
      \in\left(
          -B_\epsilon/2-2\zeta,\,
          -B_\epsilon/2
      \right)\)
  such that
  \begin{align}
  &\int_{-B_\epsilon/2-2\zeta}^{-B_\epsilon/2+2\zeta}
      \frac{
          D((\bm x^\circ)^\top\bm\theta^\circ-B_\epsilon/2)
      }{
          ((\bm x^\circ)^\top\bm\theta^\circ+v)^2
      }
      \rho\left(
          \frac{v-z_0}{\zeta}
      \right)
      \,\mathrm dv
  =
      D-
      \frac{
          D((\bm x^\circ)^\top\bm\theta^\circ-B_\epsilon/2)
      }{
          (\bm x^\circ)^\top\bm\theta^\circ-B_\epsilon/2+2\zeta
      }.
  \label{eq:lower-bound-smoothing-match}
  \end{align}
  Since \((\bm x^\circ)^\top\bm\theta^\circ\ge B_\epsilon\),
  we have \( (\bm x^\circ)^\top\bm\theta^\circ-B_\epsilon/2-2\zeta>
  0,\)
  and the denominator above is strictly positive. Indeed, the left-hand side is continuous in \(z_0\), whereas the
  right-hand side equals
  \[
      \int_{-B_\epsilon/2}^{-B_\epsilon/2+2\zeta}
      \frac{
          D((\bm x^\circ)^\top\bm\theta^\circ-B_\epsilon/2)
      }{
          ((\bm x^\circ)^\top\bm\theta^\circ+v)^2
      }
      \,\mathrm dv.
  \]

  When \(z_0=-B_\epsilon/2-2\zeta\),
  we have
  \[
      \rho\left(
          \frac{v-z_0}{\zeta}
      \right)=1
      \quad
      \text{for every }
      v\ge-\frac{B_\epsilon}{2}-\zeta.
  \]
  Hence the left-hand side of
  \eqref{eq:lower-bound-smoothing-match} is at least
  \[
      \int_{-B_\epsilon/2-\zeta}^{-B_\epsilon/2+2\zeta}
      \frac{
          D((\bm x^\circ)^\top\bm\theta^\circ-B_\epsilon/2)
      }{
          ((\bm x^\circ)^\top\bm\theta^\circ+v)^2
      }
      \,\mathrm dv,
  \]
  which is strictly larger than the right-hand side since the
  additional integral over
  \(
      \left[
          -B_\epsilon/2-\zeta,
          -B_\epsilon/2
      \right]
  \)
  is strictly positive.
  
  In contrast, when \( z_0=-B_\epsilon/2\), the multiplier is zero for \(v\le-B_\epsilon/2\), lies strictly between zero and one for
  \(
      v\in
      \left(
          -B_\epsilon/2,
          -B_\epsilon/2+\zeta
      \right),
  \)
  and equals one for
  \(v\ge-B_\epsilon/2+\zeta\). Therefore, the left-hand side is strictly
  smaller than
  \[
      \int_{-B_\epsilon/2}^{-B_\epsilon/2+2\zeta}
      \frac{
          D((\bm x^\circ)^\top\bm\theta^\circ-B_\epsilon/2)
      }{
          ((\bm x^\circ)^\top\bm\theta^\circ+v)^2
      }
      \,\mathrm dv,
  \]
  which is precisely the right-hand side. Applying the intermediate value theorem
  therefore yields a
  \(
      z_0
      \in
      \left(
          -B_\epsilon/2-2\zeta,
          -B_\epsilon/2
      \right)
  \)
  for which \eqref{eq:lower-bound-smoothing-match} holds.
  
  Define
  \[
      \widetilde g(u)
      :=
      \begin{cases}
          D,
          &
          u\le-B_\epsilon/2-2\zeta,
          \\[1mm]
          \displaystyle
          D-
          \int_{-B_\epsilon/2-2\zeta}^{u}
          \frac{
              D((\bm x^\circ)^\top\bm\theta^\circ-B_\epsilon/2)
          }{
              ((\bm x^\circ)^\top\bm\theta^\circ+v)^2
          }
          \rho\left(
              \frac{v-z_0}{\zeta}
          \right)
          \,\mathrm dv,
          &
          -B_\epsilon/2-2\zeta
          <
          u
          <
          -B_\epsilon/2+2\zeta,
          \\[4mm]
          \displaystyle
          \frac{
              D((\bm x^\circ)^\top\bm\theta^\circ-B_\epsilon/2)
          }{
              (\bm x^\circ)^\top\bm\theta^\circ+u
          },
          &
          u\ge-B_\epsilon/2+2\zeta.
      \end{cases}
  \]
  The choice of \(z_0\) in
  \eqref{eq:lower-bound-smoothing-match} ensures that the middle piece
  and the fractional piece of \(\widetilde g\) have the same value at
  \(u=-B_\epsilon/2+2\zeta\). 
  At the left junction point, the multiplier involving \(\rho\) equals
  zero on a neighborhood, so the middle piece coincides locally with the
  constant \(D\). At the right junction point, the multiplier equals one
  on a neighborhood, and \eqref{eq:lower-bound-smoothing-match} ensures
  that the middle piece coincides locally with the fractional piece.
  Therefore, all derivatives match at both junction points, and
  \(\widetilde g\) is infinitely differentiable on
  \([-B_\epsilon,B_\epsilon]\).

  We verify the monotonicity and range of \(\widetilde g\).
  On the middle interval, we have
  \[
      \widetilde g'(u)
      =
      -
      \frac{
          D((\bm x^\circ)^\top\bm\theta^\circ-B_\epsilon/2)
      }{
          ((\bm x^\circ)^\top\bm\theta^\circ+u)^2
      }
      \rho\left(
          \frac{u-z_0}{\zeta}
      \right)
      \le
      0.
  \]
  Hence the middle piece is nonincreasing. It starts from \(D\) at
  \(u=-B_\epsilon/2-2\zeta\) and, by
  \eqref{eq:lower-bound-smoothing-match}, ends at
  \begin{equation}
  \label{eqn:end-tilde-g}
    \frac{
          D((\bm x^\circ)^\top\bm\theta^\circ-B_\epsilon/2)
      }{
          (\bm x^\circ)^\top\bm\theta^\circ-B_\epsilon/2+2\zeta
      }
      >
      0.  
  \end{equation}
  
  Since the left
  piece is constant at \(D\) and the right piece is positive and
  decreasing, \(\widetilde g\) is nonincreasing on
  \([-B_\epsilon,B_\epsilon]\) and satisfies
  \(0\le \widetilde g(u)\le D\) for \(u\in[-B_\epsilon,B_\epsilon]\).

  We next show that \(\widetilde g\le\bar g\). If
  \(u\le-B_\epsilon/2\), then
  \(\bar g(u)=D\) and \(\widetilde g(u)\le D\),
  so the desired inequality holds. If
  \(u\ge-B_\epsilon/2+2\zeta\), we have
  \(\widetilde g(u)=\bar g(u)\) by definition. It only remains 
  to consider the case \( u
      \in
      \left[
          -B_\epsilon/2,
          -B_\epsilon/2+2\zeta
      \right]\).
  For every such \(u\), by \eqref{eqn:end-tilde-g}, we have
  \[
  \begin{aligned}
      \widetilde g(u)
      &=
      \frac{
          D((\bm x^\circ)^\top\bm\theta^\circ-B_\epsilon/2)
      }{
          (\bm x^\circ)^\top\bm\theta^\circ-B_\epsilon/2+2\zeta
      }
     +
      \int_u^{-B_\epsilon/2+2\zeta}
      \frac{
          D((\bm x^\circ)^\top\bm\theta^\circ-B_\epsilon/2)
      }{
          ((\bm x^\circ)^\top\bm\theta^\circ+v)^2
      }
      \rho\left(
          \frac{v-z_0}{\zeta}
      \right)
      \,\mathrm dv
      \\
      &\le
      \frac{
          D((\bm x^\circ)^\top\bm\theta^\circ-B_\epsilon/2)
      }{
          (\bm x^\circ)^\top\bm\theta^\circ-B_\epsilon/2+2\zeta
      }
      +
      \int_u^{-B_\epsilon/2+2\zeta}
      \frac{
          D((\bm x^\circ)^\top\bm\theta^\circ-B_\epsilon/2)
      }{
          ((\bm x^\circ)^\top\bm\theta^\circ+v)^2
      }
      \,\mathrm dv
      \\
      &=
      \frac{
          D((\bm x^\circ)^\top\bm\theta^\circ-B_\epsilon/2)
      }{
          (\bm x^\circ)^\top\bm\theta^\circ+u
      }
      =
      \bar g(u).
  \end{aligned}
  \]
  Consequently, we have
  \(0\le\widetilde g(u)\le\bar g(u)\) for \(u\in[-B_\epsilon,B_\epsilon]\).

  Finally, the smoothing modification does not affect the flat-revenue
  region. Indeed, by \eqref{eq:lower-bound-zeta-choice},
  \[
      -\frac{B_\epsilon}{2}+2\zeta
      <
      -\frac{3B_\epsilon}{8}
      =
      p_L-(\bm x^\circ)^\top\bm\theta^\circ.
  \]
  Thus the smoothing interval lies strictly to the left of
  \(
      \left[
          p_L-(\bm x^\circ)^\top\bm\theta^\circ,\,
          p_U-(\bm x^\circ)^\top\bm\theta^\circ
      \right].
  \)
  It follows from the definition of \(\widetilde g\) that
  \[
      \widetilde g(u)
      =
      \frac{
          D((\bm x^\circ)^\top\bm\theta^\circ-B_\epsilon/2)
      }{
          (\bm x^\circ)^\top\bm\theta^\circ+u
      }
  \]
  on a neighborhood of
  \(
      \left[
          p_L-(\bm x^\circ)^\top\bm\theta^\circ,\,
          p_U-(\bm x^\circ)^\top\bm\theta^\circ
      \right].
  \)
  Hence the smoothing operation leaves the flat-revenue region unchanged.

  \paragraph{Step 3: Enforcing the zero-mean condition.}
  Although \(\widetilde g\) is smooth and preserves the flat-revenue
  region, its integral remains larger than \(DB_\epsilon\). Indeed,
  \(\widetilde g\) differs from \(\bar g\) only on
  \(
      \left[
          -B_\epsilon/2-2\zeta,\,
          -B_\epsilon/2+2\zeta
      \right],
  \)
  whose length is \(4\zeta\). Since both functions take values in
  \([0,D]\), equations
  \eqref{eq:lower-bound-envelope-excess} and
  \eqref{eq:lower-bound-zeta-choice} give
  \[
  \begin{aligned}
      \int_{-B_\epsilon}^{B_\epsilon}
      \widetilde g(u)\,\mathrm du
      &\ge
      \int_{-B_\epsilon}^{B_\epsilon}
      \bar g(u)\,\mathrm du
      -
      4D\zeta
      >
      DB_\epsilon.
  \end{aligned}
  \]
  We therefore remove part of the right tail of \(\widetilde g\)
  smoothly, while leaving the flat-revenue region unchanged.
  
  For every
  \(
      \xi
      \in
      [p_U-(\bm x^\circ)^\top\bm\theta^\circ+\zeta,\,
       B_\epsilon-2\zeta],
  \)
  define
  \[
      g_\xi(u)
      :=
      \widetilde g(u)
      \left[
          1-
          \rho\left(
              \frac{u-\xi}{\zeta}
          \right)
      \right],
      \quad
      u\in[-B_\epsilon,B_\epsilon].
  \]
  For \(u\le\xi\), the multiplicative factor equals one, whereas for
  \(u\ge\xi+\zeta\), it equals zero. On the interval
  \((\xi,\xi+\zeta)\), it decreases smoothly from one to zero. Thus
  \(g_\xi\) agrees with \(\widetilde g\) to the left of \(\xi\) and
  vanishes to the right of \(\xi+\zeta\).
  
  Since \(\widetilde g\) is nonnegative and nonincreasing, and since
  \(\rho\) is nondecreasing,
  \[
  \begin{aligned}
      g_\xi'(u)
      ={}&
      \widetilde g'(u)
      \left[
          1-
          \rho\left(
              \frac{u-\xi}{\zeta}
          \right)
      \right]
      -
      \frac{\widetilde g(u)}{\zeta}
      \rho'\left(
          \frac{u-\xi}{\zeta}
      \right)
      \le
      0.
  \end{aligned}
  \]
  Hence \(g_\xi\) is nonincreasing. Moreover, we have 
  \(
      0
      \le
      g_\xi(u)
      \le
      \widetilde g(u)
      \le
      \bar g(u).
  \)
  Because
  \(
      \xi
      \ge
      p_U-(\bm x^\circ)^\top\bm\theta^\circ+\zeta,
  \)
  the multiplicative factor equals one throughout
  \(
      [p_L-(\bm x^\circ)^\top\bm\theta^\circ,\,
       p_U-(\bm x^\circ)^\top\bm\theta^\circ].
  \)
  Therefore, on this interval,
  \[
      g_\xi(u)
      =
      \widetilde g(u)
      =
      \frac{
          D((\bm x^\circ)^\top\bm\theta^\circ-B_\epsilon/2)
      }{
          (\bm x^\circ)^\top\bm\theta^\circ+u
      }.
  \]
  In addition, \(g_\xi=D\) on a neighborhood of \(-B_\epsilon\) and
  \(g_\xi=0\) on a neighborhood of \(B_\epsilon\).
  
  For every fixed \(u\), the function \(g_\xi(u)\) is continuous and
  nondecreasing in \(\xi\). Since \(0\le g_\xi(u)\le D\), the dominated
  convergence theorem implies that
  \[
      \xi
      \longmapsto
      \int_{-B_\epsilon}^{B_\epsilon}
      g_\xi(u)\,\mathrm du
  \]
  is continuous and nondecreasing.
  
  At
  \(
      \xi
      =
      p_U-(\bm x^\circ)^\top\bm\theta^\circ+\zeta,
  \)
  the function \(g_\xi\) vanishes for
  \(
      u
      \ge
      p_U-(\bm x^\circ)^\top\bm\theta^\circ+2\zeta.
  \)
  Consequently,
  \[
  \begin{aligned}
      \int_{-B_\epsilon}^{B_\epsilon}
      g_\xi(u)\,\mathrm du
      &\le
      D\left(
          B_\epsilon+p_U-(\bm x^\circ)^\top\bm\theta^\circ+2\zeta
      \right)
      =
      D\left(
          \frac{3B_\epsilon}{4}+2\zeta
      \right)
      <
      DB_\epsilon.
  \end{aligned}
  \]
  
  At \(\xi=B_\epsilon-2\zeta\), the functions \(g_\xi\) and \(\bar g\)
  may differ only on
  \[
      \left[
          -\frac{B_\epsilon}{2}-2\zeta,\,
          -\frac{B_\epsilon}{2}+2\zeta
      \right]
      \cup
      [B_\epsilon-2\zeta,B_\epsilon],
  \]
  whose total length is at most \(6\zeta\). Since both functions take
  values in \([0,D]\), equations
  \eqref{eq:lower-bound-envelope-excess} and
  \eqref{eq:lower-bound-zeta-choice} imply
  \[
  \begin{aligned}
      \int_{-B_\epsilon}^{B_\epsilon}
      g_{B_\epsilon-2\zeta}(u)\,\mathrm du
      &\ge
      \int_{-B_\epsilon}^{B_\epsilon}
      \bar g(u)\,\mathrm du
      -
      6D\zeta
      >
      DB_\epsilon.
  \end{aligned}
  \]
  Therefore, by continuity, there exists
  \(
      \xi_\star
      \in
      [p_U-(\bm x^\circ)^\top\bm\theta^\circ+\zeta,\,
       B_\epsilon-2\zeta]
  \)
  such that
  \[
      \int_{-B_\epsilon}^{B_\epsilon}
      g_{\xi_\star}(u)\,\mathrm du
      =
      DB_\epsilon.
  \]

  \paragraph{Step 4: Verifying the baseline properties and constructing
  the compensation function.}
  We now set \(g_0=g_{\xi_\star}\). We first verify that \(g_0\)
  corresponds to an admissible zero-mean noise distribution and generates
  the desired flat globally optimal revenue region. We then construct
  \(\chi\), which will be used to preserve the zero-mean condition in the
  subsequent perturbation construction.
  
  Set \(g_0=g_{\xi_\star}\) on
  \([-B_\epsilon,B_\epsilon]\), and extend it to \(\mathbb R\) by \( g_0(u)=D\) for \(u\le-B_\epsilon\), and \( g_0(u)=0\) for \(u\ge B_\epsilon\).
  Because \(g_{\xi_\star}\) is constant on neighborhoods of both
  endpoints, this extension is infinitely differentiable on
  \(\mathbb R\). In particular, \(g_0\in\mathcal H(\beta)\) on
  \([-B,B]\), with a finite H\"older constant.

  The function \(g_0/D\) is continuous, nonincreasing, equals one to the
  left of \(-B_\epsilon\), and equals zero to the right of
  \(B_\epsilon\). It is therefore the survival function of a continuous
  distribution supported on \([-B_\epsilon,B_\epsilon]\). Moreover, by the choice
  of \(\xi_\star\) in Step~3, 
  \[
  \begin{aligned}
      \mathbb E[\epsilon_t]
      &=
      \int_{-B_\epsilon}^{B_\epsilon}
      \frac{g_0(u)}{D}
      \,\mathrm du
      -
      B_\epsilon
      =
      0.
  \end{aligned}
  \]
  
  We next verify the revenue properties. If \(p-(\bm x^\circ)^\top\bm\theta^\circ\in[-B_\epsilon,B_\epsilon]\) and \(p>0\), then
  \(g_0\le\bar g\), and hence
  \[
  \begin{aligned}
      p\,g_0(p-(\bm x^\circ)^\top\bm\theta^\circ)
      &\le
      p\min\left\{
          D,\,
          \frac{
              D((\bm x^\circ)^\top\bm\theta^\circ-B_\epsilon/2)
          }{
              p
          }
      \right\}
      \le
      D\left(
          (\bm x^\circ)^\top\bm\theta^\circ-\frac{B_\epsilon}{2}
      \right).
  \end{aligned}
  \]
  If \(p=0\), the same inequality is immediate. If
  \(p-(\bm x^\circ)^\top\bm\theta^\circ\le-B_\epsilon\), then \(g_0(p-(\bm x^\circ)^\top\bm\theta^\circ)=D\) and
  \[
      p\,g_0(p-(\bm x^\circ)^\top\bm\theta^\circ)
      =
      Dp
      \le
      D((\bm x^\circ)^\top\bm\theta^\circ-B_\epsilon)
      <
      D\left(
          (\bm x^\circ)^\top\bm\theta^\circ-\frac{B_\epsilon}{2}
      \right).
  \]
  If \(p-(\bm x^\circ)^\top\bm\theta^\circ\ge B_\epsilon\), then \(g_0(p-(\bm x^\circ)^\top\bm\theta^\circ)=0\). This proves the inequality in
  \eqref{eq:lower-bound-flat-revenue}.
  
  For every \(p\in[p_L,p_U]\), Step~3 gives
  \[
      g_0(p-(\bm x^\circ)^\top\bm\theta^\circ)
      =
      \frac{
          D((\bm x^\circ)^\top\bm\theta^\circ-B_\epsilon/2)
      }{
          p
      }.
  \]
  Therefore,
  \[
      p\,g_0\left(
          p-(\bm x^\circ)^\top\bm\theta^\circ
      \right)
      =
      D\left(
          (\bm x^\circ)^\top\bm\theta^\circ
          -
          \frac{B_\epsilon}{2}
      \right),
      \quad
      p\in[p_L,p_U],
  \]
  which proves the equality in \eqref{eq:lower-bound-flat-revenue}.

  Furthermore, for every \(p\in[p_L,p_U]\),
  \[
      0
      <
      \frac{
          D((\bm x^\circ)^\top\bm\theta^\circ-B_\epsilon/2)
      }{
          p_U
      }
      \le
      g_0(p-(\bm x^\circ)^\top\bm\theta^\circ)
      \le
      \frac{
          D((\bm x^\circ)^\top\bm\theta^\circ-B_\epsilon/2)
      }{
          p_L
      }
      <
      D.
  \]
  Because the same representation holds on a neighborhood of
  \(
      [p_L-(\bm x^\circ)^\top\bm\theta^\circ,\,
       p_U-(\bm x^\circ)^\top\bm\theta^\circ],
  \)
  the function \(g_0\) is bounded away from both \(0\) and \(D\), and
  \[
      g_0'(u)
      =
      -\frac{
          D((\bm x^\circ)^\top\bm\theta^\circ-B_\epsilon/2)
      }{
          ((\bm x^\circ)^\top\bm\theta^\circ+u)^2
      }
      <
      0
  \]
  throughout that neighborhood.

  It remains to construct the compensation function. Define
  \[
      \chi(u)
      :=
      \frac{2}{\zeta}
      \rho'\left(
          \frac{2(u-\xi_\star)}{\zeta}
          -
          \frac12
      \right).
  \]
  Since \(\rho'\ge0\),
  we have \(
      \chi(u)\ge0.
  \)
  By the definition of \(\chi\),
  \[
  \begin{aligned}
      \int_{\mathbb R}\chi(u)\,\mathrm du
      =
      \int_{\mathbb R}
          \frac{2}{\zeta}
          \rho'\left(
              \frac{2(u-\xi_\star)}{\zeta}
              -
              \frac12
          \right)
      \,\mathrm du
      =
      \left[
          \rho\left(
              \frac{2(u-\xi_\star)}{\zeta}
              -
              \frac12
          \right)
      \right]_{u=-\infty}^{u=\infty}
      =
      1-0
      =
      1.
  \end{aligned}
  \]

  Moreover, since \(\rho'\) vanishes outside \((0,1)\), by the choice of \(\xi_\star\) in Step~3
  \[
      \operatorname{supp}(\chi)
      \subset
      \left[
          \xi_\star+\frac{\zeta}{4},\,
          \xi_\star+\frac{3\zeta}{4}
      \right]
      \subset
      (-B_\epsilon,B_\epsilon)
      \setminus
      [p_L-(\bm x^\circ)^\top\bm\theta^\circ,p_U-(\bm x^\circ)^\top\bm\theta^\circ].
  \]
  
  On a sufficiently small neighborhood of \(\operatorname{supp}(\chi)\),
  the factor
  \(
      1-
      \rho\left(
          \frac{u-\xi_\star}{\zeta}
      \right)
  \)
  is bounded away from both zero and one. Since \(\widetilde g\) is also
  bounded away from both zero and \(D\) there, the same holds for \(g_0\).
  
  Finally,
  \[
  \begin{aligned}
      g_0'(u)
      ={}&
      \widetilde g'(u)
      \left[
          1-
          \rho\left(
              \frac{u-\xi_\star}{\zeta}
          \right)
      \right]
      -
      \frac{\widetilde g(u)}{\zeta}
      \rho'\left(
          \frac{u-\xi_\star}{\zeta}
      \right)
      <
      0,
  \end{aligned}
  \]
  because \(\widetilde g'(u)<0\), \(\widetilde g(u)>0\), and
  \(\rho'\ge0\) there. Since \(g_0'\) is continuous, it is uniformly
  negative on a sufficiently small closed neighborhood of
  \(\operatorname{supp}(\chi)\).
  \end{myproof}

  \noindent\textbf{Proof of Theorem~\ref{thm:lower_bound_survival}.}
  
  \begin{myproof}
  \paragraph{Step 1: Constructing the perturbed instances.}
  Let \(g_0,p_L,p_U\), and \(\chi\) be as in
  Lemma~\ref{lem:lower_bound_flat_baseline}. Set
  \(
      K_T
      :=
      \left\lceil
          T^{1/(2\beta+1)}
      \right\rceil
  \)
  and, for every \(k\in[K_T]\), define
  \[
      \bar p_k
      :=
      p_L+
      \left(
          k-\frac12
      \right)
      \frac{p_U-p_L}{K_T}.
  \]
  Thus, \(\bar p_1,\ldots,\bar p_{K_T}\) are equally spaced in
  \([p_L,p_U]\).
  For brevity, write
  \(
      u_k:=\bar p_k-(\bm x^\circ)^\top\bm\theta^\circ.
  \)
  
  Choose a sufficiently small constant \(\kappa>0\), depending only on
  the fixed problem primitives. For every \(k\in[K_T]\),
  define
  \begin{equation}
  \label{eq:lower-bound-local-perturbation}
      \Delta_k(u)
      :=
      \begin{cases}
          \displaystyle
          \frac{
              \kappa K_T^{-\beta}
          }{
              \bigl((\bm x^\circ)^\top\bm\theta^\circ+u\bigr)
              \rho'(1/2)
          }
          \rho'\left(
              \frac{
                  4K_T
                  (u-u_k)
              }{
                  p_U-p_L
              }
              +
              \frac12
          \right),
          &
          \displaystyle
          |u-u_k|
          <
          \frac{p_U-p_L}{8K_T},
          \\[5mm]
          0,
          &
          \textnormal{otherwise},
      \end{cases}
  \end{equation}
  and
  \begin{equation}
  \label{eq:lower-bound-perturbed-link}
      g_k(u)
      :=
      g_0(u)
      +
      \Delta_k(u)
      -
      \chi(u)
      \int_{\mathbb R}\Delta_k(v)\,\mathrm dv.
  \end{equation}
  The term \(\Delta_k\) creates a local revenue increase near
  \(\bar p_k\), while the term involving \(\chi\) removes the same total
  mass away from this region.
  
  Because \(\rho'(1/2)>0\), \(\Delta_k\) is well defined and
  nonnegative. Moreover,
  \[
      \operatorname{supp}(\Delta_k)
      \subset
      \left[
          \bar p_k-(\bm x^\circ)^\top\bm\theta^\circ
          -\frac{p_U-p_L}{8K_T},\,
          \bar p_k-(\bm x^\circ)^\top\bm\theta^\circ
          +\frac{p_U-p_L}{8K_T}
      \right].
  \]
  These supports are pairwise disjoint and contained in
  \(
      [p_L-(\bm x^\circ)^\top\bm\theta^\circ,\,
       p_U-(\bm x^\circ)^\top\bm\theta^\circ].
  \)
  Since
  \(
      (\bm x^\circ)^\top\bm\theta^\circ+u
      \ge
      p_L
      >
      0
  \)
  on their union, the height of \(\Delta_k\) is of order
  \(K_T^{-\beta}\), while the length of its support is of order
  \(K_T^{-1}\). Hence
  \begin{equation}
  \label{eq:lower-bound-perturbation-mass}
      0
      \le
      \int_{\mathbb R}\Delta_k(u)\,\mathrm du
      \le
      C_{\mathrm{lb}}\kappa K_T^{-(\beta+1)}
  \end{equation}
  uniformly in \(k\) and \(T\), where \(C_{\mathrm{lb}}<\infty\) denotes
  a constant that may change from line to line.
  
  Because \(\rho\) is infinitely differentiable and constant outside
  \([0,1]\), all derivatives of \(\rho'\) vanish at zero and one.
  Therefore, the piecewise-defined function \(\Delta_k\) is infinitely
  differentiable on \(\mathbb R\).
  
  Since the supports of \(\Delta_k\) and \(\chi\) lie in
  \((-B_\epsilon,B_\epsilon)\), and
  \(\int_{\mathbb R}\chi(u)\,\mathrm du=1\),
  \[
  \begin{aligned}
      \int_{-B_\epsilon}^{B_\epsilon}
      g_k(u)\,\mathrm du
      =
      \int_{-B_\epsilon}^{B_\epsilon}
      g_0(u)\,\mathrm du
      +
      \int_{\mathbb R}\Delta_k(u)\,\mathrm du
      -
      \left(
          \int_{\mathbb R}\chi(u)\,\mathrm du
      \right)
      \left(
          \int_{\mathbb R}\Delta_k(u)\,\mathrm du
      \right)
      =
      DB_\epsilon.
  \end{aligned}
  \]
  Thus, \(g_k\) and \(g_0\) have the same integral on
  \([-B_\epsilon,B_\epsilon]\).

  We first verify that the constructed functions satisfy the required smoothness condition.
  By enlarging \(C_{\mathrm{lb}}\), if necessary, repeated application of the product and chain
  rules gives
  \begin{equation}
  \label{eq:lower-bound-perturbation-derivatives}
      \sup_{u\in\mathbb R}
      \left|
          \Delta_k^{(r)}(u)
      \right|
      \le
      C_{\mathrm{lb}}\kappa K_T^{r-\beta},
      \quad
      r=0,1,\ldots,\varpi(\beta)+1.
  \end{equation}
  We now verify the H\"older condition. Fix \(u,u'\in\mathbb R\). If
  \(
      |u-u'|
      \le
      K_T^{-1},
  \)
  the mean-value theorem gives
  \[
  \begin{aligned}
  \left|
      \Delta_k^{(\varpi(\beta))}(u)
      -
      \Delta_k^{(\varpi(\beta))}(u')
  \right|
  &\le
  C_{\mathrm{lb}}\kappa
  K_T^{\varpi(\beta)+1-\beta}
  |u-u'|
  \\
  &=
  C_{\mathrm{lb}}\kappa
  |u-u'|^{\beta-\varpi(\beta)}
  \left(
      K_T|u-u'|
  \right)^{\varpi(\beta)+1-\beta}
  \\
  &\le
  C_{\mathrm{lb}}\kappa
  |u-u'|^{\beta-\varpi(\beta)}.
  \end{aligned}
  \]
  
  If
  \(
      |u-u'|
      >
      K_T^{-1},
  \)
  then
  \[
  \begin{aligned}
  \left|
      \Delta_k^{(\varpi(\beta))}(u)
      -
      \Delta_k^{(\varpi(\beta))}(u')
  \right|
  &\le
  2
  \sup_{v\in\mathbb R}
  \left|
      \Delta_k^{(\varpi(\beta))}(v)
  \right|
  \le
  C_{\mathrm{lb}}\kappa
  K_T^{\varpi(\beta)-\beta}
  \le
  C_{\mathrm{lb}}\kappa
  |u-u'|^{\beta-\varpi(\beta)}.
  \end{aligned}
  \]
  Therefore, we have
  \begin{equation}
  \label{eq:lower-bound-perturbation-holder}
  \left|
      \Delta_k^{(\varpi(\beta))}(u)
      -
      \Delta_k^{(\varpi(\beta))}(u')
  \right|
  \le
  C_{\mathrm{lb}}\kappa
  |u-u'|^{\beta-\varpi(\beta)}.
  \end{equation}
  
  By Lemma~\ref{lem:lower_bound_flat_baseline}, \(g_0\) satisfies the
  required H\"older condition. Moreover, \(\chi\) is fixed and infinitely
  differentiable, while
  \eqref{eq:lower-bound-perturbation-mass} uniformly bounds the
  coefficient multiplying \(\chi\). Combining these facts with
  \eqref{eq:lower-bound-perturbation-holder}, there exists a finite
  constant \(\bar L_g\), independent of \(k\) and \(T\), such that
  \(
      g_k
      \in
      \mathcal H(
          \beta,
          \bar L_g;
          [-B,B]
      )
  \)
  for every \(k\in[K_T]\).
  
  We next verify the range and monotonicity requirements. The supports of
  \(\Delta_k\) and \(\chi\) are disjoint. By
  Lemma~\ref{lem:lower_bound_flat_baseline}, \(g_0\) is bounded away from
  both \(0\) and \(D\), and \(g_0'\) is uniformly negative, on
  neighborhoods of these supports.
  
  By enlarging \(C_{\mathrm{lb}}\), if necessary,
  \eqref{eq:lower-bound-perturbation-derivatives} and
  \eqref{eq:lower-bound-perturbation-mass} imply
  \[
  \begin{aligned}
      \|\Delta_k\|_\infty
      +
      \|\Delta_k'\|_\infty
      +
      \left\|
          \chi(\cdot)
          \int_{\mathbb R}\Delta_k(v)\,\mathrm dv
      \right\|_\infty
      +
      \left\|
          \chi'(\cdot)
          \int_{\mathbb R}\Delta_k(v)\,\mathrm dv
      \right\|_\infty
      \le
      C_{\mathrm{lb}}\kappa.
  \end{aligned}
  \]
  On \(\operatorname{supp}(\Delta_k)\),
  \(
      g_k=g_0+\Delta_k,
  \)
  whereas on \(\operatorname{supp}(\chi)\),
  \[
      g_k
      =
      g_0-
      \chi(\cdot)
      \int_{\mathbb R}\Delta_k(v)\,\mathrm dv.
  \]
  Outside these two supports, \(g_k=g_0\). Combining these observations
  with the preceding bounds, for all sufficiently small \(\kappa>0\),
  uniformly in \(k\) and \(T\),
   \( 0
   \le g_k(u) \le D\) and \(g_k'(u) \le 0\) for \(u\in\mathbb R\).

  Since the supports of \(\Delta_k\) and \(\chi\) lie strictly inside
  \((-B_\epsilon,B_\epsilon)\), \( g_k(u)=D\) for \(u\le-B_\epsilon\) and \(g_k(u)=0\) for \(u\ge B_\epsilon\).
  Therefore, \(g_k/D\) is the survival function of a distribution
  supported on \([-B_\epsilon,B_\epsilon]\). Moreover, the preceding
  integral identity gives
  \[
  \begin{aligned}
      \mathbb E[\epsilon_t]
      &=
      \int_{-B_\epsilon}^{B_\epsilon}
      \frac{g_k(u)}{D}\,\mathrm du
      -
      B_\epsilon
      =
      0.
  \end{aligned}
  \]

Under instance \(k\), the shocks are i.i.d.\ with the constructed
distribution and are independent of the policy randomization. Moreover,
because \(y_t=D\mathbf 1\{v_t\ge p_t\}\),
the demand is measurable with respect to \(\sigma(p_t,v_t)\). Therefore,
\[
\begin{aligned}
    \mathbb E\!\left[
        y_t
        \,\middle|\,
        \mathcal F_{t-1}
        \vee
        \sigma(\bm x_t,p_t,v_t)
    \right]
    =
    D\mathbf 1\{v_t\ge p_t\} =
    q(v_t-p_t).
\end{aligned}
\]
Hence, Assumption~\ref{ass:quantity-response} holds.
Furthermore, by \eqref{eq:lower-bound-demand-subclass}, for every
  \(u\in\mathcal I_g\),
  \[
      \mathbb E[q(\epsilon_t-u)]
      =
      D\mathbb P(\epsilon_t\ge u)
      =
      g_k(u).
  \]
  Thus, each constructed instance satisfies the original model
  assumptions provided that \(L_g\ge\bar L_g\).

  \paragraph{Step 2: Establishing the revenue gap and statistical closeness.}
  Under instance \(k\), the perturbation raises the revenue near
  \(\bar p_k\), while prices outside its support receive no such
  increase.
  
  Since the support of \(\chi\) is disjoint from
  \(
      [p_L-(\bm x^\circ)^\top\bm\theta^\circ,\,
       p_U-(\bm x^\circ)^\top\bm\theta^\circ],
  \)
  the definition of \(\Delta_k\) gives
  \[
  \begin{aligned}
      \bar p_k
      g_k\left(
          \bar p_k-(\bm x^\circ)^\top\bm\theta^\circ
      \right)
      =
      D\left(
          (\bm x^\circ)^\top\bm\theta^\circ
          -
          \frac{B_\epsilon}{2}
      \right)
      +
      \kappa K_T^{-\beta}.
  \end{aligned}
  \]
  
  If
  \(
      p-(\bm x^\circ)^\top\bm\theta^\circ
      \notin
      \operatorname{supp}(\Delta_k),
  \)
  then \(\Delta_k=0\). Since the term involving \(\chi\) is nonpositive, 
  \[ p\, g_k\left( p-(\bm x^\circ)^\top\bm\theta^\circ \right) \le p\, g_0\left( p-(\bm x^\circ)^\top\bm\theta^\circ \right) \le D\left( (\bm x^\circ)^\top\bm\theta^\circ - \frac{B_\epsilon}{2} \right). 
  \] 
  Consequently, 
  \begin{equation} 
  \label{eq:lower-bound-revenue-gap} 
  \begin{aligned} 
  \max_{p'\in[0,B]} p'\, g_k\left( p'-(\bm x^\circ)^\top\bm\theta^\circ \right) - p\, g_k\left( p-(\bm x^\circ)^\top\bm\theta^\circ \right) 
  \ge \kappa K_T^{-\beta}, \quad p-(\bm x^\circ)^\top\bm\theta^\circ \notin \operatorname{supp}(\Delta_k). 
  \end{aligned} 
  \end{equation}
  
  Fix an arbitrary nonanticipating policy
  \(\pi\). Let \(\mathbb P_k\) denote the law of the complete observed
  history, including posted prices and demands, under instance \(k\), and
  let \(\mathbb P_0\) denote the corresponding law under the baseline
  instance \(g_0\). Conditional on the past and \(p_t\), the normalized observation
  \(y_t/D\) is Bernoulli with success probability \(g_k\left(
              p_t-(\bm x^\circ)^\top\bm\theta^\circ
          \right)/D\). Since the policy uses the same conditional decision rule under every
  instance, the chain rule for relative entropy gives
  \begin{align}
  &D_{\mathrm{KL}}\left(
      \mathbb P_0
      \,\middle\|\,
      \mathbb P_k
  \right)
  =
  \mathbb E_{\mathbb P_0}
  \left[
      \sum_{t=1}^T
      D_{\mathrm{KL}}\left(
          \operatorname{Bern}\left(
              \frac{
                  g_0\left(
                      p_t-(\bm x^\circ)^\top\bm\theta^\circ
                  \right)
              }{
                  D
              }
          \right)
          \,\middle\|\,
          \operatorname{Bern}\left(
              \frac{
                  g_k\left(
                      p_t-(\bm x^\circ)^\top\bm\theta^\circ
                  \right)
              }{
                  D
              }
          \right)
      \right)
  \right].
  \label{eq:lower-bound-kl-chain-rule}
  \end{align}
  
  By the properties established in Step~1, the relevant Bernoulli
  probabilities on the supports of \(\Delta_k\) and \(\chi\) lie in a
  fixed subinterval of \((0,1)\). By enlarging
  \(C_{\mathrm{lb}}\), if necessary,
  \[
      D_{\mathrm{KL}}\left(
          \operatorname{Bern}(a)
          \,\middle\|\,
          \operatorname{Bern}(b)
      \right)
      \le
      C_{\mathrm{lb}}(a-b)^2
  \]
  for all probabilities appearing above. Moreover,
  \[
  \begin{aligned}
      \left|
          g_k\left(
              p-(\bm x^\circ)^\top\bm\theta^\circ
          \right)
          -
          g_0\left(
              p-(\bm x^\circ)^\top\bm\theta^\circ
          \right)
      \right|
      &\le
      C_{\mathrm{lb}}\kappa K_T^{-\beta}
      \mathbf 1\left\{
          p-(\bm x^\circ)^\top\bm\theta^\circ
          \in
          \operatorname{supp}(\Delta_k)
      \right\}
      \\
      &\qquad+
      C_{\mathrm{lb}}\kappa K_T^{-(\beta+1)}
      \mathbf 1\left\{
          p-(\bm x^\circ)^\top\bm\theta^\circ
          \in
          \operatorname{supp}(\chi)
      \right\}.
  \end{aligned}
  \]
  Using \eqref{eq:lower-bound-perturbation-derivatives} with \(r=0\)
  and \eqref{eq:lower-bound-perturbation-mass}, and enlarging
  \(C_{\mathrm{lb}}\) if necessary, we obtain
  \begin{align}
  &D_{\mathrm{KL}}\left(
      \mathbb P_0
      \,\middle\|\,
      \mathbb P_k
  \right)
  \le
  C_{\mathrm{lb}}\kappa^2K_T^{-2\beta}
  \mathbb E_0
  \left[
      \sum_{t=1}^T
      \mathbf 1\left\{
          p_t-(\bm x^\circ)^\top\bm\theta^\circ
          \in
          \operatorname{supp}(\Delta_k)
      \right\}
  \right]
  +
  C_{\mathrm{lb}}\kappa^2K_T^{-2\beta-2}T.
  \label{eq:lower-bound-kl-visit-count}
  \end{align}
  
  Because the supports of
  \(\Delta_1,\ldots,\Delta_{K_T}\) are pairwise disjoint,
  \[
      \sum_{k=1}^{K_T}
      \sum_{t=1}^T
      \mathbf 1\left\{
          p_t-(\bm x^\circ)^\top\bm\theta^\circ
          \in
          \operatorname{supp}(\Delta_k)
      \right\}
      \le
      T
  \]
  pathwise. Hence there exists \(k^\star\in[K_T]\) such that
  \[
      \mathbb E_0
      \left[
          \sum_{t=1}^T
          \mathbf 1\left\{
              p_t-(\bm x^\circ)^\top\bm\theta^\circ
              \in
              \operatorname{supp}(\Delta_{k^\star})
          \right\}
      \right]
      \le
      \frac{T}{K_T}.
  \]
  For this \(k^\star\), we have
  \[
  \begin{aligned}
      D_{\mathrm{KL}}\left(
          \mathbb P_0
          \,\middle\|\,
          \mathbb P_{k^\star}
      \right)
      &\le
      C_{\mathrm{lb}}\kappa^2
      T K_T^{-(2\beta+1)}
      +
      C_{\mathrm{lb}}\kappa^2
      T K_T^{-(2\beta+2)}.
  \end{aligned}
  \]
  Since \(K_T\ge1\), by enlarging \(C_{\mathrm{lb}}\), if necessary, we have
  \begin{equation}
  \label{eq:lower-bound-kl-final}
      D_{\mathrm{KL}}\left(
          \mathbb P_0
          \,\middle\|\,
          \mathbb P_{k^\star}
      \right)
      \le
      C_{\mathrm{lb}}\kappa^2
      T K_T^{-(2\beta+1)}.
  \end{equation}
  
  \paragraph{Step 3: Deriving the regret lower bound.}
  The KL bound in Step~2 shows that instance \(k^\star\) is difficult to
  distinguish from the baseline. We now use this fact to show that the
  policy misses the corresponding perturbation region sufficiently
  often. Under instance \(k^\star\), \eqref{eq:lower-bound-revenue-gap} gives
  \begin{equation}
  \label{eq:lower-bound-regret-count}
      \mathrm{Reg}(T)
      \ge
      \kappa K_T^{-\beta}
      \sum_{t=1}^T
      \mathbf 1\left\{
          p_t-(\bm x^\circ)^\top\bm\theta^\circ
          \notin
          \operatorname{supp}(\Delta_{k^\star})
      \right\}.
  \end{equation}
  
  For every \(t\), apply the Bretagnolle--Huber inequality to the
  event
  \(
      \left\{
          p_t-(\bm x^\circ)^\top\bm\theta^\circ
          \in
          \operatorname{supp}(\Delta_{k^\star})
      \right\}.
  \)
  Summing over \(t\) gives
  \begin{align}
  &\mathbb E_0
  \left[
      \sum_{t=1}^T
      \mathbf 1\left\{
          p_t-(\bm x^\circ)^\top\bm\theta^\circ
          \in
          \operatorname{supp}(\Delta_{k^\star})
      \right\}
  \right]+
  \mathbb E_{k^\star}
  \left[
      \sum_{t=1}^T
      \mathbf 1\left\{
          p_t-(\bm x^\circ)^\top\bm\theta^\circ
          \notin
          \operatorname{supp}(\Delta_{k^\star})
      \right\}
  \right]
  \notag\\
  &\ge
  \frac{T}{2}
  \exp\left\{
      -
      D_{\mathrm{KL}}\left(
          \mathbb P_0
          \,\middle\|\,
          \mathbb P_{k^\star}
      \right)
  \right\}.
  \label{eq:lower-bound-bh-sum}
  \end{align}
  
  By the definition of \(K_T\), we have
  \(
      T K_T^{-(2\beta+1)}
      \le
      1.
  \)
  By further decreasing \(\kappa\), if necessary, assume that
  \(
      C_{\mathrm{lb}}\kappa^2
      \le
      \log2.
  \)
  This choice preserves all the range, monotonicity, and smoothness
  properties established above. Equation~\eqref{eq:lower-bound-kl-final}
  then implies
  \(
      D_{\mathrm{KL}}\left(
          \mathbb P_0
          \,\middle\|\,
          \mathbb P_{k^\star}
      \right)
      \le
      \log2.
  \)
  Increasing \(T_0\), if necessary, also guarantees \(K_T\ge8\).
  Consequently,
  \[
      \mathbb E_0
      \left[
          \sum_{t=1}^T
          \mathbf 1\left\{
              p_t-(\bm x^\circ)^\top\bm\theta^\circ
              \in
              \operatorname{supp}(\Delta_{k^\star})
          \right\}
      \right]
      \le
      \frac{T}{K_T}
      \le
      \frac{T}{8}.
  \]
  Combining this bound with \eqref{eq:lower-bound-bh-sum} gives
  \[
  \begin{aligned}
      &\mathbb E_{k^\star}
      \left[
          \sum_{t=1}^T
          \mathbf 1\left\{
              p_t-(\bm x^\circ)^\top\bm\theta^\circ
              \notin
              \operatorname{supp}(\Delta_{k^\star})
          \right\}
      \right]
     \ge
      \frac{T}{4}
      -
      \frac{T}{8}
      =
      \frac{T}{8}.
  \end{aligned}
  \]
  Taking \(\mathbb E_{k^\star}\) on both sides of
  \eqref{eq:lower-bound-regret-count} therefore yields
  \[
      \mathbb E_{k^\star}[\mathrm{Reg}(T)]
      \ge
      \frac{\kappa}{8}
      T K_T^{-\beta}.
  \]
  
  Finally, since
  \(
      K_T
      \le
      2T^{1/(2\beta+1)},
  \)
  we obtain
  \[
      \mathbb E_{k^\star}[\mathrm{Reg}(T)]
      \ge
      \frac{\kappa}{2^{\beta+3}}
      T^{\frac{\beta+1}{2\beta+1}}.
  \]
  The policy \(\pi\) was arbitrary. Taking the infimum
  over policies and the supremum over admissible instances completes the
  proof.
  \end{myproof}

\end{document}